\definecolor{mygray}{gray}{0.5}
\definecolor{cblue}{RGB}{8, 85, 153}
\definecolor{darkblue}{RGB}{31, 64, 96}
\newcommand{\cblue}[1]{{\textcolor{cblue}{#1}}}
\definecolor{cgreen}{RGB}{8, 153, 83}
\definecolor{green}{RGB}{8, 200, 50}
\definecolor{cmaroon}{RGB}{128, 0, 0}
\renewcommand\algorithmicdo{}
\Crefname{section}{Sec}{Secs.}
\Crefname{figure}{Fig}{Figs.}
\Crefname{theorem}{Theorem}{Theorems.}
\newcommand{\var}[1]{\textcolor{darkblue}{\textit{#1}}}
\newcommand{\varm}[1]{\textcolor{cmaroon}{\textit{#1}}}
\newcommand{\method}{FIGS}
\newcommand{\methods}{FIGS }
\newcommand{\methodabbrv}{G-FIGS}
\newcommand{\bh}{\mathbf{h}}
\newcommand{\bx}{\mathbf{x}}
\newcommand{\by}{\mathbf{y}}
\newcommand{\br}{\mathbf{r}}
\newcommand{\bbeta}{\boldsymbol{\beta}}
\newcommand{\beps}{\boldsymbol{\epsilon}}
\newcommand{\beeta}{\boldsymbol{\eta}}
\newcommand{\btheta}{\boldsymbol{\theta}}
\newcommand{\bH}{\mathbf{H}}
\newcommand{\bW}{\mathbf{W}}
\newcommand{\bX}{\mathbf{X}}
\newcommand{\bPsi}{\boldsymbol{\Psi}}
\newcommand{\bSigma}{\boldsymbol{\Sigma}}
\newcommand{\R}{\mathbb{R}}
\renewcommand{\P}{\mathbb{P}}
\newcommand{\E}{\mathbb{E}}
\newcommand{\indicator}{\mathbf{1}}
\newcommand{\Var}{\textnormal{Var}}
\newcommand{\diag}{\textnormal{diag}}
\newcommand{\trace}{\textnormal{Tr}}
\newcommand{\grad}{\nabla}
\DeclarePairedDelimiter{\braces}{\lbrace}{\rbrace}
\DeclarePairedDelimiter{\paren}{(}{)}
\DeclarePairedDelimiter{\norm}{\|}{\|}
\DeclarePairedDelimiter{\inprod}{\langle}{\rangle}
\newcommand{\data}[1][n]{\mathcal{D}_{#1}}
\newcommand{\cell}{\mathcal{C}}
\newcommand{\tree}{\mathcal{T}}
\newcommand{\node}{\mathfrak{t}}
\newcommand{\uniform}{\mu}
\newcommand{\indep}{\perp \!\!\! \perp}
\newtheorem{theorem}{Theorem}[]
\newtheorem{lemma}[theorem]{Lemma}
\newtheorem{corollary}[theorem]{Corollary}
\newtheorem{remark}[theorem]{Remark}
\title{Fast Interpretable Greedy-Tree Sums}
\author[a,1]{Yan Shuo Tan}
\author[b,c,1]{Chandan Singh} 
\author[b,1]{Keyan Nasseri}
\author[d,1]{Abhineet Agarwal}
\author[e]{James Duncan}
\author[d]{Omer Ronen}
\author[f]{Matthew Epland}
\author[g,h]{Aaron Kornblith}
\author[b,c,d,2]{Bin Yu}
\affil[a]{Department of Statistics and Data Science, National University of Singapore}
\affil[b]{Department of Electrical Engineering and Computer Sciences, UC Berkeley}
\affil[c]{Microsoft Research}
\affil[d]{Department of Statistics, UC Berkeley}
\affil[e]{Graduate Group in Biostatistics, UC Berkeley}
\affil[f]{Overjet}
\affil[g]{Department of Emergency Medicine, UC San Francisco}
\affil[h]{Department of Pediatrics, UC San Francisco}
\begin{abstract}

Modern machine learning has achieved impressive prediction performance, but often sacrifices interpretability, a critical consideration in high-stakes domains such as medicine. In such settings, practitioners often use highly interpretable decision tree models, but these suffer from inductive bias against additive structure. To overcome this bias, we propose Fast Interpretable Greedy-Tree Sums (FIGS), which generalizes the CART algorithm to simultaneously grow a flexible number of trees in summation. By combining logical rules with addition, FIGS is able to adapt to additive structure while remaining highly interpretable. Extensive experiments on real-world datasets show that FIGS achieves state-of-the-art prediction performance. To demonstrate the usefulness of FIGS in high-stakes domains, we adapt FIGS to learn clinical decision instruments (CDIs), which are tools for guiding clinical decision-making. Specifically, we introduce a variant of FIGS known as G-FIGS that accounts for the heterogeneity in medical data. G-FIGS derives CDIs that reflect domain knowledge and enjoy improved specificity (by up to 20\% over CART) without sacrificing sensitivity or interpretability. To provide further insight into FIGS, we prove that FIGS learns components of additive models, a property we refer to as disentanglement. Further, we show (under oracle conditions) that unconstrained tree-sum models leverage disentanglement to generalize more efficiently than single decision tree models when fitted to additive regression functions. Finally, to avoid  overfitting with an unconstrained number of splits, we develop Bagging-FIGS, an ensemble version of FIGS that borrows the variance reduction techniques of random forests. Bagging-FIGS enjoys competitive performance with random forests and XGBoost on real-world datasets.
    
\end{abstract}
\begin{document}

\maketitle
\ifthenelse{\boolean{shortarticle}}{\ifthenelse{\boolean{singlecolumn}}{\abscontentformatted}{\abscontent}}{}

\section{Introduction}
\label{sec:intro}

Modern machine learning methods such as random forests~\cite{breiman2001random}, gradient boosting~\cite{friedman2001greedy,chen2016xgboost}, and deep learning~\cite{lecun2015deep} display impressive predictive performance, but are complex and opaque, leading many to call them ``black-box'' models.
Model interpretability is critical in many applications~\cite{rudin2019stop,murdoch2019definitions}, particularly in high-stakes settings such as clinical decision instrument (CDI) modeling.
Interpretability allows models to be audited for general validation, errors, or biases, and therefore also more amenable to improvement by domain experts.
Interpretability also facilitates counterfactual reasoning, which is the foundation of scientific insight, and it instills trust/distrust in a model when warranted.
As an added benefit, interpretable models tend to be faster and more computationally efficient than black-box models.\footnote{\methods is integrated into the imodels package \href{https://github.com/csinva/imodels}{\faGithub\,github.com/csinva/imodels}~\cite{singh2021imodels} with an sklearn-compatible API. 
Experiments for reproducing the results here can be found at \href{https://github.com/Yu-Group/imodels-experiments}{\faGithub\,github.com/Yu-Group/imodels-experiments}.}

Decision trees are a prime example of interpretable models ~\cite{breiman1984classification,friedman2001greedy,quinlan2014c4,rudin2021interpretable,singh2021imodels}.
They can be easily visualized, memorized, and emulated by hand, even by non-experts, and thus fit naturally into high-stakes use-cases, such as decision-making in medicine (e.g., the emergency department\footnote{For example, in the popular tool \href{https://www.mdcalc.com/}{mdcalc}, over 90\% of available CDIs take the form of a decision tree.}), law, and public policy.
While decision trees have the potential to adapt to complex data, they are often outperformed by black-box models in terms of prediction performance.
However, there is evidence that this performance gap is not intrinsic to interpretable models, e.g., see examples in~\cite{rudin2021interpretable,ha2021adaptive,mignan2019one,singh2021imodels}.
In this paper, we identify an inductive bias of decision trees that causes its prediction performance to suffer in some instances, and design a new tree-based algorithm that overcomes this bias while preserving interpretability.

Our starting point is the observation that \emph{decision trees can be statistically inefficient at fitting regression functions with additive components}~\cite{tan2021cautionary}.
To illustrate this, consider the following toy example: $y = \mathbbm{1}_{X_1> 0} + \mathbbm{1}_{X_2 > 0} \cdot \mathbbm{1}_{X_3 > 0}$.\footnote{This toy model is an instance of a Local Spiky and Sparse (LSS) model~\cite{behr2021provable}, which is grounded in real biological mechanisms whereby an outcome is driven by interactions of inputs (e.g. bio-molecules) which display thresholding behavior.}
The two components of this function can be individually implemented by trees with 1 split and 2 splits respectively.
However, fitting their sum with a single tree requires at least 5 splits, as we are forced to make \emph{duplicate subtrees}:
a copy of the second tree has to be grown out of every leaf node of the first tree (see \cref{fig:intro}).
Indeed, given independent tree functions $f_1,\ldots,f_k$,
in order for a single tree $f$ to implement their sum, we would generally need
$$
\#\text{leaves}(f) \geq \prod_{k=1}^K \#\text{leaves}(f_k).
$$

This need to grow a deep tree to capture additive structure implies two statistical weaknesses of decision trees when fitting them to additive data-generating mechanisms.
First, growing a deep tree greatly increases the probability of splitting on noisy features.
Second, leaves in a deep tree contain fewer samples, implying that the predictions suffer from higher variance.
These weaknesses could be mitigated if we fit a separate tree to each additive component of the generative mechanism and present the \emph{tree-sum} as our fitted model.
While existing ensemble methods such as random forests \cite{breiman2001random} and gradient boosting \cite{friedman2001greedy,chen2016xgboost} comprise tree-sums, they either fit each tree independently (random forests) or sequentially (gradient boosting), and are hence unable to disentangle additive components.

\begin{figure*}[ht]
    \centering
    \includegraphics[width=0.8\textwidth]{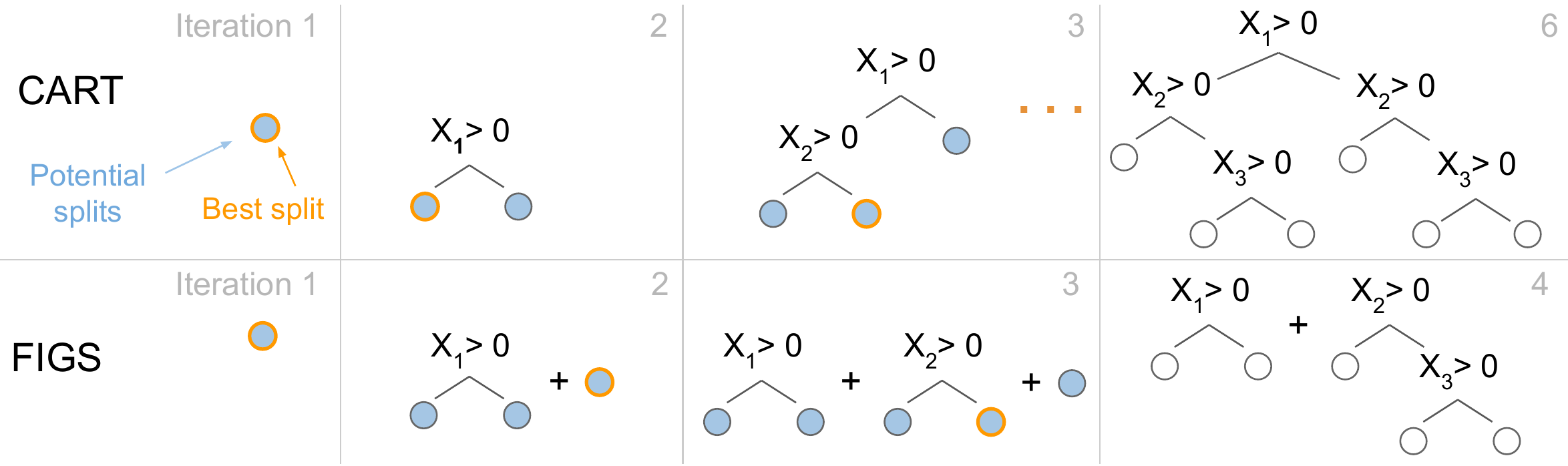}
    \caption{\methods algorithm overview for learning the toy function $y = \mathbbm{1}_{X_1> 0} + \mathbbm{1}_{X_2 > 0} \cdot \mathbbm{1}_{X_3 > 0}$. \methods greedily adds one node at a time, considering splits not just in an individual tree but within a collection of trees in a sum.
    This can lead to much more compact models, as it reduces repeated splits (e.g., in the final CART model shown in the top-right).}
    \label{fig:intro}
\end{figure*}

To address these weaknesses of decision trees, we propose Fast Interpretable Greedy-Tree Sums (\method), a novel yet natural algorithm that is able to \emph{grow a flexible number of trees simultaneously}.
FIGS is based on a simple yet effective modification to Classification and Regression Trees (CART)~\cite{breiman1984classification}, allowing it to adapt to additive structure (if present) by starting new trees, while still maintaining the ability of CART to adapt to higher-order interaction terms.
By capping the total number of splits allowed, FIGS produces a model that is also easily visualized, memorized, and emulated by hand.

We performed extensive experiments across a wide array of real-world datasets to compare the predictive performance of \methods to a number of popular decision rule models \cite{molnar2020interpretable}. Specifically, we took the number of rules (splits in the case of trees) as a common measure of interpretability for this model class, and constructed decision rule models at a prescribed level of interpretability. Our results show that \methods often achieved the best predictive performance  across various levels of interpretability (i.e., number of splits). 

Next, we apply \methods to a key domain problem involving decision rule models which is the construction of CDIs. CDIs are models for predicting patient risk and are widely used by healthcare professionals to make rapid decisions in a clinical setting such as the emergency room. To apply \methods to the medical domain, it is crucial that \methods  is able to adapt to heterogeneous data from diverse groups of patients. Tailoring models to various groups is often necessary since various groups of patients may differ dramatically and require distinct features for high predictive performance on the same outcome. While a naive solution is to fit a unique model to each group of patients, this is sample-inefficient and sacrifices valuable information that can be shared across groups. To mitigate this issue, we introduce a variant of \method, called Group Probability-Weighted Tree Sums (G-\method), which accounts for this heterogeneity while using all the samples available. Specifically, G-\methods first fits a classifier (e.g., logistic regression) to predict group membership probabilities for each sample. Then, it uses these estimates as instance weights in FIGS to output a model for each group. 


We used both \methods and G-\methods to construct CDIs for three pediatric emergency care datasets, and found that both have up to 20\% higher specificity than CART while maintaining the same level of sensitivity. Further, G-\methods improves specificity over \methods by over 3\% for fixed levels of sensitivities. The features used in the fitted models agrees with medical domain knowledge. Moreover, G-\methods learns a tree-sum model where each tree in the model represents a distinct clinical domain, providing a clear and organized framework for clinicians to use when assessing and treating patients. Next, we investigate the stability of G-\methods to data perturbations. Stability to ``reasonable'' data perturbations is a key tenet of the Predictability, Computability, and Stability (PCS) framework \cite{yu2013stability,yu2020veridical,murdoch2019definitions}, and a necessary prerequisite for the application of ML techniques in high-stakes domains. We show that G-\methods learns a similar tree-sum model (i.e., a similar set of features) across data perturbations (e.g., introducing noise by randomly permuting labels). This work expands on initial results on G-\methods contained in an earlier pre-print \cite{nasseri2022group}. 

Next, to provide insight into the success of \method, we investigate tree-sum models and \method~theoretically. We prove generalization upper bounds for tree-sum models when given oracle access to the optimal tree structures. Whenever additive structure is present, this upper bound has a faster rate in the sample size $n$ compared to the generalization lower bound for any decision tree proved in \cite{tan2021cautionary}. Further, we establish in the large-sample limit that \method~disentangles the additive components of the generative model, with each tree in the sum fitting a separate additive component.

Lastly, similarly to CART, \method~overfits when allowed too many splits. Hence, we develop an ensemble version, called Bagging-FIGS, that borrows the bootstrap and feature subsetting strategies of random forests.
We compare the prediction performance of Bagging-FIGS against random forests, XGBoost, and
generalized additive models (GAMs) across a wide range of real-world datasets. Bagging-FIGS always maintains competitive performance with all other methods, further enjoying the best performance on several datasets.



In what follows,
\cref{sec:methods} introduces \methods and
\cref{sec:background} covers related work.
\cref{sec:results} contains our experimental results on real-world datasets showing that \methods predicts well with very few splits.
\cref{sec:cdr_results} covers three CDI case studies using \methods and G-\method.
\cref{sec:theoretical_investigations} investigates the theoretical performance of tree-sum models and \method. Finally, \cref{sec:bagging_figs} introduces Bagging-\methods and compares its prediction performance to other algorithms on real-world datasets. 




\section{\method: Algorithm description and run-time}
\label{sec:methods}


Suppose we are given training data $\data = \braces*{(\bx_{i},y_{i})}_{i=1}^n$. 
When growing a tree, CART chooses for each node $\node$ the split $s$ that maximizes the impurity decrease in the responses $\by$.
For a given node $\node$, the impurity decrease has the expression \begin{align*}
    \hat\Delta(s,\node, \by) \coloneqq & \sum_{\bx_i \in \node} \paren*{y_i - \bar y_\node}^2 
    - \sum_{\bx_i \in \node_L} \paren*{y_i - \bar y_{\node_L}}^2 \\
    &
    - \sum_{\bx_i \in \node_R} \paren*{y_i - \bar y_{\node_R}}^2,
\end{align*}
where $\node_L$ and $\node_R$ denote the left and right child nodes of $\node$ respectively, and $\bar y_{\node}, \bar y_{\node_L}, \bar y_{\node_R}$ denote the mean responses in each of the nodes.
We call such a split $s$ a \emph{potential split}, and note that for each iteration of the algorithm, CART chooses the potential split with the largest impurity decrease.\footnote{This corresponds to greedily minimizing the mean-squared-error criterion in regression and Gini impurity in classification.}

\methods extends CART to greedily grow a tree-sum (see \cref{alg:method}).
That is, at each iteration of FIGS, the algorithm chooses either to make a split on one of the current $K$ trees $\hat f_1,\ldots, \hat f_K$ in the sum, or to add a new stump to the sum.
To make this decision, it still applies the CART splitting criterion detailed above to identify a potential split in each leaf of each tree.
However, to compute the impurity decrease for a given split, it substitutes the vector of \emph{residuals} $\textcolor{cmaroon}{r_i} \coloneqq y - \sum_{k=1}^K \hat f_k(\bx_i)$ for the vector of responses $\by$.
FIGS makes only one split among the $K+1$ trees:
the one corresponding to the largest impurity decrease.
The value of each of the new leaf nodes is then defined to be the mean residual for the samples it contains, added to the value of its parent node.
If a new stump is created, the value at the root is defined to be zero.
At inference time, we predict the response of an example by dropping it down each tree and summing the values of each leaf node containing it.

\begin{algorithm}[h]
  \caption{\methods fitting algorithm.}
  \label{alg:method}
  \small
\begin{algorithmic}[1]
  \State {\bfseries \method}(\var X: features, \var y: outcomes, \var {stopping\_threshold}) 

  \State \var{trees} = []
  \While{count\_total\_splits(\var{trees}) $<$ \var {stopping\_threshold}:}
    \State \var{all\_trees} = join(\var{trees}, build\_new\_tree()) \textcolor{gray}{\# add new tree}
    \State \var{potential\_splits} = []
    \For{}\hspace{-3pt}\var{tree} in \var{all\_trees}:
        \State \varm{y\_residuals} = \var{y} -- predict(\var{all\_trees}) 
        \For{}\hspace{-3pt}\var{leaf} in \var{tree}:
            \State \var{potential\_split} = split(\var{X}, \varm{y\_residuals}, \var{leaf})
            \State \var{potential\_splits}.append(\var{potential\_split})
        \EndFor
    \EndFor
    \State \var{best\_split} = split\_with\_min\_impurity(\var{potential\_splits})
    \State \var{trees}.insert(\var{best\_split})
  \EndWhile
\end{algorithmic}
\end{algorithm}

\paragraph{Selecting the model's stopping threshold.}Choosing a threshold on the total number of splits can be done similarly to CART:
using a combination of the model's predictive performance (i.e., cross-validation) and domain knowledge on how interpretable the model needs to be.
Alternatively, the threshold can be selected using an impurity decrease threshold~\cite{breiman1984classification} rather than a hard threshold on the number of splits.
We discuss potential data-driven choices of the threshold in the Discussion (\cref{sec:discussion}).

\paragraph{Run-time analysis.} The run-time complexity for \method~to grow a model with $m$ splits in total is $O(dm^2n^2)$, where $d$ the number of features, and $n$ the number of samples (see derivation in \cref{sec:run_time_extensions_supp}). In contrast, CART has a run-time of $O(dmn^2)$.
Both of these worst-case run-times given above are quite fast, and the gap between them is relatively benign as we usually make a small number of splits to ensure interpretability.

\paragraph{Extensions.}
\methods supports many natural modifications that are used in CART trees.
For example, different impurity measures can be used; here we use Gini impurity for classification and mean-squared-error for regression.
Additionally, \methods could benefit from pruning, shrinkage \cite{agarwal2022hierarchical}, or by being used as part of an ensemble model (e.g., Bagging-\method). We discuss other extensions in \cref{sec:run_time_extensions_supp}. 

\section{Related work and its connections to FIGS}
\label{sec:background}

There is a long history of greedy methods for learning individual trees, e.g., C4.5~\cite{quinlan2014c4}, CART~\cite{breiman1984classification}, and ID3~\cite{quinlan1986induction}.
Recent work has proposed global optimization procedures rather than greedy algorithms for trees.
These can improve performance given a fixed split budget but incur a high computational cost ~\cite{lin2020generalized,hu2019optimal,bertsimas2017optimal}.
However, due to the limitations of a single tree, all these methods have an inductive bias against additive structure \cite{bagallo1990boolean,tan2021cautionary}. Besides trees, there are a variety of other interpretable methods such as rule lists~\cite{letham2015interpretable,angelino2017learning}, rule sets~\cite{cohen1999simple,dembczynski2008maximum}, or generalized additive models~\cite{caruana2015intelligible}; for an overview and Python implementation, see~\cite{singh2021imodels}.

\methods is related to backfitting \cite{breiman1985estimating}, but differs crucially because it does not assume a fixed number of component features, nor does it require knowledge on which features are used by each component.
Furthermore, \methods does not update its component trees in a cyclic manner, but instead  trees ``compete for splits'' at each iteration.

Similar to the work here are methods that learn an additive model of splits, where a split is defined to be an axis-aligned, rectangular region in the input space.
RuleFit~\cite{friedman2008predictive} is a popular method that learns a model by first extracting splits from multiple greedy decision trees fit to the data and then learning a linear model using those splits as features.
\methods is able to improve upon RuleFit by greedily selecting higher-order interactions when needed, rather than simply using all splits from some pre-specified tree depth.
MARS~\cite{friedman1991multivariate} greedily learns an additive model of splines in a manner similar to \method, but loses some interpretability as a result of using splines rather than splits.

Also related to this work are tree ensembles, such as random forest~\cite{breiman2001random}, gradient-boosted trees~\cite{freund1996experiments}, BART~\cite{chipman2010bart} and AddTree~\cite{luna2019building}, all of which use ensembling as a way to boost predictive accuracy without focusing on finding an interpretable model.
Loosely related are post-hoc methods which aim to help understand a black-box model~\cite{lundberg2019explainable,friedman2001greedy,devlin2019disentangled,agarwal2023mdi}, but these can display lack of fidelity to the original model, and also suffer from other problems \cite{rudin2018please}.

\section{\methods results on real-world benchmark datasets}\label{sec:results}

This section shows that \methods enjoys strong prediction performance on several real-world benchmark datasets compared to popular algorithms for fitting decision rule models.

\paragraph{Benchmark datasets.}
For classification, we study four large datasets previously used to evaluate rule-based models~\cite{pmlr-v97-wang19a,agarwal2022hierarchical} along with the two largest UCI binary classification datasets used in Breiman's original paper introducing random forests~\cite{breiman2001random,asuncion2007uci} (overview in \cref{tab:datasets}).
For regression, we study all datasets used in the random forest paper with at least 200 samples along with three of the largest non-redundant datasets from the PMLB benchmark~\cite{romano2020pmlb}. 
80\% of the data is used for training/3-fold cross-validation and 20\% of the data is used for testing.

\begin{table}[h]
    \centering
              

\begin{tabular}{clrrr}
 \toprule
  &                                       Name &  Samples &  Features & Majority class\\
 \midrule
 \parbox[c]{2mm}{\multirow{6}{*}{\rotatebox[origin=c]{90}{Classification}}}               &                    Readmission &   101763 &       150 & 53.9\%\\
& Credit \cite{yeh2009comparisons} &    30000 &        33 & 77.9\%\\
&                    Recidivism &     6172 &        20 & 51.6\%\\
            & Juvenile \cite{osofsky1997effects} &     3640 &       286 & 86.6\%\\
 & German credit &     1000 &        20 & 70.0\%\\             
& Diabetes \cite{smith1988using} &      768 &         8 & 65.1\%\\
\midrule
&                                       Name &  Samples &  Features & Mean\\
\midrule
  \parbox[c]{0.5mm}{\multirow{9}{*}{\rotatebox[origin=c]{90}{Regression}}} &     Breast tumor \cite{romano2020pmlb} &   116640 &         9 & 62.0\\
& CA housing \cite{pace1997sparse} &    20640 &         8 & 5.0\\
  &      Echo months \cite{romano2020pmlb} &    17496 &         9 & 74.6\\
  &  Satellite image \cite{romano2020pmlb} &     6435 &        36 & 7.0\\
  &      Abalone \cite{nash1994population} &     4177 &         8 & 29.0\\
  &    Diabetes  \cite{efron2004least} &      442 &        10 & 346.0\\
& Friedman1 \cite{friedman1991multivariate} &      200 &        10 & 26.5\\
& Friedman2 \cite{friedman1991multivariate} &      200 &         4 & 1657.0\\
& Friedman3 \cite{friedman1991multivariate} &      200 &         4 & 1.6\\
 \bottomrule
 \end{tabular}
 
    \caption{Real-world datasets analyzed here: classification (top panel), regression (bottom panel).}
    \label{tab:datasets}
\end{table}

\definecolor{bluegray}{rgb}{0.4, 0.6, 0.8}
\definecolor{darkgreen}{rgb}{0.0, 0.2, 0.13}


\paragraph{Baseline methods.}
For both classification and regression, \textbf{\method} is compared to \textcolor{orange}{CART}, \textcolor{darkgreen}{RuleFit}, and \textcolor{purple}{Boosted Stumps} (CART stumps learned via gradient-boosting).
Furthermore, for classification we additionally compare against \textcolor{bluegray}{C4.5} and for regression we additionally compare against a CART model fitted using the mean-absolute-error \cblue{(MAE)} splitting-criterion.
We finally also add a black-box baseline comprising a \textcolor{gray}{Random Forest} with 100 trees, which thus uses many more splits than all the other models.\footnote{We also compare against Gradient-boosting with decision trees of depth 2, but find that it is outperformed by CART in this limited-split regime, so we omit these results for clarity. 
We also attempt to compare to optimal tree methods, such as GOSDT~\cite{lin2020generalized}, but find that they are unable to fit the dataset sizes here.}


\paragraph{\methods predicts well with few splits.}
\cref{fig:performance_curves} shows the models' performance results (on test data) as a function of the number of splits in the fitted model.\footnote{For RuleFit, each term in the linear model is counted as one split.}
We treat the number of splits as a common metric for the level of interpretability of each model.
In practice, interpretability is context-dependent. However, the goal of this experiment is to sweep across a range of interpretability levels, and show the test performance  had that level been selected. 

The top two rows of \cref{fig:performance_curves} show results for classification (measured using the area under the receiver operating characteristic (ROC) curve, i.e., AUC), and the bottom three rows show results for regression (measured using the coefficient of determination, denoted by $R^2$).
On average, \methods outperforms baseline models when the number of splits is very low.
The performance gain from \methods over other baselines is larger for the datasets with more samples (e.g., the top row of \cref{fig:performance_curves}), matching the intuition that \methods performs better because of its increased flexibility.
For two of the larger datasets (\textit{Credit} and \textit{Recidivism}), \methods even outperforms the black-box \textcolor{gray}{Random Forest} baseline, despite using less than 15 splits.
For the smallest classification dataset (\textit{Diabetes}), \methods performs extremely well with very few (less than 10) splits but starts to overfit as more splits are added.

\begin{figure*}
    \centering
    \begin{tabular}{lc}
    \begin{minipage}{0.04cm}
    \rotatebox[origin=c]{90}{\large Classification}
    \end{minipage}%
         & \raisebox{\dimexpr-.5\height-1em}{\includegraphics[width=0.9\textwidth]{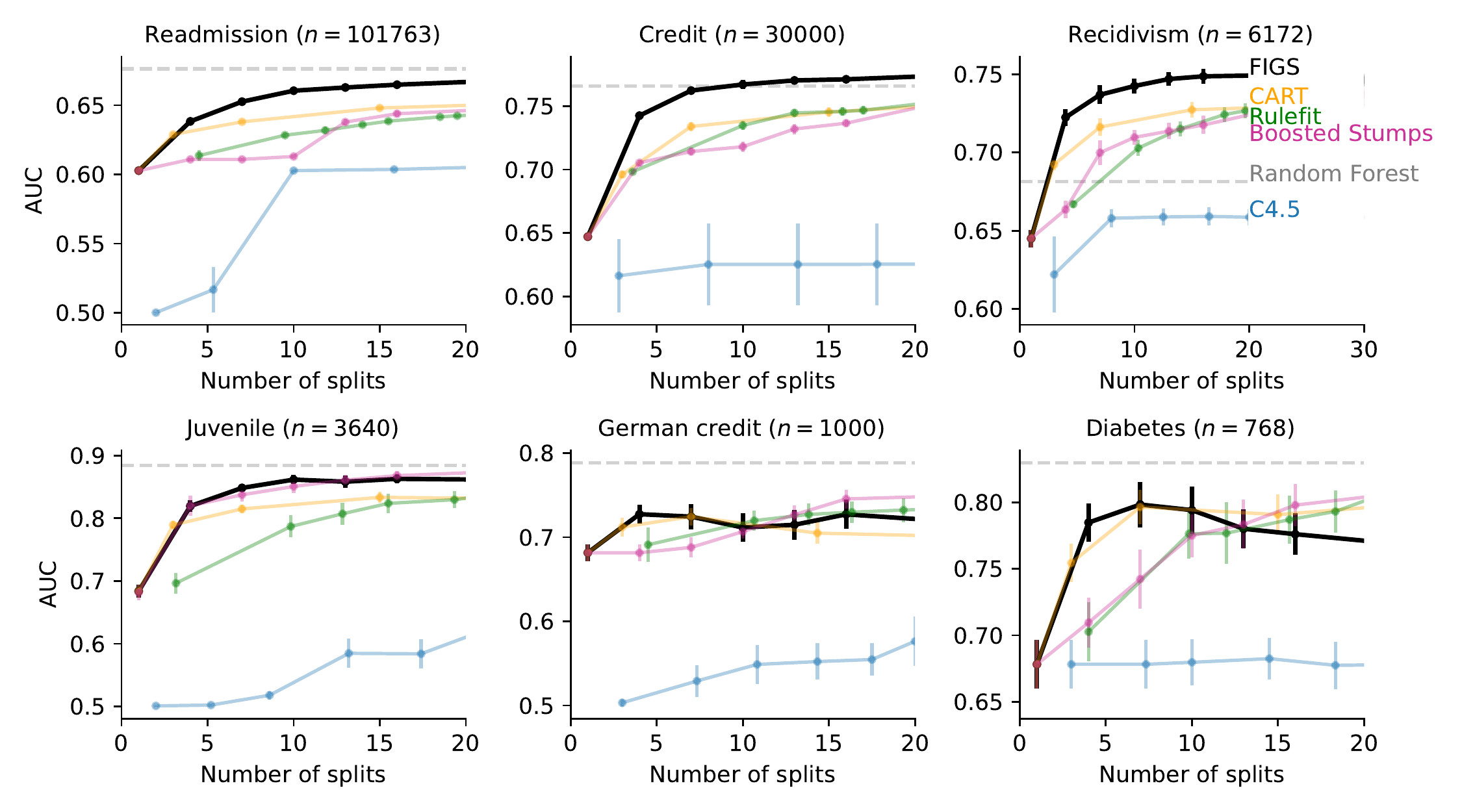}} \\
        \midrule
    \begin{minipage}{0.04cm}
    \rotatebox[origin=c]{90}{\large Regression}
    \end{minipage}%
        & \raisebox{\dimexpr-.5\height-1em}{\includegraphics[width=0.9\textwidth]{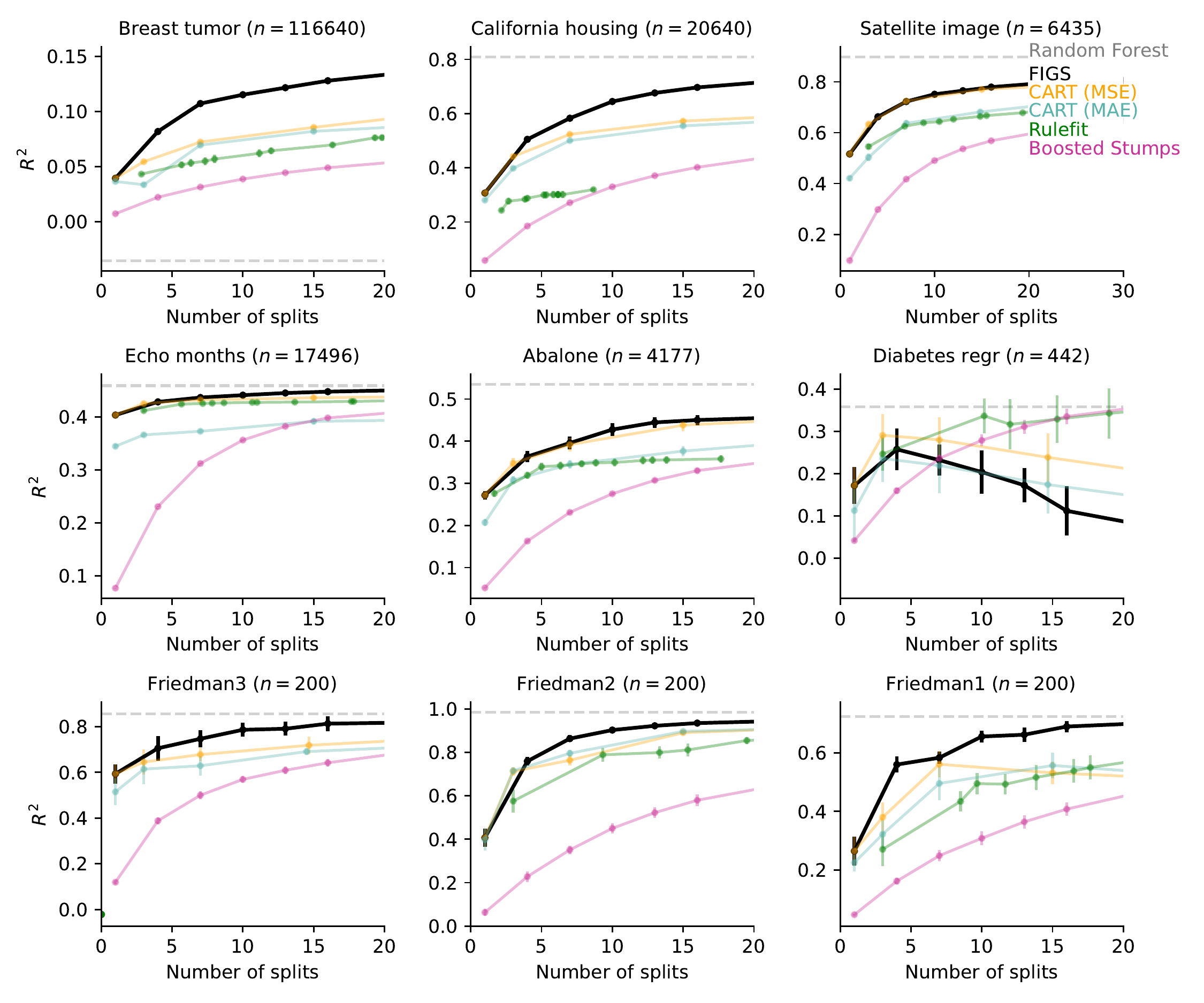}}
    \end{tabular}
    \vspace{-10pt}
    \caption{\methods performs extremely well on the test-set using very few splits, particularly when the dataset is large.
    Top two rows show results for classification datasets (measured by AUC of the ROC curve) and the bottom three rows show results for regression datasets (measured by $R^2$).
    Errors bars show standard error of the mean, computed over 6 random data splits.}
    \label{fig:performance_curves}
\end{figure*}

\section{Learning CDIs via \methods}
\label{sec:cdr_results}

A key domain problem involving interpretable models is the development of CDIs, which can assist clinicians in improving the accuracy, consistency, and efficiency of diagnostic strategies for sick and injured patients.
Recent works have developed and validated CDIs using interpretable models, particularly in emergency medicine~\cite{bertsimas2019prediction,stiell2001canadian,kornblith2022predictability,holmes2002identification}. As discussed earlier, applying machine learning models to the medical domain must account for the heterogeneity in the data that arises from the presence of diverse groups of patients (e.g., age groups, sex, treatment sites). Typically, this heterogeneity is dealt by fitting a separate model on each group. However, this strategy comes at the cost of losing samples, and discarding valuable information that can be shared amongst various groups. To mitigate this loss of power, we introduce a variant of \method, called group-probability weighted FIGS, or G-FIGS as follows. 


\paragraph{G-FIGS: Fitting \methods to multiple groups.}
As before, we assume a supervised learning setting with features $X$, outcome $Y$, and a group label $G$ (e.g., treatment site or age group). G-FIGS is a two-step algorithm: (i) For a given group label $g$, G-FIGS first estimates group membership probabilities for each sample (e.g., by fitting a logistic regression model to predict group-membership)\footnote{This is methodologically analogous to a propensity score in the causal inference literature.} That is, it estimates $\mathbb{P}(G = g ~|~ X)$. (ii) For a given group $g$, G-FIGS then uses the group probabilities $\mathbb{P}(G = g ~|~ X)$ as sample weights when fitting \methods to the \emph{whole dataset}. This results in a group-specific model, but borrows information from samples in other groups. See~\cref{sec:g_figs_supp} for more details, and a visual representation.

\paragraph{Datasets and data cleaning.}
\cref{tab:cdr_datasets} shows the CDI datasets under consideration here.
They each constitute a large-scale multi-site data aggregation by the Pediatric Emergency Care Applied Research Network (PECARN), with a relevant clinical outcome (e.g., presence of traumatic brain injury).
For each of these datasets, we group patients into two natural groups: patients with age ${<}2$ years and ${\ge}2$ years.
This age-based threshold is commonly used for emergency-based diagnostic strategies \cite{kuppermann2009identification}, because it follows a natural stage of development, including a child’s ability to participate in their care (e.g., ability to verbally communicate with their doctor).
At the same time, the natural variability in early childhood development also creates opportunities to share information across this threshold.
These datasets are non-standard for machine learning; as such, we spend considerable time cleaning, curating, and preprocessing these features along with medical expertise included in the authorship team.\footnote{Details, along with the openly released clean data can be found in  \cref{sec:cdi_results_supp}.}
We use 60\% of the data for training, 20\% for tuning hyperparameters (including estimation of each patient's group-membership $\P(\text{age} \ {<}2 \ \text{years} ~| ~ X)$), and 20\% for evaluating test performance. 

\paragraph{Prediction metrics.} Prediction performance is measured by comparing the specificity of a model when sensitivity is constrained to be above a given threshold, chosen to be $\{92\%,94\%,96\%,98\% \}$. We opt for this metric because high levels of sensitivity are crucial for CDIs so as to avoid potentially life threatening false negatives (i.e., missing a diagnosis). For a given threshold, we would like to maximize specificity as false positives lead to unncessary resource utilization and can needlessly expose patients to the harmful effects of medical procedures such as radiation from a computed tomography (CT) scan.

\paragraph{Baseline methods.} We compare \methods and \methodabbrv~to two baselines: CART~\cite{breiman1984classification} and Tree-Alternating Optimization TAO~\cite{carreira2018alternating}).
For each baseline, we either (i) fit one model to all the training data or (ii) fit a separate model to each group (denoted with -SEP) -- one to the patients with age ${<}2$ years and one for the patients with age ${\ge}2$ years. Additionally, for CART, we also fit a model in the style of \methodabbrv, denoted as G-CART. Limits on the total number of splits for each model are varied over a range which yields interpretable models, from 2 to 16 maximum splits\footnote{The choice of 16 splits is somewhat arbitrary, but we find that amongst 643 popular CDIs on \href{https://www.mdcalc.com/}{mdcalc}, 93\% contain no more than 16 splits and 95\% contain no more than 20 splits.} (full details of this and selection of other hyperparameters are in\cref{sec:cdi_results_supp}).

\begin{table}[h]
    \centering
    \begin{tabular}{lrrrr}
\toprule
Name &  Patients &  Features & Outcome (count)&  Outcome (\%)  \\
\midrule
 TBI &     42428&        61 &      376 &        0.9\%  \\
 IAI &     12044 &        21 &      203 &        1.7\% \\
 CSI &      3313 &        34 &      540 &       16.3\% \\
\bottomrule
\end{tabular}
    \caption{Clinical decision datasets for traumatic brain injury (TBI)~\cite{kuppermann2009identification}, intra-abdominal injury (IAI)~\cite{holmes2002identification}, and cervical spine injury (CSI)~\cite{leonard2019cervical}.}
    \label{tab:cdr_datasets}
\end{table}


\paragraph{\methods and \methodabbrv~predict well.}

\cref{tab:results} shows the prediction performance of \method, \methodabbrv~, and baseline methods. Further, we report the prediction performance of all methods for each age group separately (i.e, age ${<}2$ years and age ${\geq}2$ years) in \cref{tab:cdr_results_young} and \cref{tab:cdr_results_old} respectively. For high levels of sensitivity, \methodabbrv~generally improves the model's specificity against the baselines. Further, \methodabbrv~also improves specificity for each age group, often outperforming both the models that fit all the data as well as the model that fits data for each group separately. This suggests that 
using a different model for each group while sharing information across groups can lead to better prediction in a heterogeneous population.

\begin{figure}[ht]
    \centering
    \includegraphics[width=\columnwidth]{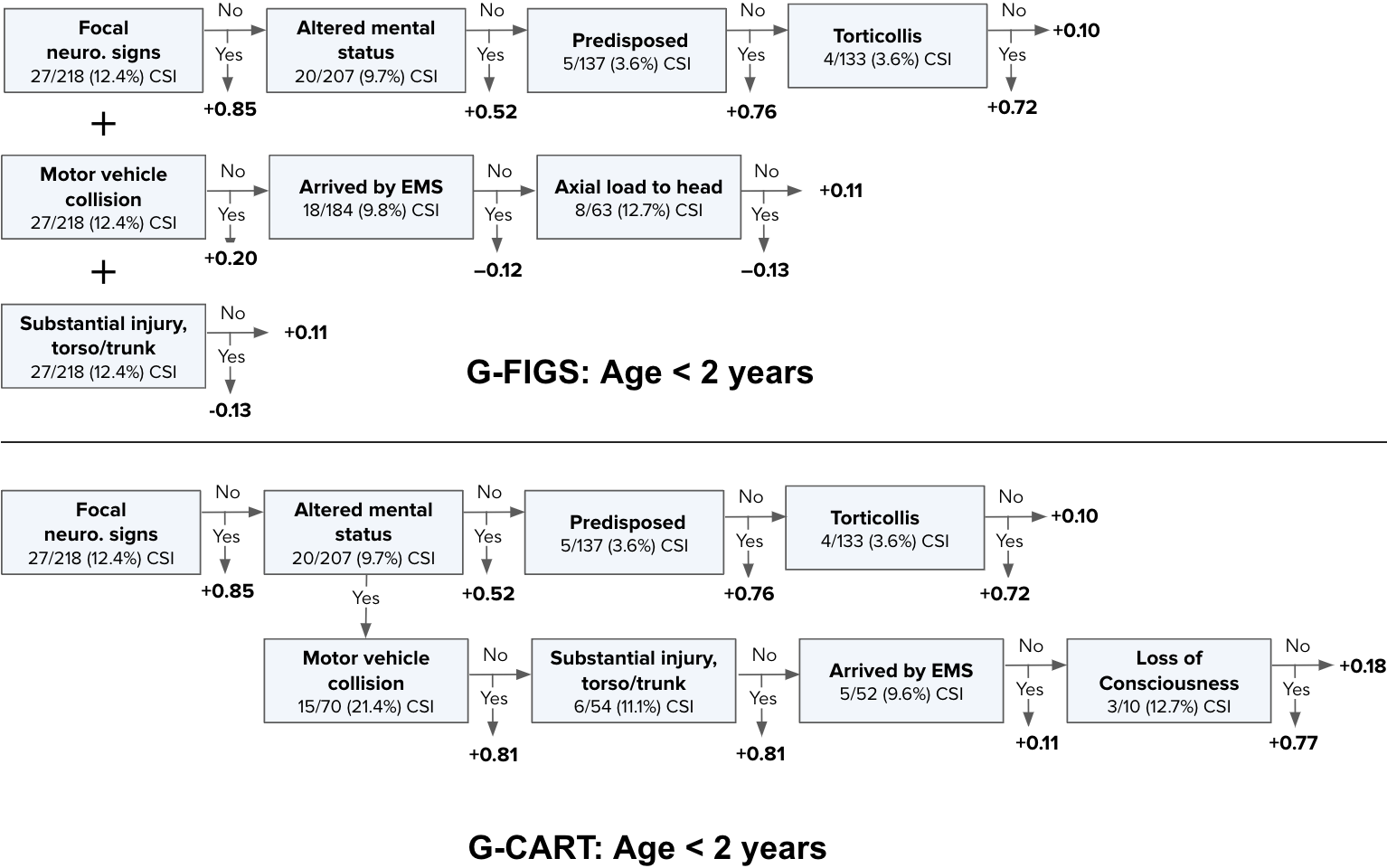}\\
    \caption{Visualizes the  \methodabbrv~and G-CART models fitted to CSI dataset for the age ${\le}2$ years group. Both \methodabbrv~and G-CART use features that match medical knowledge. However, unlike G-CART,  \methodabbrv~disentangles risk factors, with each tree representing a distinct clinical domain.}
    
    \label{fig:model_ex_csi}
\end{figure}

\begin{table*}[ht]
    \centering
        \begin{tabular}{lrrrrrrrrrrrr}
    \toprule
    {} & \multicolumn{4}{c}{Traumatic brain injury} &
        \multicolumn{4}{c}{Cervical spine injury} 
        & \multicolumn{4}{c}{Intra-abdominal injury} \\
     \cmidrule(lr){2-5}
     \cmidrule(lr){6-9}
     \cmidrule(lr){10-13}
     Sensitivity level: & 92\% & 94\% & 96\% & 98\% & 92\% & 94\% & 96\% & 98\% & 92\% & 94\% & 96\% & 98\% \\
     \midrule
    TAO & 6.2 & 6.2 & 0.4 & 0.4
        & 41.5 & 21.2 & 0.2 & 0.2 
        & 0.2  & 0.2 & 0.0 & 0.0 \\
     TAO-SEP & 26.7 & 13.9  & 10.4 & 2.4 
        & 32.5  & 7.0  & 5.4 & 2.5 
        & 12.1  & 8.5 & 2.0 & 0.0 \\
    CART & 20.9 & 14.8 & 7.8 & 2.1 
        & 38.6 & 13.7 & 1.5 & 1.1
        & 11.8 & 2.7 & 1.6 & 1.4 \\
     CART-SEP & 26.6 & 13.8 & 10.3 &  2.4 
        & 32.1 & 7.8 & 5.4 & 2.5 
        & 11.0 & 9.3 & 2.8 & 0.0 \\
    G-CART & 15.5 &  13.5 & 6.4 & 3.0
        &  38.5 &  15.2 & 4.9 & 3.9 
        & 11.7 &  10.1 & 3.8 &  0.7 \\
     \textbf{FIGS} & 23.8 &  18.2 &  12.1 &  0.4 
        & 39.1 & 33.8 & 24.2 & \textbf{16.7}   
        & \textbf{32.1} & 13.7 & 1.4 & 0.0  \\
     \textbf{FIGS-SEP} & 39.9 & 19.7&  \textbf{17.5} &  2.6 
        & 38.7 & 33.1 & 20.1 & 3.9 
        & 18.8 & 9.2 & 2.6 &  0.9  \\
     \textbf{\methodabbrv} & \textbf{42.0} & \textbf{23.0} & 14.7 & \textbf{6.4} 
        & \textbf{42.2}  & \textbf{36.2} & \textbf{28.4} & 15.7 
        & 29.7 & \textbf{18.8} & \textbf{11.7} & \textbf{3.0} \\
    \bottomrule
    \end{tabular}

        \caption{Best test-set specificity when sensitivity is constrained to be above a given level. \methodabbrv~provides the best performance overall in the high-sensitivity regime.
        -SEP models fit a separate model to each group, and generally outperform fitting a model to the entire dataset.
        G-CART follows the same approach as \methodabbrv~but uses weighted CART instead of FIGS for each final group model.
        Averaged over 10 random data splits into training, validation, and test sets, with hyperparameters chosen independently for each split.
        }
    \label{tab:results}
\end{table*}


\paragraph{Features used by \methodabbrv~match medical knowledge.} 
\cref{fig:model_ex_csi} shows the \methodabbrv~model on the cervical spinal injury (CSI) dataset for patients with age ${<}2$ years, while \cref{sec:cdi_results_supp} shows \methodabbrv~models for patients with age ${\geq}2$ years, and the other two datasets. 
The features used by the learned model match medical domain knowledge and partially agree with previous work~\cite{leonard2019cervical}; e.g., features such as \textit{focal neurologic signs}, \textit{altered mental status}, and  \textit{torticollis} are all known to increase the risk of CSI. Further, features unique to each group largely relate to the age cutoff; the ${<}2$ years age group features include those that clinicians can assess without asking the patient (e.g., \textit{substantial torso injury}), while that of the  ${\ge}2$ years age group features (visualized in \cref{fig:model_ex_csi_rep}) require verbal responses (\textit{neck pain}, \textit{head pain}). This matches the medical intuition that non-verbal features should be more reliable features in the ${<}2$ years age group.


\paragraph{G-FIGS disentangles clinical risk factors.} \cref{fig:model_ex_csi} visualizes the \methodabbrv~and G-CART model for the ${<}2$ years age group on the CSI dataset. Both \methodabbrv~and G-CART utilize almost all the same features (apart from \textit{Axial load to head}). However, unlike G-CART, \methodabbrv~disentangles risk factors with each tree representing a clinical domain: the top tree identifies signs and symptoms, the middle tree corresponds to the mode of injury and how the patient arrived at the hospital, and the bottom tree assesses the overall severity of the patient's injury patterns and any associated injuries that may be present. 
By disentangling clinical domains, the fitted \methodabbrv~model not only has stronger prediction performance than a single-tree model (see \cref{tab:cdr_datasets}), but also provides a more medically intuitive framework for clinicians to use when assessing and treating patients. We provide another example of the ability of \method~to disentangle risk factors on a diabetes classification dataset in \cref{sec:theory_supp}. We note that there is no a priori reason that tree-sum models are more reflective of the true data generating process than single tree models. Instead, trust in the veracity of such a model should be based on good prediction performance as well as coherence with domain knowledge. In more detail, we recommend that practitioners follow the predictive, descriptive, and relevance (PDR) framework for investigating real-world scientific problems using interpretable machine learning models \cite{murdoch2019definitions}. 



\paragraph{Stability Analysis of Learnt CDI.} As discussed earlier, the PCS framework argues that stability to ``reasonable'' data perturbations is a key prerequisite for interpretability and deployment of machine learning methods in high-stakes domains. Here, we investigate the stability of \methodabbrv~on the CSI dataset. Specifically, we introduce noise by randomly swapping a percentage $p$ of labels $\by$. We vary $p$ between $\{1\%,2.5\%,5\%\}$. For each value of $p$, we measure stability by comparing the similarity of the features selected in the model trained on the perturbed data to the model displayed in \cref{fig:model_ex_csi}. In particular, the similarity of features is measured via the Jaccard distance. Further details of our experiments, and our results can be found in \cref{sec:cdi_results_supp}. Our results show that \methodabbrv~learns a similar model for each age group even for larger values of $p$, indicating its stability.

\section{Theoretical Investigations}
\label{sec:theoretical_investigations}
We perform theoretical investigations to better understand the properties of \method~and tree-sum models. Specifically, we show (under some oracle conditions) that if the regression function has an additive decomposition, then tree-sum models and \methods are able to achieve optimal generalization upper bounds and disentangle additive components. In this section, we summarize these theoretical results, and defer the formal statements and proof details to \cref{sec:theory_supp}.


\paragraph{Oracle generalization upper bounds.}
As discussed, \emph{all} single-tree models have a squared error generalization error lower bound of $\Omega(n^{-2/(d+2)})$ when fitted to smooth additive models \cite{tan2021cautionary}. 
To demonstrate the utility of tree-sum models in capturing additive structure, we provide generalization upper bounds of tree-sum models when their structure is chosen by an oracle.
Specifically, we consider the typical supervised learning set-up: $y = f(\bx) + \epsilon$, where $\bx$ is a random variable on $[0,1]^d$ and $\E\braces*{\epsilon~|~\bx} = 0$. 
We assume $f(\bx) = \sum_{k=1}^K f_k(\bx_{I_k})$, where $I_1 \ldots I_K$ are disjoint blocks of features, and $\bx_{I_{k}}$ denotes the sub-vector of $\bx$ comprising coordinates in $I_k$. 
Under this set-up, we show that if each component function $f_k$ is smooth, and blocks of features are independent (i.e., $\bx_{I_j} \indep \bx_{I_k}$ for $j \neq k$), then there exists a tree-sum model such that its squared error generalization upper bound scales as $O(Kn^{-2/(d_{\max}+2)})$ where $d_{max} = \max_{k} |I_k|$. 
It is instructive to consider two extreme cases: If $|I_k|=1$ for each $k$, the upper bound scales as $O\paren*{d n^{-2/3}}$. On the other hand if $K=1$, we have an upper bound of $O\paren*{n^{-2/(d+2)}}$.
Both bounds match the well-known minimax rates for their respective inference problems \cite{raskutti2012minimax}. See  \cref{thm:generalization_main} for the formal statement.  

\paragraph{\methods performs disentanglement.} 
One potential reason for the success of \method~is its ability to disentangle additive structure. 
We show that under the generative model discussed above, if \method~splits nodes using population quantities (i.e., in the large-sample limit), then the set of features split upon in each fitted tree $\hat{f}_k$ is contained within $I_k$.
The precise theorem statement be found in \cref{thm:disentanglement} in Appendix \cref{sec:theory_supp}. 
By disentangling additive components, \method~is able to avoid duplicate subtrees, leading to a more parsimonious model with better performance.
We provide (partial) empirical justification of this in real-world datasets, by showing that FIGS helps to reduce the number of possibly redundant and repeated splits often observed in CART models (see \cref{sec:repeated_splits_supp}.)
As discussed earlier, we emphasize that disentanglement does not necessarily reflect the underlying data generating process.
We refer the reader to \cref{sec:cdr_results} for a larger discussion regarding interpreting disentanglement. 


\section{Bagging-FIGS}
\label{sec:bagging_figs}

Growing deeper decision trees reduces bias but increases variance.  Allowing more splits in a FIGS model has the same effect.
In order to reduce variance, random forests averages the predictions from an ensemble of decision trees that are each grown in a slightly different manner
\cite{breiman2001random}.
Bagging-FIGS averages the predictions from an ensemble of FIGS models, and makes use of the same variance reduction strategies as random forests:
(i) Each FIGS model fit on a bootstrap resampled dataset.
(ii) At each iteration of each FIGS model, a random subset of the original features is chosen, and the algorithm chooses the next split only from this subset of features.
The effect of both these strategies have been studied empirically and theoretically \cite{breiman1996bagging,buhlmann2002analyzing,mentch2020randomization,lejeune2020implicit}.

\paragraph{Baseline methods and settings.}
In the remainder of this section, we will compare the prediction performance of \methods and Bagging-FIGS against that of four other algorithms: random forest, XGBoost, and penalized iteratively reweighted least squares (PIRLS) on the log-likelihood of a generative additive model.
All algorithms are fit using default settings\footnote{We use the implementation of PIRLS in \texttt{pygam} \cite{daniel_serven_2018_1476122}, with 20 splines term for each feature.}.
The number of features subsetted is a tuning parameter for random forest and  Bagging-FIGS.
For both algorithms, we set this to be $d/3$ for regression datasets and $\sqrt{d}$ for classification datasets.
These are the default choices for RF.
We fit Bagging-FIGS with 100 FIGS estimators.


\paragraph{Bagging-FIGS performs comparably to Random Forest and XGBoost}
\cref{fig:generalization_real_world} shows the generalization performance of all the methods on the various real-world datasets introduced in \cref{sec:results} and \cref{sec:cdr_results}.
It shows that the performance of FIGS, measured by AUC can be improved via bagging and feature subsetting (Bagging-FIGS).
In addition, Bagging-FIGS achieves comparable AUC to both XGBoost and random forest (on average improving over XGBoost by 0.015 and random forest by 0.013).
For some datasets (e.g. \textit{IAI pecarn}, \textit{Recidivism}), Bagging-FIGS outperforms both baselines.


\begin{figure*}[h]
     \centering
    \includegraphics[width=\textwidth]{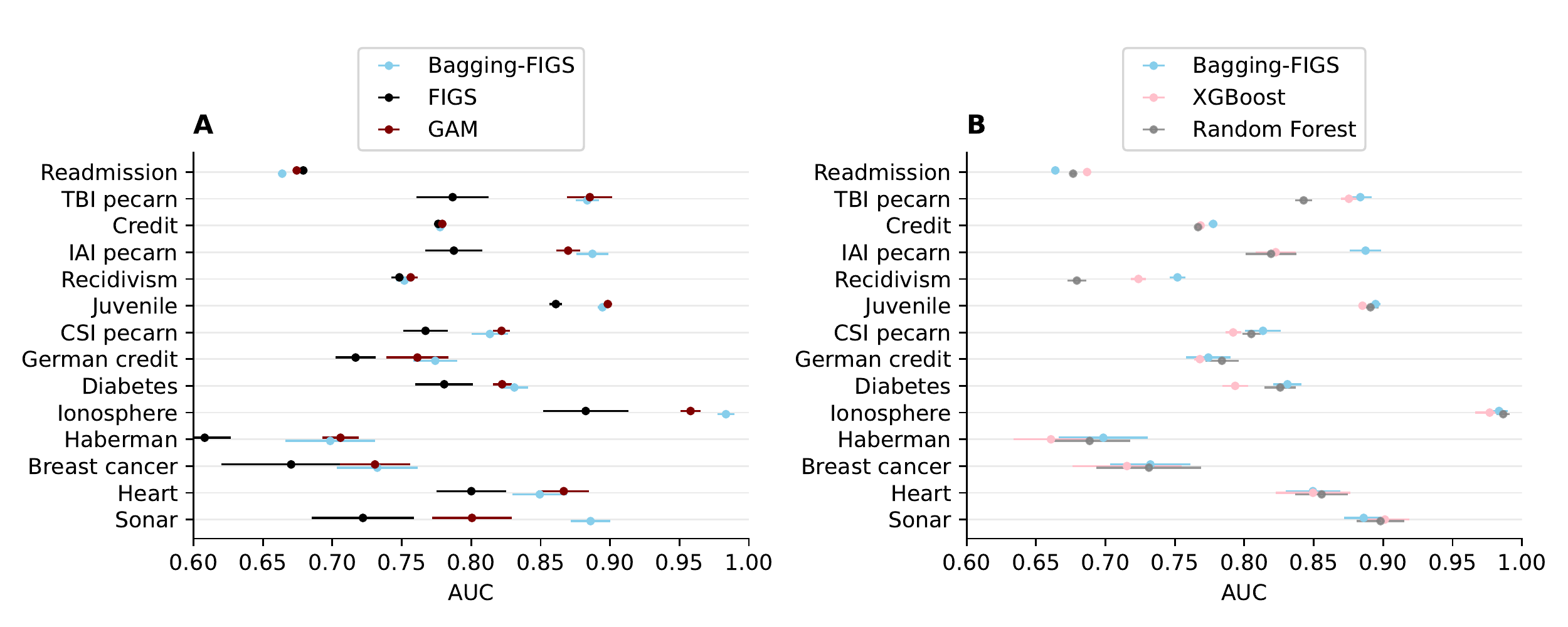}
    \caption{Generalization performance across various real-world datasets with no constraints on the number of splits.
    \textbf{(A)} The performance of \method~can usually be improved by fitting an ensemble of \method~models using bagging and feature subsetting (Bagging-\method).
    The performance of Bagging-\method~is comparable that of a GAM.
    \textbf{(B)} Bagging-\method~achieves comparable performance to both XGBoost and random forest. Error bars show standard error of the mean, computed over 5 random data splits.}
    \label{fig:generalization_real_world}
\end{figure*}

\section{Discussion}
\label{sec:discussion}

\methods is a powerful and natural extension to CART which achieves improved predictive performance over popular baseline tree-based methods across a wide array of datasets while maintaining interpretability by using very few splits.
Furthermore, when the number of splits is unconstrained, an ensemble version of FIGS has prediction performance that compares favorably to random forest and XGBoost.

As a case study, we have shown how
\methods and \methodabbrv~can make an important step towards interpretable modeling of heterogeneous data in the context of high-stakes clinical decision-making.
The fitted CDI models here show promise, but require external clinical validation before potential use.
While our current work only explores age-based grouping, the behavior of \methodabbrv~with temporal, geographical, or demographic splits could be studied as well. Additionally, there are many methodological extensions to explore, such as data-driven identification of input data groups and schemes for feature weighting in addition to instance weighting. \methods has many other natural extensions, some of which we detail below.
\paragraph{Optimization.}
Instead of using a greedy approach, one may perform global optimization algorithm over the class of tree-sum models.
FIGS could include other moves such as pruning in addition to just adding more splits.
This could also be embedded in a Bayesian framework, similar to Bayesian Additive Regression Trees \cite{chipman2010bart}.


\paragraph{Regularization.}

Using cross-validation (CV) to select the number of splits tends to select larger models. Future work can use criteria related to BIC~\cite{schwarz1978estimating} or stability in combination with CV~\cite{lim2016estimation} for selecting this threshold based on data. In future work, one could also vary the total number of splits and number of trees separately, helping to build prior knowledge into the fitting process. \method~could be penalized via novel regularization techniques, such as regularizing individual leaves or regularizing a linear model formed from the splits extracted by \method~\cite{agarwal2022hierarchical}.

\paragraph{Learning interactions.} Interactions are known to be prevalent in biology and other fields, and therefore of key scientific interest \cite{zuk2012mystery}. There has been recent interest in using random forests to learn interactions \cite{basu2018iterative,kumbier2018refining,behr2021provable}. However, since single decision trees are unable to disentangle additive structure, using random forests for interaction discovery might lead to a high false discovery rate. As a result, using \method~in lieu of single decision trees might lead to improved interaction discovery. We leave this investigation to future work.



\paragraph{Model class.}
The class of \methods models could be further extended to include linear terms or allow for summations of trees to be present at split nodes, rather than just at the root.

\vspace{0.1in}

\noindent We hope \methods and G-FIGS can pave the way towards more transparent and interpretable modeling that can improve machine-learning practice, particularly in high-stakes domains such as medicine, law, and policy making.

\FloatBarrier
\section{Acknowledgements}

We gratefully acknowledge partial support from 
NSF TRIPODS Grant 1740855, DMS-1613002, 1953191, 2015341, 2209975 , IIS 1741340, ONR grant N00014-17-1-2176, the Center for Science of Information (CSoI), an NSF Science and Technology Center, under grant
agreement CCF-0939370, NSF grant 2023505 on
Collaborative Research: Foundations of Data Science Institute (FODSI),
the NSF and the Simons Foundation for the Collaboration on the Theoretical Foundations of Deep Learning through awards DMS-2031883 and 814639, and a Weill Neurohub grant. YT was partially supported by NUS Start-up Grant A-8000448-00-00.  AK was supported by the Eunice Kennedy Shriver National Institute of Child Health and Human Development of the National Institutes of Health under Award Number K23HD110716--01.
\FloatBarrier


{
    \section{References}
    \bibliography{_main}

\begin{thebibliography}{10}

\bibitem{breiman2001random}
Breiman L (2001) Random forests.
\newblock {\em Machine learning} 45(1):5--32.

\bibitem{friedman2001greedy}
Friedman JH (2001) Greedy function approximation: a gradient boosting machine.
\newblock {\em Annals of statistics} pp. 1189--1232.

\bibitem{chen2016xgboost}
Chen T, Guestrin C (2016) Xgboost: A scalable tree boosting system in {\em
  Proceedings of the 22nd acm sigkdd international conference on knowledge
  discovery and data mining}.
\newblock pp. 785--794.

\bibitem{lecun2015deep}
LeCun Y, Bengio Y, Hinton G (2015) Deep learning.
\newblock {\em nature} 521(7553):436--444.

\bibitem{rudin2019stop}
Rudin C (2019) Stop explaining black box machine learning models for high
  stakes decisions and use interpretable models instead.
\newblock {\em Nature Machine Intelligence} 1(5):206--215.

\bibitem{murdoch2019definitions}
Murdoch WJ, Singh C, Kumbier K, Abbasi-Asl R, Yu B (2019) Definitions, methods,
  and applications in interpretable machine learning.
\newblock {\em Proceedings of the National Academy of Sciences}
  116(44):22071--22080.

\bibitem{singh2021imodels}
Singh C, Nasseri K, Tan YS, Tang T, Yu B (2021) imodels: a python package for
  fitting interpretable models.
\newblock {\em Journal of Open Source Software} 6(61):3192.

\bibitem{breiman1984classification}
Breiman L, Friedman J, Olshen R, Stone CJ (1984) {\em Classification and
  regression trees}.
\newblock (Chapman and Hall/CRC).

\bibitem{quinlan2014c4}
Quinlan JR (2014) {\em C4. 5: programs for machine learning}.
\newblock (Elsevier).

\bibitem{rudin2021interpretable}
Rudin C, et~al. (2021) Interpretable machine learning: Fundamental principles
  and 10 grand challenges.
\newblock {\em arXiv preprint arXiv:2103.11251}.

\bibitem{ha2021adaptive}
Ha W, Singh C, Lanusse F, Upadhyayula S, Yu B (2021) Adaptive wavelet
  distillation from neural networks through interpretations.
\newblock {\em Advances in Neural Information Processing Systems} 34.

\bibitem{mignan2019one}
Mignan A, Broccardo M (2019) One neuron versus deep learning in aftershock
  prediction.
\newblock {\em Nature} 574(7776):E1--E3.

\bibitem{tan2021cautionary}
Tan YS, Agarwal A, Yu B (2021) A cautionary tale on fitting decision trees to
  data from additive models: generalization lower bounds.
\newblock {\em arXiv preprint arXiv:2110.09626}.

\bibitem{behr2021provable}
Behr M, Wang Y, Li X, Yu B (2021) Provable boolean interaction recovery from
  tree ensemble obtained via random forests.
\newblock {\em arXiv preprint arXiv:2102.11800}.

\bibitem{molnar2020interpretable}
Molnar C (2020) {\em Interpretable machine learning}.
\newblock (Lulu. com).

\bibitem{yu2013stability}
Yu B (2013) Stability.
\newblock {\em Bernoulli} 19(4):1484--1500.

\bibitem{yu2020veridical}
Yu B, Kumbier K (2020) Veridical data science.
\newblock {\em Proceedings of the National Academy of Sciences}
  117(8):3920--3929.

\bibitem{nasseri2022group}
Nasseri K, Singh C, Duncan J, Kornblith A, Yu B (2022) Group
  probability-weighted tree sums for interpretable modeling of heterogeneous
  data.

\bibitem{agarwal2022hierarchical}
Agarwal A, Tan YS, Ronen O, Singh C, Yu B (2022) Hierarchical shrinkage:
  improving the accuracy and interpretability of tree-based methods.
\newblock {\em arXiv preprint arXiv:2202.00858}.

\bibitem{quinlan1986induction}
Quinlan JR (1986) Induction of decision trees.
\newblock {\em Machine learning} 1(1):81--106.

\bibitem{lin2020generalized}
Lin J, Zhong C, Hu D, Rudin C, Seltzer M (2020) Generalized and scalable
  optimal sparse decision trees in {\em International Conference on Machine
  Learning}.
\newblock (PMLR), pp. 6150--6160.

\bibitem{hu2019optimal}
Hu X, Rudin C, Seltzer M (2019) Optimal sparse decision trees.
\newblock {\em Advances in Neural Information Processing Systems (NeurIPS)}.

\bibitem{bertsimas2017optimal}
Bertsimas D, Dunn J (2017) Optimal classification trees.
\newblock {\em Machine Learning} 106(7):1039--1082.

\bibitem{bagallo1990boolean}
Pagallo G, Haussler D (1990) Boolean feature discovery in empirical learning.
\newblock {\em Machine learning} 5(1):71--99.

\bibitem{letham2015interpretable}
Letham B, Rudin C, McCormick TH, Madigan D, , et~al. (2015) Interpretable
  classifiers using rules and bayesian analysis: Building a better stroke
  prediction model.
\newblock {\em Annals of Applied Statistics} 9(3):1350--1371.

\bibitem{angelino2017learning}
Angelino E, Larus-Stone N, Alabi D, Seltzer M, Rudin C (2017) Learning
  certifiably optimal rule lists for categorical data.
\newblock {\em arXiv preprint arXiv:1704.01701}.

\bibitem{cohen1999simple}
Cohen WW, Singer Y (1999) A simple, fast, and effective rule learner.
\newblock {\em AAAI/IAAI} 99(335-342):3.

\bibitem{dembczynski2008maximum}
Dembczy{\'n}ski K, Kot{\l}owski W, S{\l}owi{\'n}ski R (2008) Maximum likelihood
  rule ensembles in {\em Proceedings of the 25th international conference on
  Machine learning}.
\newblock pp. 224--231.

\bibitem{caruana2015intelligible}
Caruana R, et~al. (2015) Intelligible models for healthcare: Predicting
  pneumonia risk and hospital 30-day readmission in {\em Proceedings of the
  21th ACM SIGKDD International Conference on Knowledge Discovery and Data
  Mining}.
\newblock (ACM), pp. 1721--1730.

\bibitem{breiman1985estimating}
Breiman L, Friedman JH (1985) Estimating optimal transformations for multiple
  regression and correlation.
\newblock {\em Journal of the American statistical Association}
  80(391):580--598.

\bibitem{friedman2008predictive}
Friedman JH, Popescu BE, , et~al. (2008) Predictive learning via rule
  ensembles.
\newblock {\em The Annals of Applied Statistics} 2(3):916--954.

\bibitem{friedman1991multivariate}
Friedman JH (1991) Multivariate adaptive regression splines.
\newblock {\em The annals of statistics} pp. 1--67.

\bibitem{freund1996experiments}
Freund Y, Schapire RE, , et~al. (1996) Experiments with a new boosting
  algorithm in {\em icml}.
\newblock (Citeseer), Vol.{}~96, pp. 148--156.

\bibitem{chipman2010bart}
Chipman HA, George EI, McCulloch RE (2010) Bart: Bayesian additive regression
  trees.
\newblock {\em The Annals of Applied Statistics} 4(1):266--298.

\bibitem{luna2019building}
Luna JM, et~al. (2019) Building more accurate decision trees with the additive
  tree.
\newblock {\em Proceedings of the national academy of sciences}
  116(40):19887--19893.

\bibitem{lundberg2019explainable}
Lundberg SM, et~al. (2019) Explainable ai for trees: From local explanations to
  global understanding.
\newblock {\em arXiv preprint arXiv:1905.04610}.

\bibitem{devlin2019disentangled}
Devlin S, Singh C, Murdoch WJ, Yu B (2019) Disentangled attribution curves for
  interpreting random forests and boosted trees.
\newblock {\em arXiv preprint arXiv:1905.07631}.

\bibitem{agarwal2023mdi}
Agarwal A, Kenney AM, Tan YS, Tang TM, Yu B (2023) Mdi+: A flexible random
  forest-based feature importance framework.

\bibitem{rudin2018please}
Rudin C (2018) Please stop explaining black box models for high stakes
  decisions.
\newblock {\em arXiv preprint arXiv:1811.10154}.

\bibitem{pmlr-v97-wang19a}
Wang T (2019) Gaining free or low-cost interpretability with interpretable
  partial substitute in {\em Proceedings of the 36th International Conference
  on Machine Learning}, Proceedings of Machine Learning Research, eds.{}
  Chaudhuri K, Salakhutdinov R.
\newblock (PMLR), Vol.{}~97, pp. 6505--6514.

\bibitem{asuncion2007uci}
Asuncion A, Newman D (2007) Uci machine learning repository.

\bibitem{romano2020pmlb}
Romano JD, et~al. (2020) Pmlb v1. 0: an open source dataset collection for
  benchmarking machine learning methods.
\newblock {\em arXiv preprint arXiv:2012.00058}.

\bibitem{yeh2009comparisons}
Yeh IC, Lien Ch (2009) The comparisons of data mining techniques for the
  predictive accuracy of probability of default of credit card clients.
\newblock {\em Expert Systems with Applications} 36(2):2473--2480.

\bibitem{osofsky1997effects}
Osofsky JD (1997) The effects of exposure to violence on young children (1995).
\newblock {\em Carnegie Corporation of New York Task Force on the Needs of
  Young Children; An earlier version of this article was presented as a
  position paper for the aforementioned corporation.}

\bibitem{smith1988using}
Smith JW, Everhart JE, Dickson W, Knowler WC, Johannes RS (1988) Using the adap
  learning algorithm to forecast the onset of diabetes mellitus in {\em
  Proceedings of the annual symposium on computer application in medical care}.
\newblock (American Medical Informatics Association), p. 261.

\bibitem{pace1997sparse}
Pace RK, Barry R (1997) Sparse spatial autoregressions.
\newblock {\em Statistics \& Probability Letters} 33(3):291--297.

\bibitem{nash1994population}
Nash WJ, Sellers TL, Talbot SR, Cawthorn AJ, Ford WB (1994) The population
  biology of abalone (haliotis species) in tasmania. i. blacklip abalone (h.
  rubra) from the north coast and islands of bass strait.
\newblock {\em Sea Fisheries Division, Technical Report} 48:p411.

\bibitem{efron2004least}
Efron B, Hastie T, Johnstone I, Tibshirani R (2004) Least angle regression.
\newblock {\em The Annals of statistics} 32(2):407--499.

\bibitem{bertsimas2019prediction}
Bertsimas D, Masiakos PT, Mylonas KS, Wiberg H (2019) Prediction of cervical
  spine injury in young pediatric patients: an optimal trees artificial
  intelligence approach.
\newblock {\em Journal of Pediatric Surgery} 54(11):2353--2357.

\bibitem{stiell2001canadian}
Stiell IG, et~al. (2001) The canadian ct head rule for patients with minor head
  injury.
\newblock {\em The Lancet} 357(9266):1391--1396.

\bibitem{kornblith2022predictability}
Kornblith AE, et~al. (2022) Predictability and stability testing to assess
  clinical decision instrument performance for children after blunt torso
  trauma.
\newblock {\em medRxiv}.

\bibitem{holmes2002identification}
Holmes JF, et~al. (2002) Identification of children with intra-abdominal
  injuries after blunt trauma.
\newblock {\em Annals of emergency medicine} 39(5):500--509.

\bibitem{kuppermann2009identification}
Kuppermann N, et~al. (2009) Identification of children at very low risk of
  clinically-important brain injuries after head trauma: a prospective cohort
  study.
\newblock {\em The Lancet} 374(9696):1160--1170.

\bibitem{carreira2018alternating}
Carreira-Perpin{\'a}n MA, Tavallali P (2018) Alternating optimization of
  decision trees, with application to learning sparse oblique trees.
\newblock {\em Advances in neural information processing systems} 31.

\bibitem{leonard2019cervical}
Leonard JC, et~al. (2019) Cervical spine injury risk factors in children with
  blunt trauma.
\newblock {\em Pediatrics} 144(1).

\bibitem{raskutti2012minimax}
Raskutti G, J~Wainwright M, Yu B (2012) Minimax-optimal rates for sparse
  additive models over kernel classes via convex programming.
\newblock {\em Journal of Machine Learning Research} 13(2).

\bibitem{breiman1996bagging}
Breiman L (1996) Bagging predictors.
\newblock {\em Machine learning} 24:123--140.

\bibitem{buhlmann2002analyzing}
B{\"u}hlmann P, Yu B (2002) Analyzing bagging.
\newblock {\em The annals of Statistics} 30(4):927--961.

\bibitem{mentch2020randomization}
Mentch L, Zhou S (2020) Randomization as regularization: A degrees of freedom
  explanation for random forest success.
\newblock {\em The Journal of Machine Learning Research} 21(1):6918--6953.

\bibitem{lejeune2020implicit}
LeJeune D, Javadi H, Baraniuk R (2020) The implicit regularization of ordinary
  least squares ensembles in {\em International Conference on Artificial
  Intelligence and Statistics}.
\newblock (PMLR), pp. 3525--3535.

\bibitem{daniel_serven_2018_1476122}
Servén D, Brummitt C, Abedi H, hlink (2018) dswah/pygam: v0.8.0.
\newblock {\em Zenodo}.

\bibitem{schwarz1978estimating}
Schwarz G (1978) Estimating the dimension of a model.
\newblock {\em The annals of statistics} pp. 461--464.

\bibitem{lim2016estimation}
Lim C, Yu B (2016) Estimation stability with cross-validation (escv).
\newblock {\em Journal of Computational and Graphical Statistics}
  25(2):464--492.

\bibitem{zuk2012mystery}
Zuk O, Hechter E, Sunyaev SR, Lander ES (2012) The mystery of missing
  heritability: Genetic interactions create phantom heritability.
\newblock {\em Proceedings of the National Academy of Sciences}
  109(4):1193--1198.

\bibitem{basu2018iterative}
Basu S, Kumbier K, Brown JB, Yu B (2018) iterative random forests to discover
  predictive and stable high-order interactions.
\newblock {\em Proceedings of the National Academy of Sciences} p. 201711236.

\bibitem{kumbier2018refining}
Kumbier K, Basu S, Brown JB, Celniker S, Yu B (2018) Refining interaction
  search through signed iterative random forests.
\newblock {\em arXiv preprint arXiv:1810.07287}.

\bibitem{holmes2013identification}
Holmes JF, Lillis K, Monroe, David~Borgialli D, Kerrey BT, , et~al. (2013)
  Identifying children at very low risk of clinically important blunt abdominal
  injuries.
\newblock {\em Annals of emergency medicine} 62(2):107--116.

\bibitem{klusowski2021universal}
Klusowski JM (2021) Universal consistency of decision trees in high dimensions.
\newblock {\em arXiv preprint arXiv:2104.13881}.

\bibitem{bennett1971diabetes}
Bennett P, Burch T, Miller M (1971) Diabetes mellitus in american (pima)
  indians.
\newblock {\em The Lancet} 298(7716):125--128.

\bibitem{meyer1973generalized}
Meyer, Jr CD (1973) Generalized inversion of modified matrices.
\newblock {\em Siam journal on applied mathematics} 24(3):315--323.

\end{thebibliography}
}
\appendix

\counterwithin{figure}{section}
\counterwithin{table}{section}
\renewcommand{\thetable}{S\arabic{table}}
\renewcommand{\thefigure}{S\arabic{figure}}
\renewcommand{\thesection}{S\arabic{section}}

\newpage
\onecolumn


\begin{center}
    \Huge
    \textbf{Supplement}
\end{center}

The supplement contains further details of our investigation into FIGS.
In \cref{sec:run_time_extensions_supp}, 
we discuss computational issues, and further possible extensions of the \method~algorithm. 
In  \cref{sec:sim_results_supp}, we perform a number of synthetic simulations that examine how \method~and Bagging-\method~can adapt to a number of data  generating processes in comparison to other methods. In \cref{sec:repeated_splits_supp}, we provide empirical evidence of disentanglement by showing \method~avoids repeated splits on a number of datasets. In \cref{sec:g_figs_supp}, we provide an extended description of G-\method. \cref{sec:cdi_results_supp} shows the CDIs learnt by G-\method~for the IAI, TBI, and CSI datasets. \cref{sec:cdi_results_supp} also provides extended details on data cleaning for the the IAI, TBI, and CSI datasets, hyper-parameter selection for G-\method, as well as extended results for G-\method. Further, \cref{sec:cdi_results_supp} includes a stability analysis of \methodabbrv~on the CSI dataset. Finally, \cref{sec:theory_supp} includes theoretical investigations into \method~and tree-sum models. 

\section{\methods run-time analysis and extensions}
\label{sec:run_time_extensions_supp}

\textbf{\methods run-time analysis}. \textit{The run time complexity for \method~to grow a model with $m$ splits in total is $O(dm^2n^2)$, where $d$ the number of features, and $n$ the number of samples.}

\begin{proof}
    Each iteration of the outer loop adds exactly one split, so it suffices to bound the running time for each iteration, where it is clear that the cost is dominated by the operation $\texttt{split}$ in~\cref{alg:method} line 9, which takes $O(n^2 d)$. This is because there are at most $n d$ possible splits, and it takes $O(n)$ time to compute the impurity decrease for each of these.
    Consider iteration $\tau$, in which we have a \method~model $f$ with $\tau$ splits. 
    Suppose $f$ comprises $k$ trees in total, with tree $i$ having $\tau_i$ splits, so that $\tau = \tau_1 + \ldots + \tau_k$.
    The total number of potential splits is equal to $l+1$, where $l$ is the total number of leaves in the model.
    The number of leaves for tree $i$ is $\tau_i + 1$, so the total number of leaves in $f$ is
    $$
    l = \sum_{i=1}^k (\tau_i+1) = \tau + k.
    $$
    Since each tree has at least one split, we have $k \leq \tau$, so that the number of potential splits is at most $2\tau+1$
    The total time complexity is therefore
    $$
    \sum_{t=1}^m (2\tau+1) \cdot O(n^2 d) = O(m^2n^2d).
    $$
\end{proof}

\noindent \textbf{\methods extensions: Updating leaf values after tree structures are fixed.} We can continue to update the leaf values of trees in the \methods model after the stopping condition has been reached and the tree structures are fixed.
To do this, we perform backfitting, i.e. we cycle through the trees in the model several times, and at each iteration, update the leaf values of a given tree to minimize the sum of squared residuals of the full model.
This can be seen to be equivalent to block coordinate descent on a linear system, and so converges linearly, under regularity conditions, to the empirical risk minimizer among all functions that can be represented as sum of component functions which are each implementable by one of the tree structures.\footnote{We say that a function is implementable by a tree structure if it is constant on each of the leaves of the tree.}
In our simulations, this postprocessing step usually does not seem to change the leaf values too much, probably because at the moment the leaves are created, their initial values are already chosen to minimize the mean-squared-error.
Hence, we keep the step optional in our implementation of \method.

\section{FIGS Simulation Results.}
\label{sec:sim_results_supp}

We perform simulations to examine how well \methods and Bagging-FIGS adapt to different data generating processes in comparison to four other methods: CART, random forests (RF), XGBoost (XGB), and
generalized additive models (GAMs).



\paragraph{Generative models.}
Let $\bx$ be a random variable with distribution $\pi$ on $[0,1]^d$.
Here, we set $d=50$, let $\epsilon \sim N(0,0.01)$, with $\pi$ uniform on $[0,1]^{50}$ and investigate each of the following four regression functions:\\

    \indent (A) Linear model:\\ 
    \indent \quad $f(\bx) = \sum_{k=1}^{20} x_k + \epsilon$
    
    \indent (B) Single Boolean interaction model:\\ 
    \indent \quad $f(\bx) = \prod_{k=1}^8\indicator\braces*{x_k > 0.1} + \epsilon$
    
    \indent (C) Sum of polynomial interactions model:\\
    \indent\quad $f(\bx) = \sum_{k=1}^5 x_{3k-2}x_{3k-1}x_{3k} + \epsilon$
    
    \indent (D) Local spiky sparse model~\cite{behr2021provable}: \\
    \indent\quad $f(\bx) = \sum_{k=1}^5\indicator\braces*{x_{3k-2}, x_{3k-1}, x_{3k} > 0.5} + \epsilon$

\paragraph{Performance metrics.}
For each choice of regression function, we vary the training set size in a geometrically-spaced grid between 100 and 2500.
For each training set, we fit all five algorithms, and compute their noiseless test MSEs on a common test set of size 500.
The entire experiment is repeated 10 times, and the results averaged across the replicates.

\paragraph{Simulation results.}
\methods and Bagging-\method~predicts well across all four generative models, either performing best or very close to the best amongst all methods compared in moderate sample sizes (\cref{fig:more_sims}). 
All other models suffer from weaknesses:
PIRLS performs best for the linear generative model (A), but performs poorly whenever there are interactions present, i.e., for (B), (C) and (D).
Tree ensembles (XGBoost and RF) perform well when there are interactions but fail to fit the linear generative model.

\begin{figure*}[ht]
    \centering
    {
    \small
    \setlength\tabcolsep{0 pt}
    \begin{tabular}{cccc}
        
         (A) Linear model & (B) Single interaction & (C) Sum of polynomial interactions & (D) Local spiky sparse model \\
         \includegraphics[trim = {0.2cm 0 0.3cm 0},clip,width=0.25\textwidth]{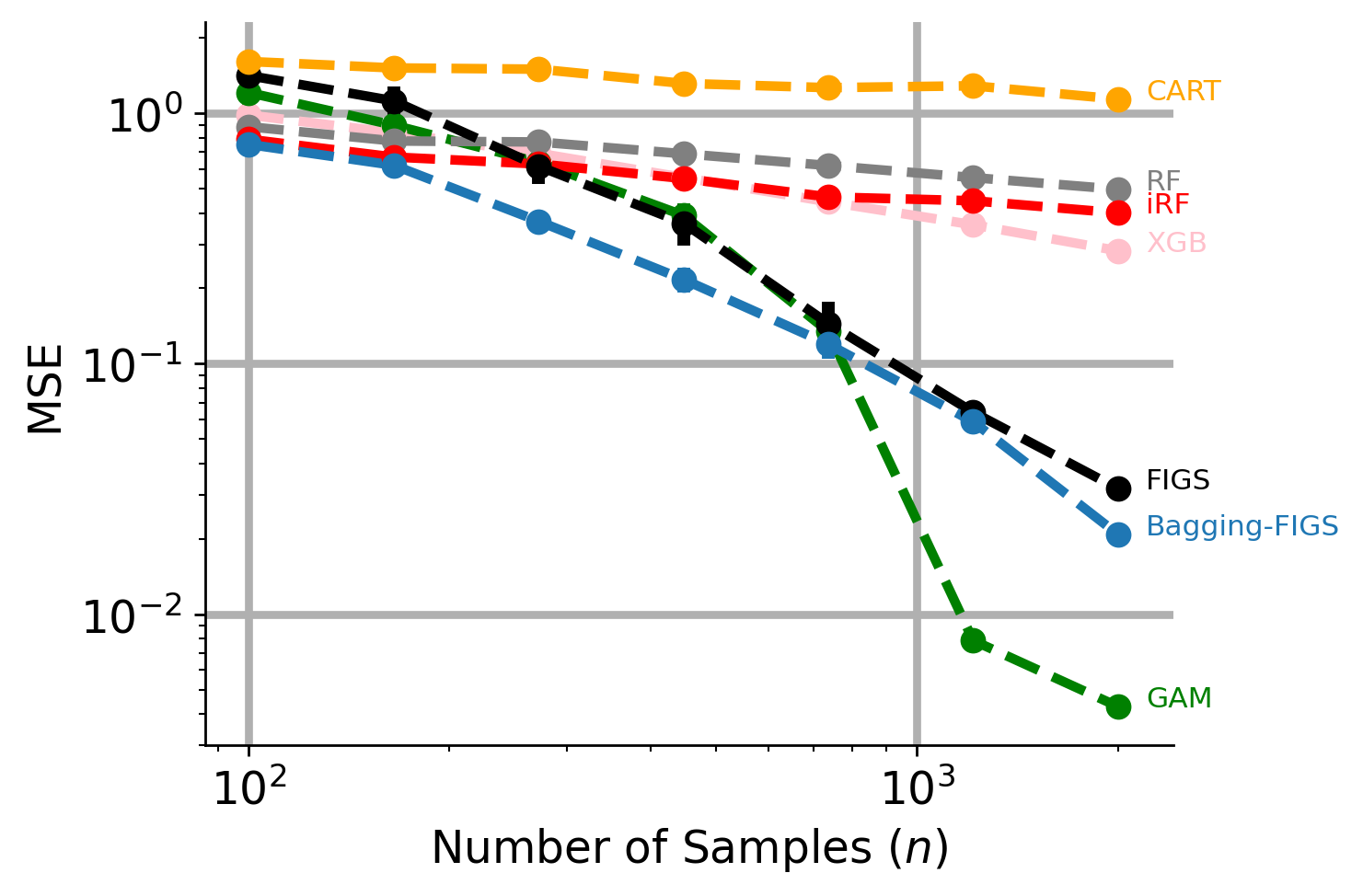} &  \includegraphics[trim = {0.2cm 0 0.4cm 0},clip,width=0.25\textwidth]{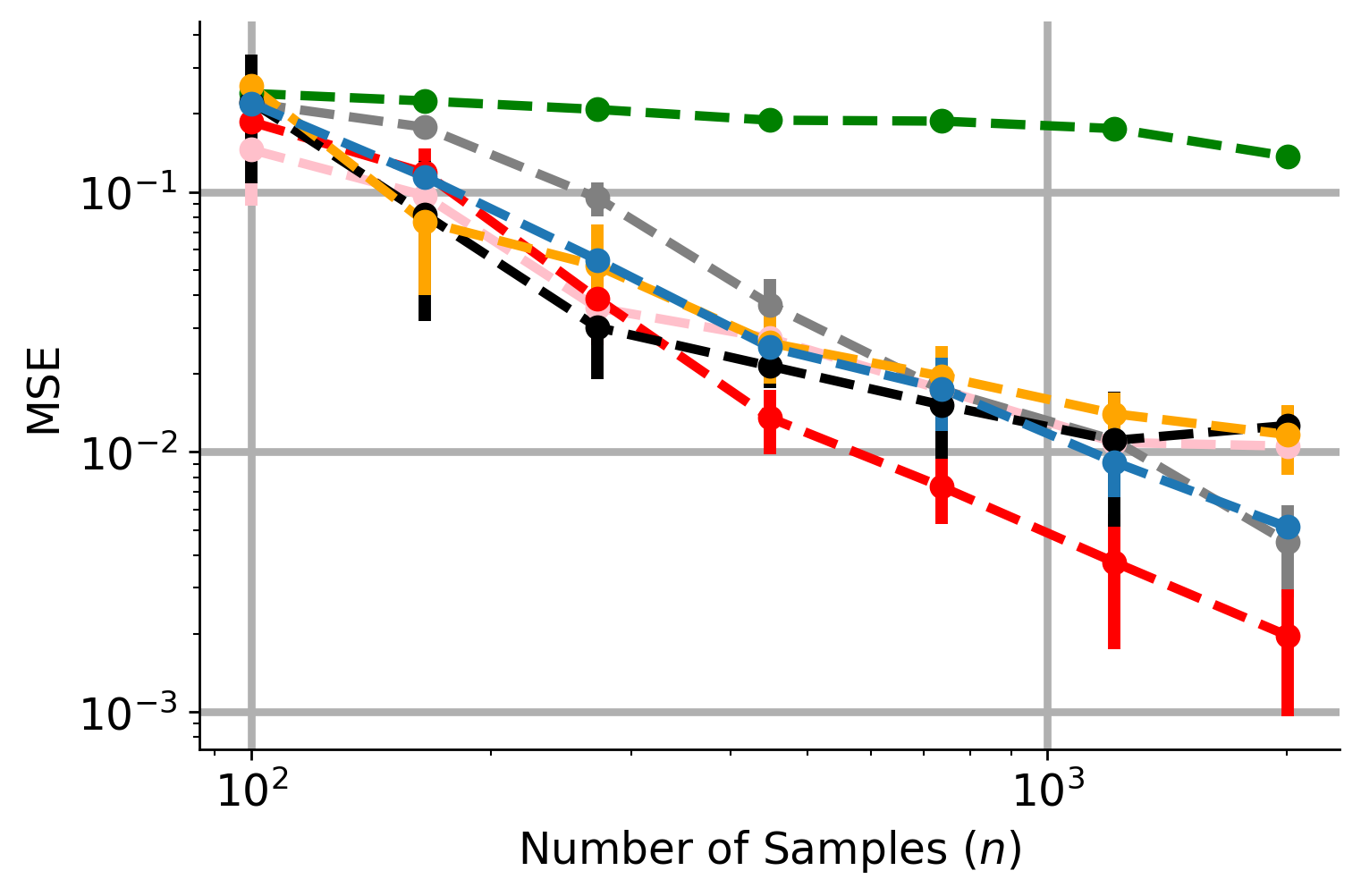} &
         \includegraphics[trim = {0.2cm 0 0.4cm 0},clip,width=0.25\textwidth]{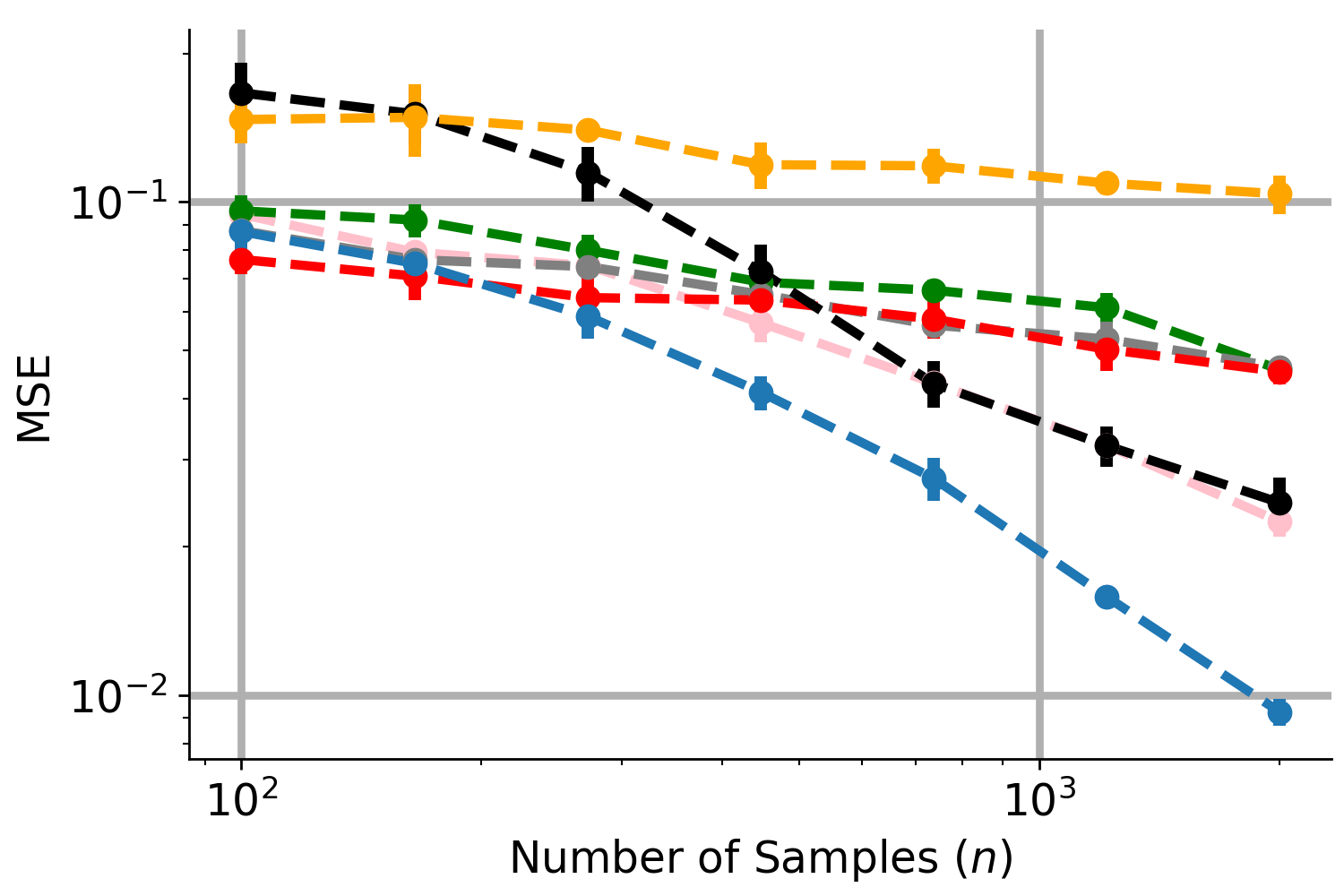} &  \includegraphics[trim = {0.2cm 0 0.4cm 0},clip,width=0.25\textwidth]{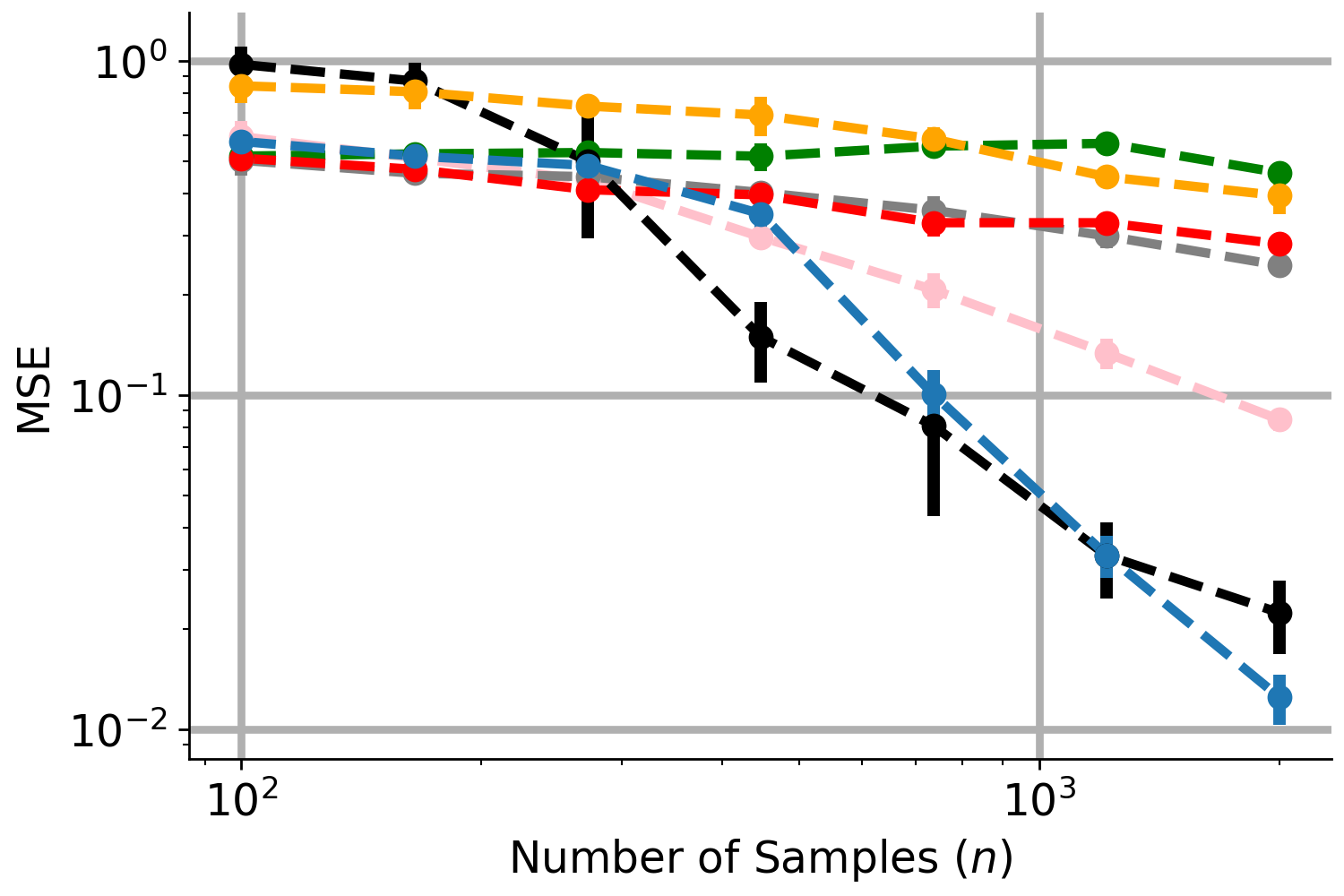}
    \end{tabular}
    \caption{\methods is able to adapt to generative models, handling both additive structure and interactions gracefully.
    In both (A) and (B), the generative model for $X$ is uniform with 50 features.
    Noise is Gaussian with mean zero and standard deviation 0.1 for training but no noise for testing.
    \textit{(A)}
    $Y$ is generated as linear model with sparsity 20 and coefficient 1.
    \textit{(B)} 
    $Y$ is generated from a single Boolean interaction model of order 8.
    \textit{(C)} 
    $Y$ is generated from a sum of 5 three-way polynomial interactions.
    \textit{(D)} 
    $Y$ is generated from a sum of 5 three-way Boolean interactions.
    All results are averaged over 10 runs.
    }
    \label{fig:more_sims}
    }
\end{figure*}

\FloatBarrier
\section{Empirically learned number of trees and repeated splits for FIGS}
\label{sec:repeated_splits_supp}

\cref{fig:repeated_subtrees} investigates whether \methods avoids repeated splits. Specifically, \cref{fig:repeated_subtrees} shows the fraction of splits which are repeated within a learned model as a function of the total number of splits in the model.
We define a split to be repeated if the model contains another split using the same feature and a threshold whose value is within 0.01 of the original split's threshold.\footnote{This result is stable to reasonable variation in the choice of this threshold.}
\methods consistently learns fewer repeated splits than \textcolor{orange}{CART}, one signal that it is avoiding learning redundant subtrees by separately modeling additive components.
\cref{fig:repeated_subtrees} shows the largest three datasets studied.
Finally, \cref{fig:num_trees} shows the number of trees fitted by \methods for each dataset.

\begin{figure}[ht]
    \centering
    \includegraphics[width=0.45\textwidth]{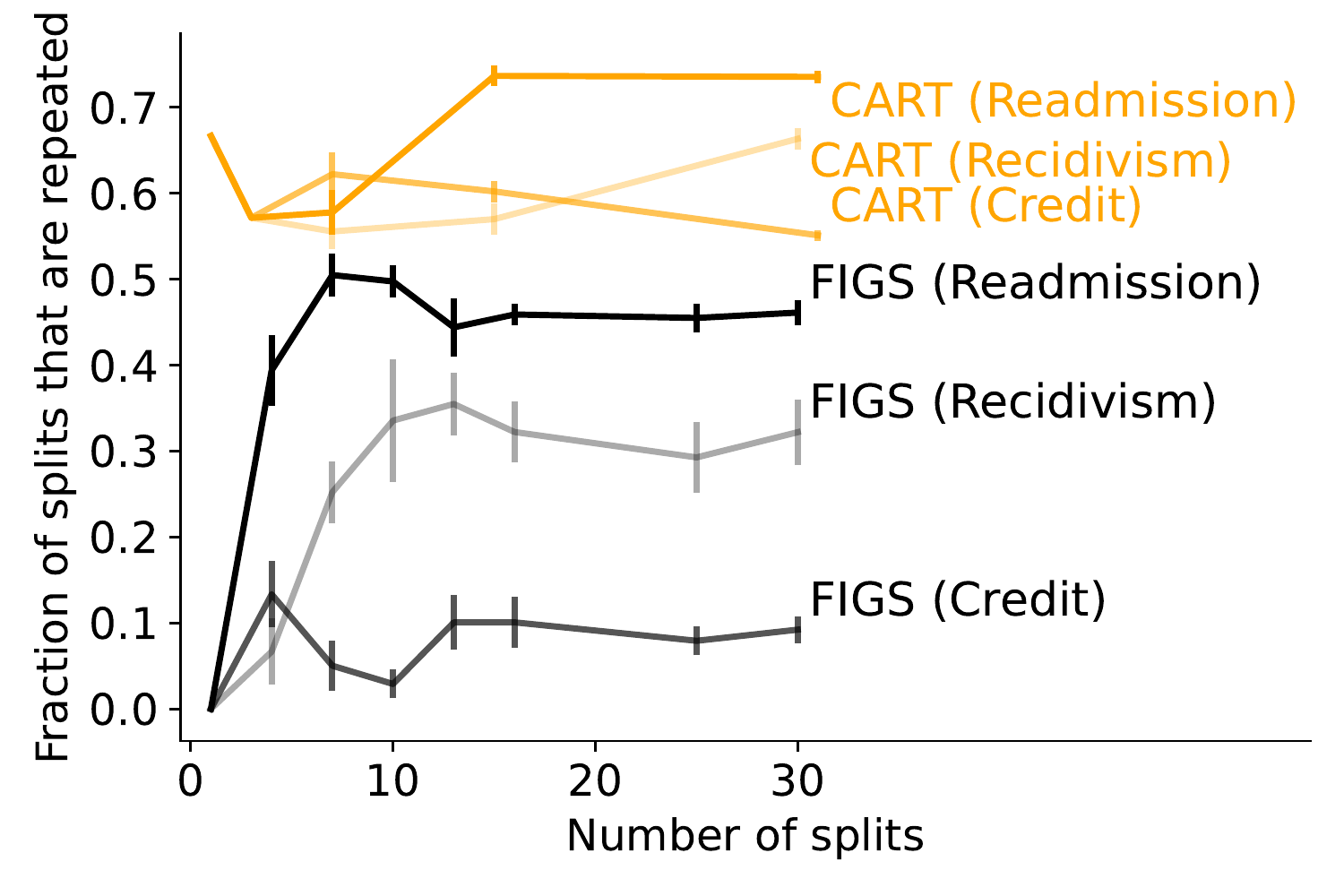}
    \caption{\methods learns less redundant models than CART.
    As a function of the number of splits in the learned model, we plot the fraction of splits
    repeated for the three largest datasets (all datasets studied show the same pattern).
    Error bars show standard error of the mean, computed over 6 random splits.}
    \label{fig:repeated_subtrees}
\end{figure}


\begin{figure}[H]
    \centering
    \includegraphics[width=0.45\columnwidth]{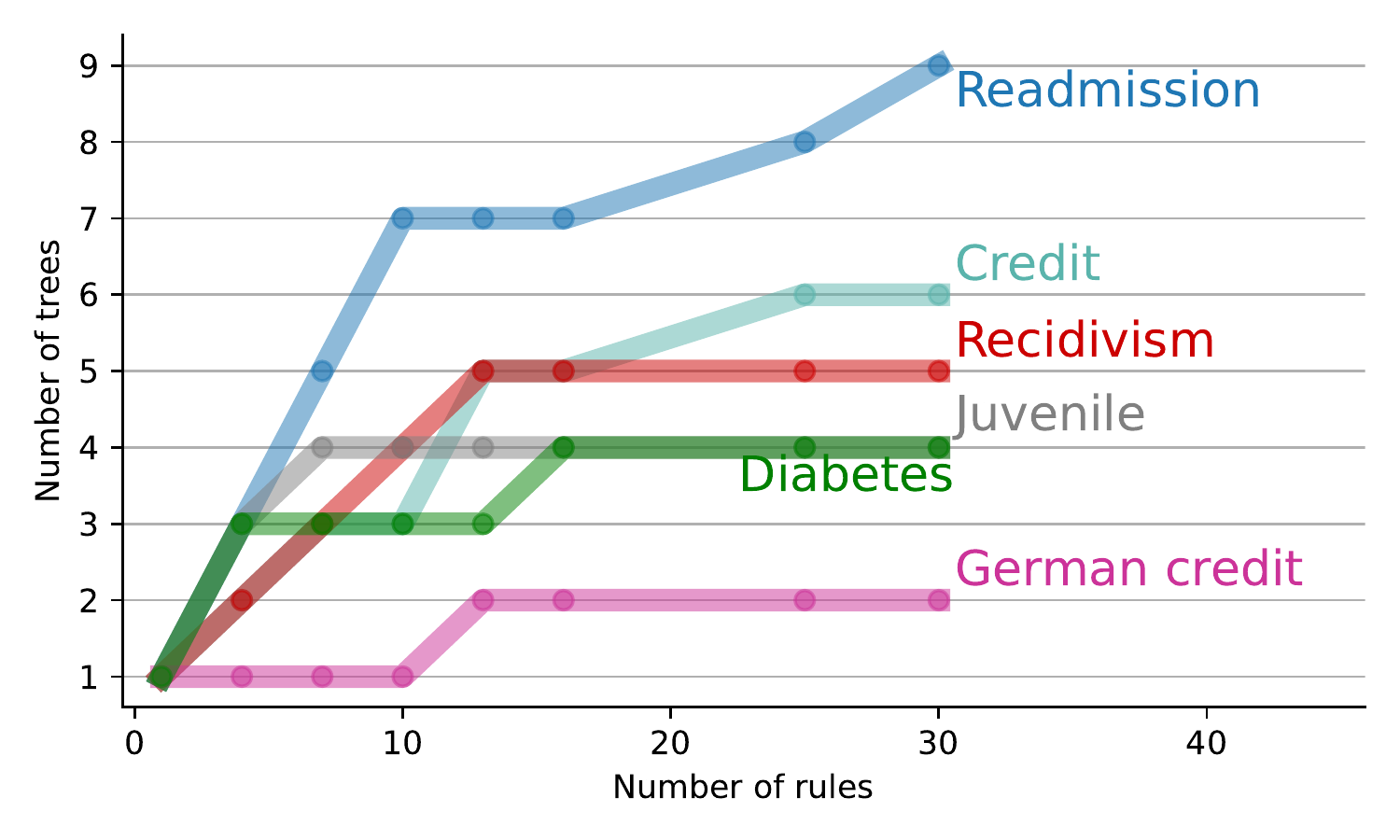}
    \caption{Number of trees learned as a function of the total number of splits in \methods for different classification datasets.}
    \label{fig:num_trees}
\end{figure}

\FloatBarrier
\section{G-FIGS Extended Description}
\label{sec:g_figs_supp}

Group probability-weighted FIGS (\methodabbrv) aims to tackle two challenges: (1) sharing data across heterogenous groups in the input data and (2) ensuring the interpretability of the final output.

\begin{figure}[h]
    \centering
    \includegraphics[width=.7\textwidth]{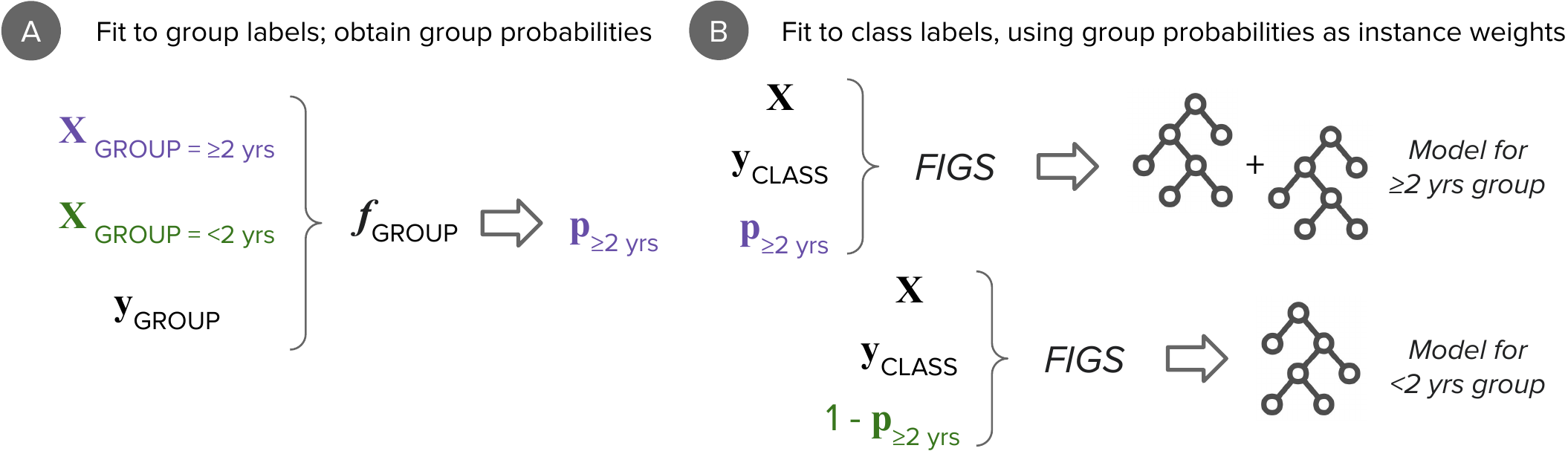}
    \caption{Overview of \methodabbrv. \textbf{(A)} First, the covariates of each instance in a dataset are used to estimate an instance-specific probability of membership in each of the pre-specified groups in the data (e.g., patients of age ${<}2$ years and ${\ge}2$ years). \textbf{(B)} Next, these membership probabilities are used as instance weights when fitting an interpretable model for each group.}
    \label{fig:intro_gfigs}
\end{figure}

\paragraph{Setup.}
We assume a supervised learning setting (classification or regression) with features $X$ (e.g., \textit{blood pressure}, \textit{signs of vomiting}), and an outcome $Y$ (e.g., \textit{cervical spine injury}).
We are also given a group label $G$, which is specified using the context of the problem and domain knowledge; for example, $G$ may correspond to different sites at which data is collected, different demographic groups which are known to require different predictive models, or data before/after a key temporal event.
$G$ should be discrete, as \methodabbrv~will produce a separate model for each unique value of $G$, but may be a discretized continuous or count feature.

\paragraph{Fitting group membership probabilities.}
The first stage of \methodabbrv~fits a classifier to predict group membership probabilities $\P(G|X)$ (\cref{fig:intro_gfigs}A).\footnote{In estimating $\P(G=g|X)$, we exclude features that trivially identify $G$ (e.g., we exclude age when values of $G$ are age ranges).}
Intuitively, these probabilities inform the degree to which a given instance is representative of a particular group; the larger the group membership probability, the more the instances should contribute to the model for that group.
Any classifier can be used; we find that logistic regression and gradient-boosted decision trees perform best.
The group membership probability classifier can be selected using cross-validation, either via group-label classification metrics or downstream performance of the weighted prediction model; we take the latter approach.

\paragraph{Fitting group probability-weighted FIGS.}
In the second stage (\cref{fig:intro_gfigs}B), for each group $G=g$, \methodabbrv~uses the estimated group membership probabilities, $\P(G=g|X)$, as instance weights in the loss function of a ML model for each group $\P(Y|X, G=g)$.
Intuitively, this allows the outcome model for each group to use information from out-of-group instances when their covariates are sufficiently similar.
While the choice of outcome model is flexible, we find that FIGS performs best when both interpretability and high predictive performance are required.\footnote{When interpretability is not critical, the same weighting procedure could also be applied to black-box models, such as Random Forest~\cite{breiman2001random}.} By greedily fitting a sum of trees, FIGS effectively allocates a small budget of splits to different types of structure in data.

\section{CDI Results}
\label{sec:cdi_results_supp}

\subsection{Fitted clinical-decision instruments}
\label{sec:learned_cdis_supp}

The fitted clinical-decision instruments (CDIs) for the IAI, TBI, and CSI datasets from PECARN are shown in \cref{fig:model_ex_iai}, \cref{fig:model_ex_tbi}, and \cref{fig:model_ex_csi_rep} respectively.

\begin{figure}[h!]
    \centering
    \includegraphics[width=0.75\textwidth]{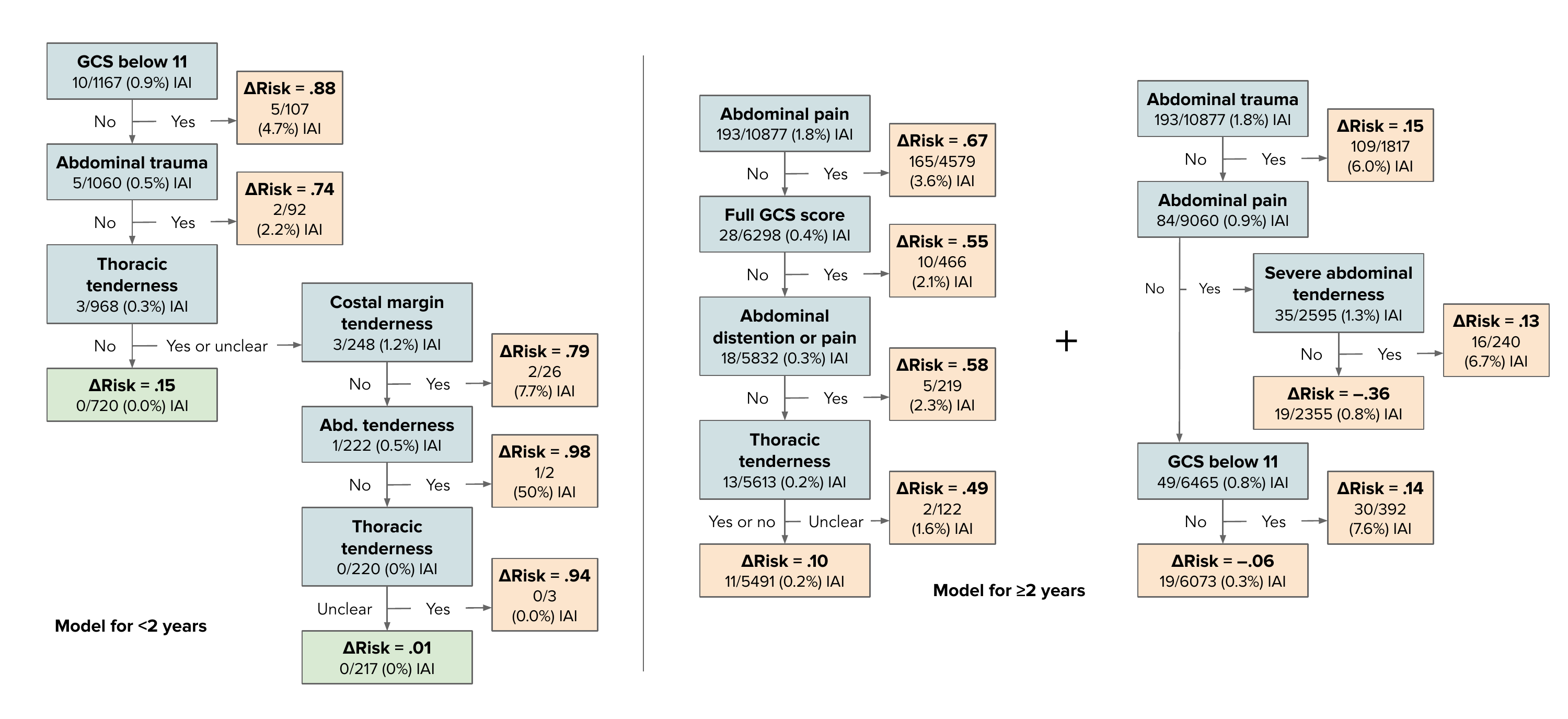}
    \caption{\methodabbrv~model fitted to the IAI dataset. Note that the younger group only uses \textit{tenderness}, which can evaluated without verbal input from the patient, whereas the older group uses \textit{pain}, which requires a verbal response. Achieves 95.1\% sensitivity and 50.8\% specificity (training).}
    \label{fig:model_ex_iai}
\end{figure}

\begin{figure}[h]
    \centering
    \includegraphics[width=0.6\textwidth]{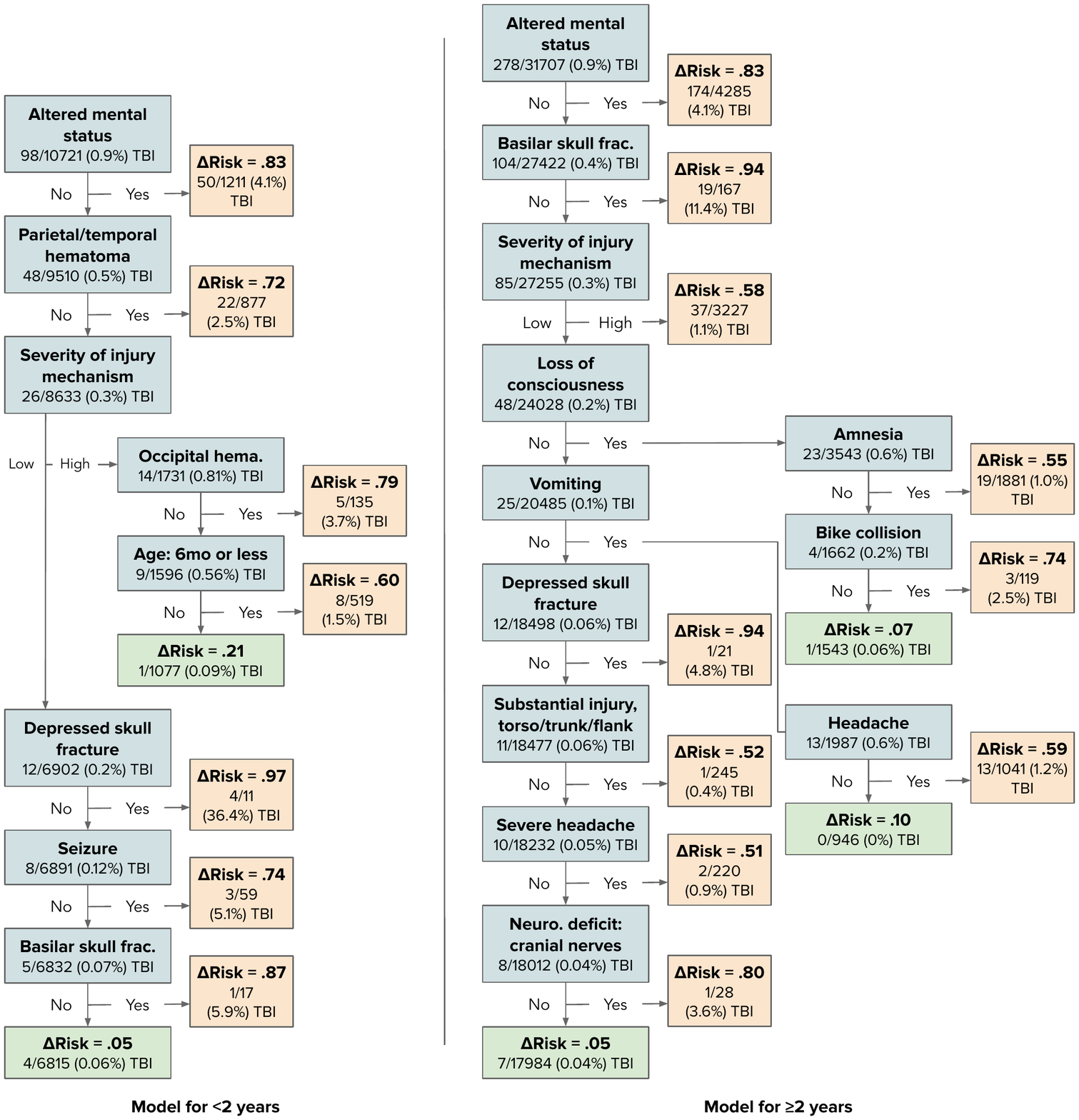}
    \caption{\methodabbrv~model fitted to the TBI dataset. Interestingly, in this case \methodabbrv~learns only a single tree for each group. Note that the model for the older group utilizes the \textit{Headache} and \textit{Severe Headache} features, which require a verbal response. Achieves 97.1\% sensitivity and 58.9\% specificity (training).}
    \label{fig:model_ex_tbi}
\end{figure}

\begin{figure}[H]
    \centering
    \includegraphics[width=0.5\textwidth]{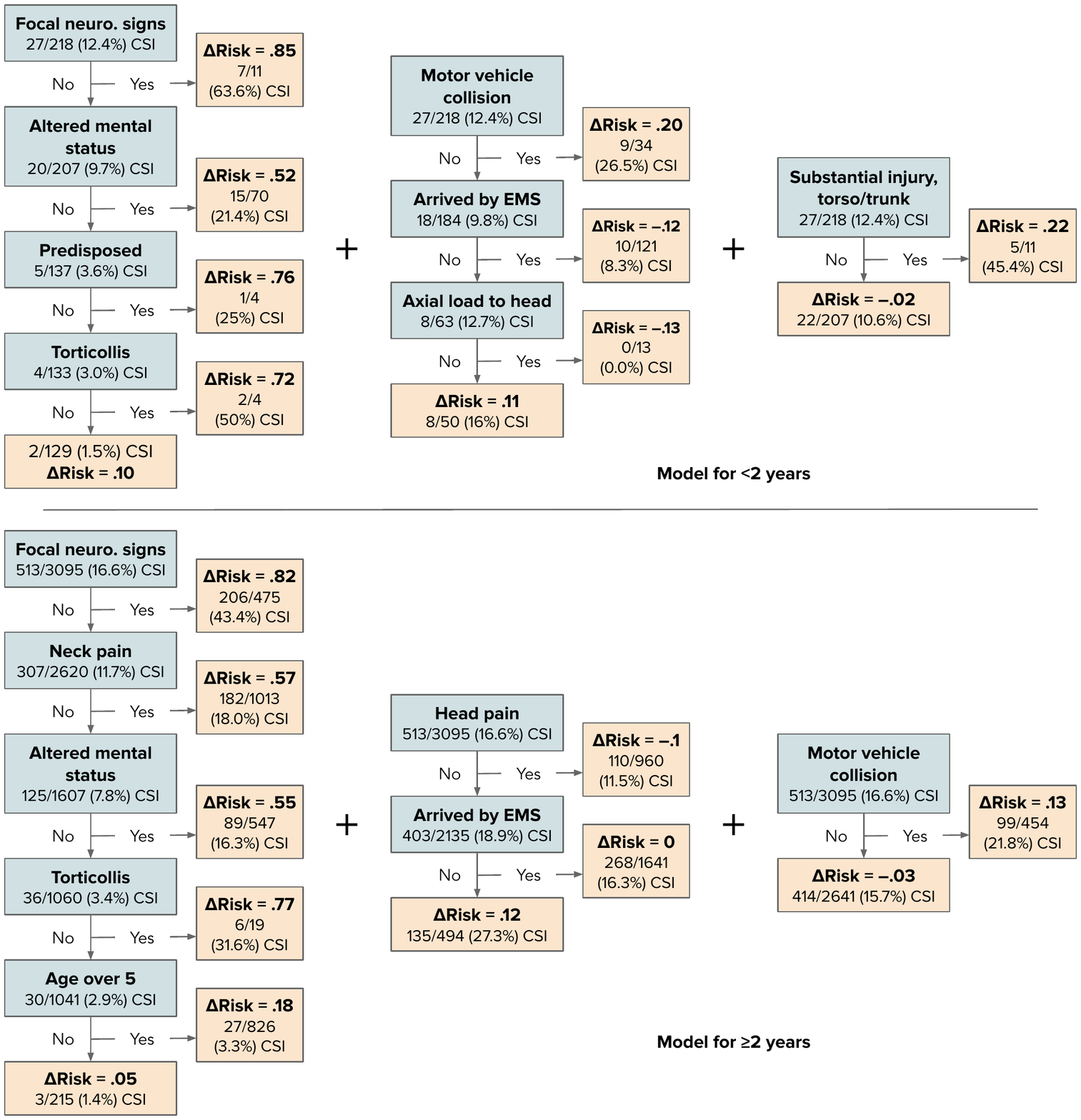}
    \caption{\methodabbrv~model fitted to the CSI dataset. Achieves 97.0\% sensitivity and 33.9\% specificity (training). The left tree for ${<}2$ years gives large $\Delta$ Risk to active features, and on its own provides sensitivity of 99\%.
    Counterintuitively, the middle tree assigns $\Delta$ Risk $<0$ for patients arriving by ambulance (\textit{EMS}) or with head injuries that affect the spine (\textit{axial load}).
    However, adding this second tree results in boosted specificity (increase of 8.7\%) with a tiny reduction in sensitivity (decrease of 0.4\%), indicating that \methodabbrv~adaptively tunes the sensitivity-specificity tradeoff.}
    \label{fig:model_ex_csi_rep}
\end{figure} 

\subsection{Interpreting the group-membership model}
\label{sec:interpreting_group_membership_supp}
Recall that fitting \methodabbrv~requires two steps (see \cref{sec:g_figs_supp}): (1) fitting a group-membership model to obtain sample weights and then (2) using these sample weights to fit a FIGS model.
In this section, we interpret the fitted group-membership models.
In the clinical context, we begin by fitting several logistic regression and gradient-boosted decision tree group membership models to each of the training datasets~(\cref{tab:cdr_datasets}) to predict whether a patient is in the ${<}2$ years or ${\ge}2$ years age group.
For the instance-weighted methods, we treat the choice of group membership model as a hyperparameter, and select the best model according to the downstream performance of the final decision instrument on the validation set. 

\cref{tab:prop_model_variables} shows the coefficients of the most important features for each logistic regression group membership model when predicting whether a patient is in the ${\ge}2$ years age group.
The coefficients reflect existing medical expertise. For example, the presence of verbal response features (e.g., \textit{Amnesia}, \textit{Headache}) increases the probability of being in the ${\ge}2$ years group, as does the presence of
activities not typical for the ${<}2$ years age group (e.g. \textit{Bike injury}).

\begin{table}[htbp]
    \centering
    \footnotesize
    \begin{tabular}{lrlrlr}
\\
    \toprule
    \multicolumn{2}{c}{Traumatic brain injury} 
        & \multicolumn{2}{c}{Cervical spine injury} 
        & \multicolumn{2}{c}{Intra-abdominal injury}\\
     \cmidrule(lr){1-2}
     \cmidrule(lr){3-4}
     \cmidrule(lr){5-6}
     Variable & Coefficient & Variable & Coefficient & Variable & Coefficient\\
     \midrule
     No fontanelle bulging & 3.62 & Neck tenderness & 2.44 & Bike injury & 2.01 \\
     Amnesia & 2.07 & Neck pain & 2.18 & Abdomen pain & 1.66 \\
     Pedestrian struck by vehicle  & 1.44 & Motor vehicle injury: other & 1.54 & Thoracic tenderness & 1.43 \\
     Headache & 1.39 & Hit by car & 1.47 & Hypotension & 1.23 \\
     Bike injury & 1.26 & Substantial injury: extremity & 1.35 & No abdomen pain & 0.98 \\
    \bottomrule
\end{tabular}
    \vspace{-5pt}
    \caption{Logistic regression coefficients for features that contribute to high $\P( \text{age} \ {\ge}2 \ \text{years} \ | \ X)$ reflect  known medical knowledge. For example, features with large coefficients require verbal responses (e.g., \textit{Amnesia}, \textit{Headache}, \textit{Pain}), relate to activities not typical for the ${<}2$ years age group (\textit{Bike injury}), or are specific to older children, e.g., older children should have \textit{No fontanelle bulging}, as cranial soft spots typically close by 2 to 3 months after birth.}
    \label{tab:prop_model_variables}
\end{table}

\subsection{Extended clinical-decision results}
\label{sec:extended_crd_results_supp}

In \cref{tab:cdr_results_young} and \cref{tab:cdr_results_old}, we report the results of G-\methods and other baselines for all data-sets for the  $<2$ and $\geq2$ years age group separately. As seen in the results, we see that G-\methods improve upon the baseline methods and \methods particularly at high levels of sensitivities.
Additionally, in \cref{tab:results_extended}, we include the results from above with their standard errors, as well as additional metrics (AUC and F1 score) for each dataset.

\begin{table}[H]
    \centering
    \footnotesize
        \begin{tabular}{lrrrrrrrrrrrr}
    \toprule
    {} & \multicolumn{4}{c}{Cervical spine injury} &
        \multicolumn{4}{c}{Traumatic Brain Injury} 
        & \multicolumn{4}{c}{Intra-abdominal injury} \\
     \cmidrule(lr){2-5}
     \cmidrule(lr){6-9}
     \cmidrule(lr){10-13}
     Sensitivity level: & 92\% & 94\% & 96\% & 98\% & 92\% & 94\% & 96\% & 98\% & 92\% & 94\% & 96\% & 98\% \\
     \midrule
    TAO & 45.8 & 45.8 & 45.8 & 45.8
        & 7.7 & 7.7 & 0.0 & 0.0 
        & 33.3  & 33.3 & 33.3 & 6.7 \\
     TAO-SEP & 0.0 & 0.0  & 0.0 & 0.0
        & 20.6  & 14.3  & 8.3 & 8.3 
        & 0.0  & 0.0 & 0.0 & 0.0 \\
    CART & 45.8 & 45.8 & 45.8 & 45.8
        & 19.0 & 19.0 & 7.1 & 1.2
        & 29.6 & 29.6 & 29.6 & 29.6 \\
     CART-SEP & 0.0 & 0.0 & 0.0 &  0.0
        & 20.6 & 14.3 & 8.3 & 8.3 
        & 0.0 & 0.0 & 0.0 & 0.0 \\
   G-CART & 36.4 & 36.4 & 36.4 &  36.4
        & 16.1 & 15.6 & 8.7 & \textbf{8.7} 
        & 4.4 & 4.4 & 4.4 & 4.4 \\
     \textbf{FIGS} & 56.1 &  56.1 &  56.1 &  56.1
        & \textbf{36.3} & \textbf{30.3} & 5.9 & 0.1   
        & \textbf{39.4} & \textbf{39.4} &\textbf{39.4}  & \textbf{39.4}   \\
     \textbf{FIGS-SEP} & 6.9 & 6.9 &  6.9 &  6.9 
        & 31.1 & 25.1 & 13.0 & 6.7 
        & 9.0 & 9.0 & 9.0 &  9.0 \\
     \textbf{\methodabbrv} & \textbf{65.9} & \textbf{65.9} & \textbf{65.9} & \textbf{65.9} 
        & 17.2  & 17.2 & \textbf{13.7} & 7.5
        & 24.3 & 24.3 &  24.3 & 24.3 \\
    \bottomrule
    \end{tabular}

        \caption{$<2$ years age group test set prediction results averaged over 10 random data splits. Values in columns labeled with a sensitivity percentage (e.g. 92\%) are best specificity achieved at the given level of sensitivity or greater.
        }
    \label{tab:cdr_results_young}
\end{table}

\begin{table}[H]
    \centering
    \footnotesize
        \begin{tabular}{lrrrrrrrrrrrr}
    \toprule
    {} & \multicolumn{4}{c}{Cervical spine injury} &
        \multicolumn{4}{c}{Traumatic Brain Injury} 
        & \multicolumn{4}{c}{Intra-abdominal injury} \\
     \cmidrule(lr){2-5}
     \cmidrule(lr){6-9}
     \cmidrule(lr){10-13}
     Sensitivity level: & 92\% & 94\% & 96\% & 98\% & 92\% & 94\% & 96\% & 98\% & 92\% & 94\% & 96\% & 98\% \\
     \midrule
    TAO & 39.5 & 20.1 & 0.2 & 0.2
        & 12.2 & 6.1 & 6.1 & 0.3
        & 0.3  & 0.3 & 0.0 & 0.0 \\
     TAO-SEP & 33.6 & 17.9  & 3.5 & 1.4
        & 19.8  & 13.4  & 7.4 & 0.5 
        & 10.2  & 5.5 & 4.2 & 1.3 \\
    CART & 22.0 & 15.4 & 14.8 & 2.1
        & 19.0 & 19.0 & 7.1 & 1.2
        & 7.6 & 2.8 & 1.6 & 1.3 \\
     CART-SEP & 33.0 & 17.3 & 2.8 &  1.4
        & 19.7 & 13.4 & 7.3 &  0.8
        & 9.0 & 4.2 & 4.2 & 1.3 \\
    G-CART & 37.3 & 16.2 & 1.9 &  1.7
        & 19.6 & 7.2 & 7.2 &  0.8
        & 14.7 & 10.5 & 3.9 & 0.7 \\
     \textbf{FIGS} & 37.8 &  33.6 &  \textbf{25.5} &  \textbf{14.1}
        & 24.5 & 18.1 & 18.1 & 0.5   
        & 24.9 & 14.6 & 1.4  & 0.0   \\
     \textbf{FIGS-SEP} & 39.5 & \textbf{33.8} &  22.0 &  11.6 
        & 25.3 & 20.3 & 18.7 & \textbf{6.3} 
        & \textbf{28.0} & 19.0 & 9.2 &  0.5 \\
     \textbf{\methodabbrv} & \textbf{40.7} & 33.5 & 23.8 & 13.5
        & \textbf{44.1}  & \textbf{31.5} & \textbf{19.5} & 5.3
        & 27.9 & \textbf{22.1} &  \textbf{13.8} & \textbf{2.4} \\
    \bottomrule
    \end{tabular}

        \caption{$>2$ years age group test set prediction results averaged over 10 random data splits. Values in columns labeled with a sensitivity percentage (e.g. 92\%) are best specificity achieved at the given level of sensitivity or greater. 
        }
    \label{tab:cdr_results_old}
\end{table}

\begin{table}[H]
    \centering
    \footnotesize
    \begin{tabular}{lrrrrrrrrr}
    \toprule
    {} & \multicolumn{6}{c}{Traumatic brain injury} &
        \multicolumn{3}{c}{Cervical spine injury} \\
     \cmidrule(lr){2-7}
     \cmidrule(lr){8-10}
     {} & 92\% & 94\% & 96\% & 98\% & ROC AUC & F1
     & 92\% & 94\% & 96\% \\
     \midrule
    TAO & 6.2 (5.9) & 6.2 (5.9) & 0.4 (0.4) & 0.4 (0.4) &  .294 (.05) &  5.2 (.00) 
        & 41.5 (0.9) & 21.2 (6.6) & 0.2 (0.2)   \\
     TAO-SEP & 26.7 (6.4) & 13.9 (5.4) & 10.4 (5.5) & 2.4 (1.5) &  .748 (.02) &  \textbf{5.8} (.00)
        & 32.5 (4.9) & 7.0 (1.6) & 5.4 (0.7)  \\
    CART & 20.9 (8.8) & 14.8 (7.6) & 7.8 (5.8) & 2.1 (0.6) &  .702 (.06) &  5.7 (.00)
        & 38.6 (3.6) & 13.7 (5.7) & 1.5 (0.6)  \\
     CART-SEP & 26.6 (6.4) &  13.8 (5.4) &  10.3 (5.5) &  2.4 (1.5) &  .753 (.02) &  5.6 (.00)
        & 32.1 (5.1) & 7.8 (1.5) & 5.4 (0.7)  \\
    G-CART & 15.5 (5.5) &  13.5 (5.7) &   6.4 (2.2) &  3.0 (1.5) &  \textbf{.758} (.01) &  5.5 (.00)
        &  38.5 (3.4) &  15.2 (4.8) & 4.9 (1.0) \\
     FIGS & 23.8 (9.0) &  18.2 (8.5) &  12.1 (7.3) &  0.4 (0.3) &  .380 (.07) &  4.8 (.00)
        & 39.1 (3.0) & 33.8 (2.4) & 24.2 (3.2)   \\
     FIGS-SEP & 39.9 (7.9) & 19.7 (6.8) &  \textbf{17.5} (7.0) &  2.6 (1.6) &  .619 (.05) &  5.1 (.00)
        & 38.7 (1.6) & 33.1 (2.0) & 20.1 (2.6)  \\
     \textbf{\methodabbrv} & \textbf{42.0} (6.6) &  \textbf{23.0} (7.8) & 14.7 (6.5) &  \textbf{6.4} (2.8) &  .696 (.04) &  4.7 (.00)
        & \textbf{42.2} (1.3) & \textbf{36.2} (2.3) & \textbf{28.4} (3.8)  \\
    \midrule
\end{tabular}


\begin{tabular}{lrrrrrrrrr}
    {} & \multicolumn{3}{c}{Cervical spine injury (cont.)}
       & \multicolumn{6}{c}{Intra-abdominal injury}\\
     \cmidrule(lr){2-4}
     \cmidrule(lr){5-10}
     {} & 98\% & ROC AUC & F1 & 92\% & 94\% & 96\% & 98\% & ROC AUC & F1\\
     \midrule
     TAO & 0.2 (0.2) &  .422 (.04) &  44.5 (.01)
        & 0.2 (0.2) & 0.2 (0.2) & 0.0 (0.0) & 0.0 (0.0) & .372 (.04) &  \textbf{13.9} (.01) \\
     TAO-SEP & 2.5 (1.0) &  .702 (.01) &  44.4 (.01)
        & 12.1 (1.7) & 8.5 (2.0) & 2.0 (1.3) & 0.0 (0.0) & .675 (.01) &  12.9 (.00) \\
     CART & 1.1 (0.4) &  .617 (.06) &  \textbf{45.8} (.01)
        & 11.8 (5.0) & 2.7 (1.0) & 1.6 (0.5) & 1.4 (0.5) &  .688 (.06) &  13.4 (.00) \\
     CART-SEP & 2.5 (1.0) &  .707 (.00) &  44.2 (.01)
        & 11.0 (1.6) & 9.3 (1.8) & 2.8 (1.4) & 0.0 (0.0) &  .688 (.01) &  13.0 (.01) \\
     G-CART & 3.9 (1.1)  &  \textbf{.751} (.01) &  45.2 (.01)
        & 11.7 (1.3) &  10.1 (1.6) & 3.8 (1.3) &  0.7 (0.4) &  \textbf{.732} (.02) &  12.5 (.01) \\
     FIGS & \textbf{16.7} (3.9) &  .664 (.03) &  43.0 (.01)
        & \textbf{32.1} (5.5) & 13.7 (6.0) & 1.4 (0.8) & 0.0 (0.0) &  .541 (.04) &   9.4 (.01)\\
     FIGS-SEP & 3.9 (2.2) &  .643 (.02) &  41.4 (.01)
        & 18.8 (4.4) &  9.2 (2.2) & 2.6 (1.7) &  0.9 (0.8) &  .653 (.02) &   8.0 (.00)\\
     \textbf{\methodabbrv} & 15.7 (3.9) &  .700 (.01) &  42.6 (.01)
        & 29.7 (6.9) &  \textbf{18.8} (6.6) & \textbf{11.7} (5.1) & \textbf{3.0} (1.3) &  .671 (.03) &   9.1 (.01) \\
    \bottomrule
\end{tabular}
        \caption{Test set prediction results averaged over 10 random data splits, with corresponding standard error in parentheses. Values in columns labeled with a sensitivity percentage (e.g. 92\%) are best specificity achieved at the given level of sensitivity or greater. \methodabbrv~provides the best performance overall in the high-sensitivity regime. G-CART attains the best ROC curves, while TAO is strongest in terms of F1 score. 
        }
    \label{tab:results_extended}
\end{table}

\subsection{Clinical data-preprocessing details}
\label{sec:data_preprocessing_supp}

\paragraph{Traumatic brain injury (TBI)}

To screen patients, we follow the inclusion and exclusion criteria from previous work \cite{kuppermann2009identification}, which excludes patients with Glasgow Coma Scale (GCS) scores under 14 or no signs or symptoms of head trauma, among other disqualifying factors.
No patients were dropped due to missing values: the majority of patients have about 1\% of features missing, and are at maximum still under 20\%. We utilize the same set of features as a previous study~\cite{kuppermann2009identification}.

Our strategy for imputing missing values differed between features according to clinical guidance. For features that are unlikely to be left unrecorded if present, such as paralysis, missing values were assumed to be negative. For other features that could be unnoticed by clinicians or guardians, such as loss of consciousness, missing values are assumed to be positive. For features that did not fit into either of these groups or were numeric, missing values are imputed with the median.

\paragraph{Cervical spine injury (CSI)}

\cite{leonard2019cervical} engineered a set of 22 expert features from 609 raw features; we utilize this set but add back features that provide information on the following:
\begin{itemize}
    \vspace{-5pt}
    \setlength\itemsep{0.1em}
    \item Patient position after injury
    \item Clinical intervention received by patients prior to arrival (immobilization, intubation)
    \item Pain and tenderness of the head, face, torso/trunk, and extremities
    \item Age and gender
    \item Whether the patient arrived by emergency medical service (EMS)
    \vspace{-5pt}
\end{itemize}

We follow the same imputation strategy described in the TBI paragraph above. Features that are assumed to be negative if missing include focal neurological findings, motor vehicle collision, and torticollis, while the only feature assumed to be positive if missing is loss of consciousness.

\paragraph{Intra-abdominal injury (IAI) }

We follow the data preprocessing steps described in \cite{holmes2013identification} and \cite{kornblith2022predictability}. In particular, all features of which at least 5\% of values are missing are removed, and variables that exhibit insufficient inter-rater agreement (lower bound of 95\% CI under 0.4) are removed.
The remaining missing values are imputed with the median.
In addition to the 18 original variables, we engineered three additional features:
\begin{itemize}
    \vspace{-5pt}
    \setlength\itemsep{0.1em}
    \item \textit{Full GCS score}: True when GCS is equal to     the maximum score of 15
    \item \textit{Abd. Distention or abd. pain}: Either         abdominal distention or abdominal pain 
    \item \textit{Abd. trauma or seatbelt sign}: Either         abdominal trauma or seatbelt sign
    \vspace{-5pt}
\end{itemize}

\paragraph{Data for predicting group membership probabilities}

The data preprocessing steps for the group membership models in the first step of \methodabbrv~are identical to that above, except that missing values are not imputed at all for categorical features, such that ``missing", or NaN, is allowed as one of the feature labels in the data. We find that this results in more accurate group membership probabilities, since for some features, such as those requiring a verbal response, missing values are predictive of age group.

Unprocessed data is available at \url{https://pecarn.org/datasets/} and clean data is available on github at 
\url{https://github.com/csinva/imodels-data} (easily accessibly through the imodels package~\cite{singh2021imodels}).

\begin{table}[H]
    \centering
    \footnotesize
    
\begin{tabular}{cc}

\begin{tabular}{lrr}
\toprule
\multicolumn{3}{c}{Traumatic brain injury} \\
\midrule
                Feature Name &  \% Missing & \% Nonzero \\
\midrule
    Altered Mental Status &              0.74 &     12.95 \\
    Altered Mental Status: Agitated &             87.05 &      1.79 \\
    Altered Mental Status: Other &             87.05 &      1.82 \\
    Altered Mental Status: Repetitive &             87.05 &      1.04 \\
    Altered Mental Status: Sleepy &             87.05 &      6.67 \\
    Altered Mental Status: Slow to respond &             87.05 &      3.22 \\
    Acting normally per parents &              7.09 &     85.38 \\
    Age (months) &              0.00 &       N/A \\
    Verbal amnesia &             38.41 &     10.45 \\
    Trauma above clavicles &              0.30 &     64.38 \\
    Trauma above clavicles: Face &             35.92 &     29.99 \\
    Trauma above clavicles: Scalp-frontal &             35.92 &     20.48 \\
    Trauma above clavicles: Neck &             35.92 &      1.38 \\
    Trauma above clavicles: Scalp-occipital &             35.92 &      9.62 \\
    Trauma above clavicles: Scalp-parietal &             35.92 &      7.79 \\
    Trauma above clavicles: Scalp-temporal &             35.92 &      3.39 \\
    Drugs suspected &              4.19 &      0.87 \\
    Fontanelle bulging &              0.37 &      0.06 \\
    Sex &              0.01 &       N/A \\
    Headache severity &              2.38 &       N/A \\
    Headache start time &              3.09 &       N/A \\
    Headache &             32.76 &     29.94 \\
    Hematoma &              0.69 &     39.42 \\
    Hematoma location &              0.47 &       N/A \\
    Hematoma size &              1.67 &       N/A \\
    Severity of injury mechanism &              0.74 &       N/A \\
    Injury mechanism &              0.67 &       N/A \\
    Intubated &              0.73 &      0.01 \\
    Loss of consciousness &              4.05 &     10.37 \\
    Length of loss of consciousness&              5.39 &       N/A \\
    Neurological deficit &              0.85 &       1.3 \\
    Neurological deficit: Cranial &             98.70 &      0.18 \\
    Neurological deficit: Motor &             98.70 &      0.28 \\
    Neurological deficit: Other &             98.70 &      0.71 \\
    Neurological deficit: Reflex &             98.70 &      0.03 \\
    Neurological deficit: Sensory &             98.70 &      0.26 \\
    Other substantial injury  &              0.43 &     10.07 \\
    Other substantial injury: Abdomen &             89.93 &      1.25 \\
    Other substantial injury: Cervical spine &             89.93 &      1.37 \\
    Other substantial injury: Cut &             89.93 &      0.12 \\
    Other substantial injury: Extremity &             89.93 &      5.49 \\
    Other substantial injury: Flank &             89.93 &      1.56 \\
    Other substantial injury: Other &             89.93 &      1.65 \\
    Other substantial injury: Pelvis &             89.93 &      0.44 \\
    Paralyzed &              0.75 &      0.01 \\
    Basilar skull fracture &              0.99 &      0.68 \\
    Basilar skull fracture: Hemotympanum &             99.32 &      0.35 \\
    Basilar skull fracture: CSF otorrhea &             99.32 &      0.04 \\
    Basilar skull fracture: Periorbital &             99.32 &      0.19 \\
    Basilar skull fracture: Retroauricular &             99.32 &      0.08 \\
    Basilar skull fracture: CSF rhinorrhea & 99.32 &      0.03 \\
    Skull fracture: Palpable &              0.24 &      0.38 \\
    Skull fracture: Palpable and depressed &             99.69 &      0.18 \\
    Sedated &              0.76 &      0.08 \\
    Seizure &              1.70 &      1.17 \\
    Length of seizure &              0.18 &       N/A \\
    Time of seizure &              0.12 &       N/A \\
    Vomiting &              0.71 &      13.1 \\
    Time of last vomit &             89.04 &       N/A \\
\end{tabular}
 &

\begin{tabular}{lrr}

Number of times vomited &              0.60 &       N/A \\
Vomit start time &              0.87 &       N/A \\
\midrule
\multicolumn{3}{c}{Intra-abdominal injury} \\
\midrule
    Abdominal distention &              4.38 &       2.3 \\
    Abdominal distention or pain &              0.00 &      4.93 \\
    Degree of abdominal tenderness &             70.13 &       N/A \\
    Abdominal trauma &              0.56 &     15.48 \\
    Abdominal trauma or seat belt sign & 0.00 &      16.3 \\
    Abdomen pain &             15.38 &     30.06 \\
    Age (years) &              0.00 &       N/A \\
    Costal margin tenderness &              0.00 &     11.33 \\
    Decreased breath sound &              1.93 &      2.13 \\
    Distracting pain &              7.38 &     23.29 \\
    Glasgow Coma Scale (GCS) score &              0.00 &       N/A \\
    Full GCS score &              0.00 &     86.21 \\
    Hypotension &              0.00 &      1.44 \\
    Left costal margin tenderness &              0.00 &       N/A \\
    Method of injury &              3.95 &       N/A \\
    Right costal margin tenderness &              0.00 &       N/A \\
    Seat belt sign &              3.30 &      4.93 \\
    Sex &              0.00 &       N/A \\
    Thoracic tenderness &              9.99 &     15.96 \\
    Thoracic trauma &              0.63 &     16.95 \\
    Vomiting &              3.92 &      9.57 \\
\midrule
\multicolumn{3}{c}{Cervical spine injury} \\
\midrule
                  Age (years) &              0.00 &       N/A \\
        Altered mental status &              2.05 &     24.72 \\
             Axial load to head &              0.00 &      24.0 \\
               Clotheslining &              3.38 &      0.94 \\
         Focal neurological findings &              9.84 &     14.67 \\
              Method of injury: Diving &              0.03 &       1.3 \\
                Method of injury: Fall &              2.44 &      3.83 \\
             Method of injury: Hanging &              0.03 &      0.15 \\
            Method of injury: Hit by car &              0.03 &     15.09 \\
                 Method of injury: Auto collision &              7.73 &     14.73 \\
             Method of injury: Other auto &              0.03 &      3.11 \\
                Arrived by EMS &              0.00 &     77.24 \\
                         Loss of consciousness  &              8.03 &     42.68 \\
                   Neck pain &              5.25 &     38.42 \\
       Posterior midline neck tenderness &              2.57 &     29.88 \\
                    Patient position on arrival &             61.52 &       N/A \\
                 Predisposed &              0.00 &      0.66 \\
              Pain: Extremity &             18.35 &     25.87 \\
             Pain: Face &             18.35 &      7.58 \\
             Pain: Head &             18.35 &     29.04 \\
       Pain: Torso/trunk &             18.35 &     28.95 \\
                Tenderness: Extremity &             20.37 &     15.15 \\
               Tenderness: Face &             20.37 &      3.83 \\
               Tenderness: Head &             20.37 &      7.79 \\
         Tenderness: Torso/trunk &             20.37 &     25.87 \\
                  Substantial injury: Extremity &              1.03 &     10.87 \\
                 Substantial injury: Face &              1.06 &      5.67 \\
                 Substantial injury: Head &              1.00 &     15.88 \\
           Substantial injury: Torso/trunk &              1.03 &       7.3 \\
                 Neck tenderness &              2.48 &      39.3 \\
                Torticollis &              7.03 &      5.77 \\
                  Ambulatory &              5.77 &     21.46 \\
                Axial load to top of head &              0.00 &      2.35 \\
                Sex &              0.00 &       N/A \\
\bottomrule
\end{tabular}
\end{tabular}

        \caption{Final features used for fitting the \textit{outcome} models. Features include information about patient history (i.e. \textit{mechanism of injury}), physical examination (i.e. \textit{Abdominal trauma}), and mental condition (i.e. \textit{Altered mental status}). Percentage of nonzero values is marked \textit{N/A} for non-binary features.}
    \label{tab:features}
\end{table}

\subsection{Clinical-data hyperparameter selection}
\label{sec:hyperparams_supp}

\paragraph{Data splitting} We use 10 random training/validation/test splits for each dataset, performing hyperparameter selection separately on each. There are two reasons we choose not to use a fixed test set. First, the small number of positive instances in our datasets makes our primary metrics (specificity at high sensitivity levels) noisy, so averaging across multiple splits makes the results more stable. Second, the works that introduced the TBI, IAI, and CSI datasets did not publish their test sets, as it is not as common to do so in the medical field as it is in machine learning, making the choice of test set unclear. For TBI and CSI, we simply use the random seeds 0 through 10. For IAI, some filtering of seeds is required due to the low number of positive examples; we reject seeds that do not allocate positive examples evenly enough between each split (a ratio of negative to positive outcomes over 200 in any split).

\paragraph{Class weights} Due to the importance of achieving high sensitivity, we upweight positive instances in the loss by the inverse proportion of positive instances in the dataset. This results in class weights of about 7:1 for CSI, 112:1 for TBI, and 60:1 for IAI. These weights are fixed for all methods.

\paragraph{Hyperparameter settings} Due to the relatively small number of positive examples in all datasets, we keep the hyperparameter search space small to avoid overfitting. We vary the maximum number of tree splits from 8 to 16 for all methods and the maximum number of update iterations from 1 to 5 for TAO. The options of group membership model are logistic regression with L2 regularization and gradient-boosted trees \citep{friedman2001greedy}. For both models, we simply include two hyperparameter settings: a less-regularized version and a more-regularized version, by varying the inverse regularization strength ($C$) for logistic regression and the number of trees ($N$) for gradient-boosted trees. We initially experimented with random forests and CART, but found them to lead to poor downstream performance. Random forests tended to separate the groups too well in terms of estimated probabilities, leading to little information sharing between groups, while CART did not provide unique enough membership probabilities, since CART probability estimates are simply within-node class proportions.

\paragraph{Validation metrics} We use the highest specificity achieved when sensitivity is at or above 94\% as the metric for validation. If this metric is tied between different hyperparameter settings of the same model, specificity at 90\% sensitivity is used as the tiebreaker. For the IAI dataset, only specificity at 90\% sensitivity is used, since the relatively small number of positive examples makes high sensitivity metrics noisier than usual. If there is still a tie at 90\% sensitivity, the smaller model in terms of number of tree splits is chosen.

\paragraph{Validation of group membership model} Hyperparameter selection for \methodabbrv~and G-CART is done in two stages due to the need to select the best group membership model. First, the best-performing maximum of tree splits is selected for each combination of method and membership model. This is done separately for each data group. Next, the best membership model is selected using the overall performance of the best models across both data groups. The two-stage validation process ensures that the ${<}2$ years and ${\ge}2$ years age groups use the same group membership probabilities, which we have found performs better than allowing different sub-models of \methodabbrv~to use different membership models.

\begin{table}[H]
    \centering
    \footnotesize
    \begin{tabular}{lrrrrrr}
\toprule
{} & \multicolumn{3}{c}{${<}2$ years group} &
        \multicolumn{3}{c}{${\ge}2$ years group} \\
    \cmidrule(lr){2-4}
    \cmidrule(lr){5-7}
Maximum tree splits: &           8 &          12 &          16 &           8 &          12 &          16 \\
\midrule
TAO (1 iter)                 &  \textbf{15.1} (6.7) &  15.1 (6.7) &  14.4 (6.1) &  \textbf{14.1} (7.8) &  14.1 (7.8) &   8.9 (5.9) \\
TAO (5 iter)                &  \textbf{14.4} (6.1) &   0.0 (0.0) &   0.0 (0.0) &   \textbf{8.9} (5.9) &   3.1 (0.9) &   1.5 (0.7) \\
CART-SEP                    &  \textbf{15.1} (6.7) &  14.4 (6.1) &   0.0 (0.0) &  \textbf{14.0} (7.8) &   8.9 (5.9) &   3.1 (0.9) \\
FIGS-SEP                    &  \textbf{13.7} (5.9) &   0.0 (0.0) &   0.0 (0.0) &  \textbf{23.1} (8.8) &  13.0 (7.4) &   7.8 (5.6) \\
G-CART w/ LR ($C = 2.8$)       &   \textbf{7.9} (6.7) &   3.1 (2.1) &   3.5 (1.7) &  19.0 (8.8) &  \textbf{21.8} (8.4) &   2.1 (0.6) \\
G-CART w/ LR ($C = 0.1$)       &  \textbf{20.4} (8.6) &   8.3 (6.6) &  10.1 (6.7) &  12.7 (7.6) &  \textbf{14.9} (7.1) &   3.6 (0.9) \\
G-CART w/ GB ($N = 100$)       &  \textbf{19.8} (8.3) &   7.2 (6.3) &   7.6 (6.1) &  13.3 (8.0) &  \textbf{21.4} (8.5) &   9.0 (5.6) \\
G-CART w/ GB ($N = 50$)        &  \textbf{26.8} (9.7) &   8.1 (6.3) &   8.4 (6.1) &  13.3 (8.0) &  \textbf{21.4} (8.5) &   9.7 (5.6) \\
\methodabbrv~w/ LR ($C = 2.8$) &  \textbf{14.9} (8.5) &   7.5 (5.4) &   8.1 (6.9) &  41.0 (8.7) &  \textbf{48.1} (8.2) &  35.6 (8.9) \\
\methodabbrv~w/ LR ($C = 0.1$) &  \textbf{31.0} (9.4) &  23.1 (9.1) &  25.9 (9.7) &  46.9 (8.4) &  \textbf{48.2} (8.4) &  33.7 (8.9) \\
\methodabbrv~w/ GB ($N = 100$) &  \textbf{24.5} (8.6) &  24.0 (9.3) &  21.2 (8.7) &  \textbf{47.5} (8.5) &  47.5 (8.2) &  27.9 (8.6) \\
\methodabbrv~w/ GB ($N = 50$)  &  \textbf{32.1} (9.6) &  18.3 (8.2) &  12.7 (6.9) &  47.5 (8.5) &  \textbf{53.2} (7.3) &  28.4 (8.3) \\
\bottomrule
\vspace{1pt}
\end{tabular}

(a)

\vspace{10pt}

\begin{tabular}{lrrrr}
\toprule
Group membership model:  & LR ($C = 2.8$) & LR ($C = 0.1$) & GB ($N = 100$) & GB ($N = 50$) \\
\midrule
G-CART (${<}2$ years, ${\ge}2$ years models combined) &   \textbf{27.8} (6.0) &   21.5 (5.9) &     19.0 (5.7) &    27.1 (6.5) \\
\methodabbrv~(${<}2$ years, ${\ge}2$ years models combined) &   51.3 (5.8) &   54.5 (6.2) &     \textbf{57.4} (5.6) &    44.6 (7.4) \\
\bottomrule
\vspace{1pt}
\end{tabular}

(b)
        \caption{Hyperparameter selection table for the TBI dataset; the metric shown is specificity at 94\% sensitivity on the validation set, with corresponding standard error in parentheses. First, the best-performing maximum of tree splits is selected for each method or combination of method and membership model (a). This is done separately for each data group. Next, the best membership model is selected for G-CART and \methodabbrv~using the overall performance of the best models from (a) across both data groups (b). The two-stage validation process ensures that the ${<}2$ years and ${\ge}2$ years age groups use the same group membership probabilities, which we have found leads to better performance than allowing them to use different membership models. Metrics shown are averages across the 10 validation sets, but hyperparameter selection was done independently for each of the 10 data splits.}
    \label{tab:cv}
\end{table}

\subsection{PCS analysis of CSI G-FIGS model} In this section, we perform a stability analysis of the \methodabbrv~fitted on the CSI dataset. As discussed earlier, stability is a crucial pre-requisite for using ML models to interpret real-world scientific problems.  To measure the stability of \methodabbrv, we artificially introduce noise by randomly swapping a percentage $p$ of labels $\by$. We vary $p$ between $\{1\%,2.5\%,5\%\}$. For each value of $p$, we measure stability by comparing the similarity of the feature sets selected in the model trained on the perturbed data to the model displayed in \cref{fig:model_ex_csi}. The similarity of feature sets is measured via the Jaccard distance. That is, let $\hat{f}$ denote the \method~model fitted on the unperturbed data. Further, define $\hat{S}$ as the features split on in $\hat{f}$. Similarly, let $\hat{f}^p$ denote the \method~fitted on the perturbed data. Note that $\hat{f}^p$ does not necessarily need to consist of the same number of trees as $\hat{f}$. Moreover, we define $\hat{S}^p$ as the features split on in $\hat{f}^p$. Then, we define the stability score of $\hat{f}$ as follows
\begin{equation}
\label{eq:stability_score}
    \text{Sta}(\hat{f}) = \frac{\hat{S} \cap \hat{S^p}}{\hat{S} \cup \hat{S^p}}
\end{equation}
For \methodabbrv~we compute the stability score of the model fitted on each age group. We measure the stability score of \methodabbrv~over 5 repetitions, and display the results in \cref{tab:pcs_analysis}. 

\begin{table} [H]
\centering
\footnotesize
\begin{tabular}{lrrr}
\toprule
percentage of labels swapped $p$:  & $1\%$ & $2.5\%$ & $5\%$  \\
\midrule
Sta(\methodabbrv~(${<}2$ years)) &   1.0  &   0.98  &  0.93   \\
Sta(\methodabbrv~(${>}2$ years)) &  0.86     &  0.64   & 0.64  \\
\bottomrule
\vspace{1pt}
\end{tabular}
\caption{Stability analysis results for \methodabbrv~fitted on the CSI dataset when a percentage $p$ of labels are flipped. We vary $p$ betweem $\{1\%,2.5\%,5\%\}$. The stability of \methodabbrv~is measured via \eqref{eq:stability_score}, and we average our results over over 5 repetitions. The results indicate \methodabbrv~is stable to perturbations, in particular for the ${<}2$ years age group.  }
\label{tab:pcs_analysis}
\end{table}



\section{Theoretical Results}
\label{sec:theory_supp}
In this section, we discuss our theoretical results relating to \method~and tree-sum models. 

\subsection{CART as local orthogonal greedy procedure}
\label{sec:supp_cart_orthogonal_supp}

In this section, we build on recent work which shows that CART can be thought of as a ``local orthogonal greedy procedure''~\cite{klusowski2021universal}.
To see this, consider a tree model $\hat f$, and a leaf node $\node$ in the tree.
Given a potential split $s$ of $\node$ into children $\node_L$ and $\node_R$, we may associate the normalized decision stump
\begin{equation} \label{eq:decision_stump}
\psi_{\node,s}(\bx) = \frac{N(\mathfrak{t}_{R})\mathbf{1}\{\bx \in \mathfrak{t}_{L}\} - N(\mathfrak{t}_{L})\{\bx \in \mathfrak{t}_{R}\}}{\sqrt{N(\node)N(\mathfrak{t}_{L})N(\mathfrak{t}_{R})}},
\end{equation}
where $N(-)$ is used to denote the number of samples in a given node.
We use $\bPsi_{\node,s}$ to denote the vector in $\R^n$ comprising its values on the training set, noticing that it has unit norm.
If $\node$ is an interior node, then there is already a designated split $s(\node)$, and we drop the second part of the subscript.
It is easy to see that the collection $\braces*{\bPsi_\node}_{\node \in \hat f}$ is orthogonal to each other, and also to all decision stumps associated to potential splits.
This gives the second equality in the following chain
\begin{equation} \label{eq:impurity_dec_as_squared_dot_prod}
    \hat \Delta(s,\node) = \paren{\by^T \bPsi_{\node,s}}^2 = \paren{\br^T \bPsi_{\node,s}}^2,
\end{equation}
with the first being a straightforward calculation.
As such, the CART splitting condition is equivalent to selecting a feature vector from an admissible set that best reduces the residual variance.
The CART update rule adds this feature to the model, and sets its coefficient to minimize the 1-dimensional least squares equation.
Given orthogonality of all the features, this also updates the linear model to the best fit linear model on the new feature set.

Concatenating the decision stumps together yields a feature map $\Psi\colon \R^d \to \R^p$, and we let $\bPsi$ denote the $n$ by $m$ transformed data matrix.
Let $\hat{\bbeta}$ denote the solution to the least squares problem
\begin{equation} \label{eq:least_squares_CART}
     \min_{\bbeta} ~ \norm*{\bPsi\bbeta - \by}_2^2.
\end{equation}
We have just argued that we have functional equality
\begin{equation} \label{eq:CART_as_lin_reg}
    \hat f(\bx) = \hat{\bbeta}^T \Psi(\bx).
\end{equation}
These calculations can be found in more detail in Lemma 3.2 in \cite{klusowski2021universal}.

\subsection{Modifications for \method}
\label{sec:FIGS_linear_supp}

With a collection of trees $\tree_1,\ldots,\tree_K$, we may still associate a normalized decision stump \eqref{eq:decision_stump} to every node and every potential split.
The impurity decrease used to determine splits can still be written as a squared correlation with a potential split vector,
\begin{equation*}
    \hat \Delta(s,\node,r) = \paren{\br^{(-k)T}\bPsi_{\node,s}} ^2.
\end{equation*}
and so the FIGS splitting rule can also be thought of as selecting a feature vector from an admissible set that best reduces the residual variance, while its update rule adds this feature to the model, and sets its coefficient to minimize the 1-dimensional least squares equation.
The difference to CART lies in the fact that the admissible set is now larger, comprising potential splits from multiple trees, and furthermore, the node vectors from different trees are no longer orthogonal to each other, and so the update rule, while solving the 1-dimensional problem, no longer minimizes the full least squares loss.
Nonetheless, as discussed in the main paper, a simple backfitting procedure after the tree structures are fixed is equivalent to block coordinate descent on this linear system, and converges to the minimizer, and furthermore, we observed in our examples that the resulting coefficients do not change too much from their initial values, meaning that the FIGS solution is already close to a best fit linear model.

\subsection{\methods disentangles the additive components of additive generative models}
\label{sec:disentanglement}

\paragraph{Generative model.} 
Let $\bx$ be a random variable with distribution $\pi$ on $[0,1]^d$.
Suppose that we have disjoint blocks of features $I_1,\ldots, I_K$, of sizes $d_1,\ldots,d_K$, with $d = \sum_{k=1}^K d_k$, and suppose the blocks of features are mutually independent, i.e. $\bx_{I_j} \indep \bx_{I_k}$ for $j \neq k$, where for any index set $I$, $\bx_I$ denotes the subvector of $\bx$ comprising coordinates in $I$.
Let $y = f(\bx) + \epsilon$ where $\E\braces*{\epsilon~|~\bx} = 0$ and
\begin{equation} \label{eq:additive}
    f(\bx) = \sum_{k=1}^K f_k(\bx_{I_k}).
\end{equation}
Suppose further that each component function has mean zero with respect to $\pi$.

For a given tree structure $\tree$, let $J(\tree)$ denote the set of features that it splits upon.
We say that a tree-sum model with tree structures $\braces{\tree_1,\ldots,\tree_M}$ completely (respectively partially) \emph{disentangles} the additive components of an additive generative model \eqref{eq:additive} if $M=K$ and, after re-indexing if necessary, $J(\tree_k) = I_k$ (respectively $J(\tree_k) \subset I_k$) for $k=1,\ldots,K$.

In this section, we argue that \methods is able to achieve disentanglement and learn additive components of additive generative models.
To the best of our knowledge, this property is unique to FIGS and is not shared by any other tree ensemble algorithm.
As argued in the introduction, disentanglement helps to avoid duplicate subtrees, leading to a more parsimonious model with better generalization performance (see \cref{thm:generalization_main} for a precise statement) . Note that even when the generative model is not additive, FIGS helps to reduce the number of possibly redundant often observed in CART models (see \cref{sec:sim_results_supp}.)

\begin{theorem}[\textbf{Partial disentanglement in large sample limit}] \label{thm:disentanglement}
    Consider the generative model \eqref{eq:additive}, and recall the notation from \cref{sec:methods}.
    Suppose we run \cref{alg:method} with the following oracle modifications when splitting a node $\node$.\\
    
    \noindent 1. Splits are selected using the population impurity decrease 
      \begin{align} \label{eq:population_impurity}
         \quad \Delta(s,\node, r) &\coloneqq \pi(\node)  \Var\braces*{r~|~\bx \in \node} \\
        & - \pi(\node_L) \Var\braces*{r~|~\bx \in \node_L} 
         - \pi(\node_R) \Var\braces*{r~|~\bx \in \node_R}, \nonumber
        \end{align}
        instead of the finite sample impurity decrease $\hat\Delta(s, \node, \br)$. \\ \\
        2. Values of the new children nodes $\node_L$ and $\node_R$ are obtained by adding, to the value of $\node$, the population means $\E_{\pi}\braces{r~|~\node_L}$ and $\E_{\pi}\braces{r~|~\node_R}$ respectively, instead of the sample means $\bar r_{\node_L}$ and $\bar r_{\node_R}$ respectively. \\
    
    \noindent Then for each fitted tree $\hat f_k$, the set of features split upon is contained within a single index set $I_k$ for some $k$.
\end{theorem}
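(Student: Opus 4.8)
The plan is to run \cref{alg:method} with the stated oracle modifications and maintain, across iterations, that the model already has the ``right'' additive shape. Write the population residual at the start of an iteration as $r = f - \sum_k \hat f_k$, and for a built tree $\tree$ let $J(\tree)$ denote its set of split features. I would track two statements: \emph{(I1)} every tree $\tree_k$ currently in the model satisfies $J(\tree_k) \subseteq I_{\sigma(k)}$ for some block index $\sigma(k)$; and \emph{(I2)} the residual decomposes as $r = \sum_{m=1}^K h_m(\bx_{I_m})$ with $h_m$ a function of $\bx_{I_m}$ only and $\E_\pi\{h_m\} = 0$ --- explicitly, $h_m = f_m - g_m$ where $g_m \coloneqq \sum_{k:\,\sigma(k)=m}\hat f_k$, which is $\bx_{I_m}$-measurable once (I1) holds since each tree's leaf values are fixed scalars. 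The base case holds: before the first split $r = f = \sum_m f_m$ with every $f_m$ of mean zero, so $\E_\pi\{f\}=0$. Moreover a normalized decision stump \eqref{eq:decision_stump} integrates to zero against $\pi$, and (from \cref{sec:supp_cart_orthogonal_supp,sec:FIGS_linear_supp}, the population analogue of \eqref{eq:impurity_dec_as_squared_dot_prod}) each chosen split updates the model by $r \mapsto r - \E_\pi\{r\,\psi_{\node,s}\}\,\psi_{\node,s}$ and has impurity decrease $\Delta(s,\node,r) = \E_\pi\{r\,\psi_{\node,s}\}^2$; hence each update subtracts a mean-zero multiple of a decision stump, so $\E_\pi\{r\}=0$ and $\E_\pi\{g_m\}=0$ persist automatically and the mean-zero clause of (I2) is free once the block structure is in place.

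For the inductive step I would classify the chosen split. \emph{(a)} A fresh stump splitting the whole space on a coordinate $j \in I_{m'}$: its stump is a function of $x_j$ alone, so (I1) holds for the new tree with $\sigma = m'$, and $r$ changes by a mean-zero function of $\bx_{I_{m'}}$, so (I2) persists. \emph{(b)} A split of a leaf $\node$ of an existing tree $\tree_k$ on a coordinate $j \in I_{\sigma(k)}$: then both $\node$ and the split involve only $\bx_{I_{\sigma(k)}}$, so (I1) persists and $r$ changes by a mean-zero function of $\bx_{I_{\sigma(k)}}$, so (I2) persists. The only dangerous case is \emph{(c)}: a split of a leaf $\node$ of an existing tree $\tree_k$ --- which, having at least one split, has $\sigma(k)=:m$ already pinned down and $\pi(\node) < 1$ --- on a coordinate $j \in I_{m'}$ with $m' \ne m$.

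The crux is to show case (c) can never be \emph{strictly} best, so the greedy rule need not take it. Since $\node$ is carved out by constraints on $\bx_{I_m}$ while the split is a threshold $t$ on $x_j$, $j \in I_{m'}$, block independence gives $\pi(\node_L) = \pi(\node)\,\pi(\{x_j \le t\})$ and $\pi(\node_R) = \pi(\node)\,\pi(\{x_j > t\})$, whence the factorization $\psi_{\node,s}(\bx) = \pi(\node)^{-1/2}\,\mathbf{1}\{\bx\in\node\}\,\phi_j(x_j)$, where $\phi_j$ is precisely the normalized stump $\psi_{\mathrm{root},s'}$ of the fresh split $s'$ on $(j,t)$ applied to the whole space. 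Substituting $r = \sum_m h_m$ into $\E_\pi\{r\,\psi_{\node,s}\}$ and using independence of blocks together with $\E_\pi\{h_m\}=0$ and $\E_\pi\{\phi_j\}=0$, all terms vanish except the one indexed by $m'$, giving $\E_\pi\{r\,\psi_{\node,s}\} = \sqrt{\pi(\node)}\,\E_\pi\{r\,\psi_{\mathrm{root},s'}\}$. Hence $\Delta(s,\node,r) = \pi(\node)\,\Delta(s',\mathrm{root},r) < \Delta(s',\mathrm{root},r)$ when $\pi(\node)<1$, so the type-(a) candidate $s'$ (always available) strictly dominates (c) unless both decreases are zero; in the latter case the chosen split has zero coefficient and leaves the model unchanged, so it may be assumed not taken (or ties broken toward the additive structure). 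This closes the induction, and (I1) at termination is exactly the claim that each fitted tree splits within a single block $I_k$.

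I expect the main obstacle to be the careful maintenance of invariant (I2) alongside the stump factorization in case (c): one must be sure that a leaf $\node$ of a tree in block $m$ involves \emph{only} features of $I_m$ (this is where (I1) feeds back), that leaf values are constants so $g_m$ really is $\bx_{I_m}$-measurable, and that the mean-zero property of each component and of each decision stump survives every update so the cross terms genuinely cancel. The degenerate zero-impurity-decrease case needs the small remark above about vacuous splits. Once the invariants are set up, the remaining ingredients --- the factorization of $\pi(\node_L),\pi(\node_R)$, the vanishing of cross terms, and the $\pi(\node)\le 1$ comparison --- are direct computations.
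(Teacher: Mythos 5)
Your proposal is correct and follows essentially the same route as the paper's proof: induction on the number of splits, with the key step using block independence to show that a cross-block split of a leaf $\node$ has population impurity decrease exactly $\pi(\node)$ times that of the identical split made at the root of a fresh tree, so the greedy rule never strictly prefers it. The only cosmetic difference is that you derive this identity via population-normalized decision stumps and squared inner products, whereas the paper rewrites the impurity decrease through the law of total variance; the two computations coincide.
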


The proof for this theorem is deferred to \cref{sec:theory_supp}.
Note that for any generative model $h(\bx, y)$, the finite-sample impurity decrease converges to the population impurity decrease as the sample size goes to infinity: $n^{-1}\hat\Delta(s,\node, \bh) \to \Delta(s,\node, h)$ and $\bar h_\node \to \E\braces{h~|~\node}$ as $n \to \infty$.
This justifies our claim that the modified algorithm is equivalent to running \method~in the large sample limit.
Note that the number of terms in the fitted model need not be equal to the number of additive components $K$.

\paragraph{Real-world example.}
We now show how disentanglement could lead to a more scientific accurate and interpretable models for a 
Diabetes classification dataset~\cite{bennett1971diabetes,smith1988using}. In this dataset, eight risk factors were collected and used to predict the onset of diabetes within five years. The dataset consists of 768 female subjects from the Pima Native American population near Phoenix, AZ, USA. 268 of the subjects developed diabetes, which is treated as a binary label.

\cref{fig:model_example} shows two models, one learned by \methods and one learned by CART.
In both models, the prediction corresponds to the risk of diabetes onset withing five years (a higher prediction corresponds to a higher risk).
Both achieve roughly the same performance (\methods yields an AUC of 0.820 whereas CART yields an AUC of 0.817), but the models have some key differences.
The \methods model includes fewer features and fewer total splits than the CART model, making it easier to understand in its entirety.
Moreover, the \methods model has no interactions between features, making it clear that each of the features contributes independently of one another, something which any single-tree model is unable to do.

\begin{figure}[hbtp]
    \centering
    {\textbf{CART} \raggedright\\\vspace{-1pt}}
    \includegraphics[width=0.4\columnwidth]{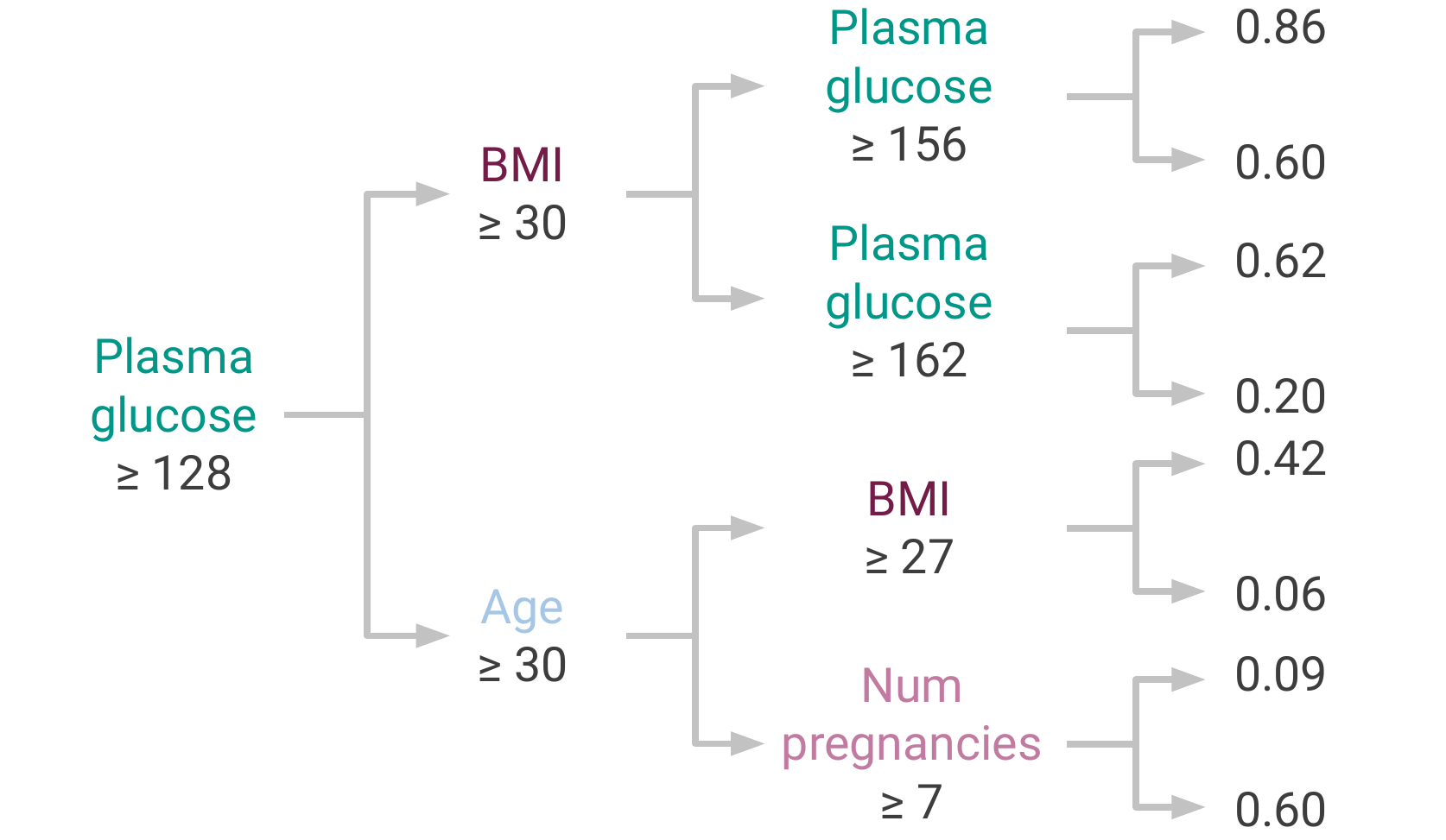}\\
    {\textcolor{lightgray}{ \noindent\makebox[\linewidth]{\rule{0.4\columnwidth}{0.6pt}}}}
    {\textbf{\method} \raggedright\vspace{-5pt}\\}
    \includegraphics[width=0.3\columnwidth]{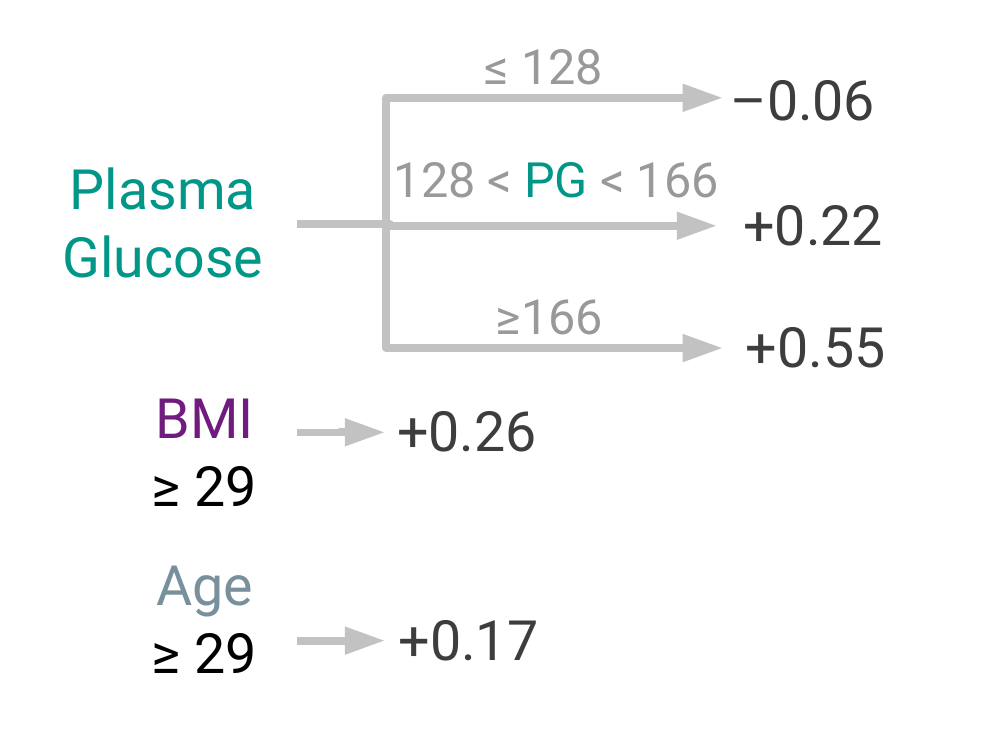}
    \caption{Comparison between \methods and CART on the diabetes dataset. \methods learns a simpler model, which disentangles interactions between features. Both models achieve the same generalization performance (\methods yields an AUC of 0.820 whereas CART yields 0.817.)}
    \label{fig:model_example}
\end{figure}

\subsection{Proof of \cref{thm:disentanglement}}
\label{sec:proof_disentanglement_supp}

\begin{proof}[Proof of \cref{thm:disentanglement}]
    We prove this by induction on the total number of splits, with the base case being trivial.
    By the induction hypothesis, we may assume WLOG that $\hat f_1$ only has splits on features in $I_1$.
    Consider a candidate split $s$ on a leaf $\node \in \hat f_1$ based on a feature $m \in I_2$.
    Let $\node'$ denote the projection of $\node$ onto $I_1$. 
    As sets in $\R^d$, we may then write
    \begin{equation} \label{eq:disentanglement_node_formula}
        \node = \node' \times \R^{[d]\backslash I_1},
    \end{equation}
    \begin{equation} \label{eq:disentanglement_node_left_formula}
        \node_L = \node' \times (-\infty, \tau] \times \R^{[d]\backslash I_1\cup \braces{m}},
    \end{equation}
    and
    \begin{equation} \label{eq:disentanglement_node_right_formula}
        \node_R = \node' \times (\tau, \infty) \times \R^{[d]\backslash I_1\cup \braces{m}}.
    \end{equation}
    
    Recall that we work with the residual $r = f(\bx) - \sum_{k = 1}^K \hat f_k$.
    Now using the law of total variance, we can rewrite the weighted impurity decrease in a more convenient form:
    \begin{equation} \label{eq:disentanglement_imp_dec}
        \Delta(s,\node, r) = \frac{\pi(\node_L)\pi(\node_R)}{\pi(\node)} \paren*{\E\braces*{r~|~\bx \in \node_L} - \E\braces*{r~|~\bx \in \node_R}}^2.
    \end{equation}
    We may assume WLOG that this quantity is strictly positive.
    By the induction hypothesis, we can divide the set of component trees into two collections, one of which only splits on features in $I_2$, and those which only split on features in $[d]\backslash I_2$.
    Denoting the function associated with the second collection of trees by $g_2$, we observe that
    \begin{align*}
        \E\braces*{r~|~\bx \in \node_L} - \E\braces*{r~|~\bx \in \node_R}
        & = \E\braces*{f-g~|~\bx \in \node_L} - \E\braces*{f-g~|~\bx \in \node_R} \\
        & = \E\braces*{f_2 - g_2 ~|~\bx \in \node_L} - \E\braces*{f_2 - g_2 ~|~\bx \in \node_R}.
    \end{align*}
    Since $f_2$ and $g_2$ do not depend on features in $I_1$, we can then further rewrite this quantity as
    \begin{equation} \label{eq:disentanglement_cond_exp}
        \E\braces*{f_2 - g_2 ~|~x_m \leq \tau} - \E\braces*{f_2 - g_2 ~|~x_m > \tau}.
    \end{equation}
    Meanwhile, using \eqref{eq:disentanglement_node_formula}, \eqref{eq:disentanglement_node_left_formula}, and \eqref{eq:disentanglement_node_right_formula}, we may rewrite
    \begin{equation} \label{eq:disentanglement_probs}
        \frac{\pi(\node_L)\pi(\node_R)}{\pi(\node)} = \pi_1(\node')\pi_2(x_m \leq \tau)\pi_2(x_m > \tau).
    \end{equation}
    Plugging \eqref{eq:disentanglement_cond_exp} and \eqref{eq:disentanglement_probs} back into \eqref{eq:disentanglement_imp_dec}, we get
    \begin{equation} \label{eq:disentanglement_imp_dec_smaller}
        \Delta(s,\node, r) = \pi_1(\node')\pi_2(x_m \leq \tau)\pi_2(x_m > \tau) \paren*{\E\braces*{f_2 - g ~|~x_m \leq \tau} - \E\braces*{f_2 - g ~|~x_m > \tau}}^2.
    \end{equation}
    
    In contrast, if we split a new root node $\node_0$ on $m$ at the same threshold and call this split $s'$, we can run through a similar set of calculations to get
    \begin{equation} \label{eq:disentanglement_imp_dec_bigger}
    \Delta(s',\node_0,r) = \pi_2(x_m \leq \tau)\pi_2(x_m > \tau) \paren*{\E\braces*{f_2 - g ~|~x_m \leq \tau} - \E\braces*{f_2 - g ~|~x_m > \tau}}^2.
    \end{equation}
    Comparing \eqref{eq:disentanglement_imp_dec_smaller} and \eqref{eq:disentanglement_imp_dec_bigger}, we see that
    \begin{equation*}
        \Delta(s,\node, r^{(-1)}) = \pi_1(\node')\Delta(s',\node_0,r),
    \end{equation*}
    and as such, split $s'$ will be chosen in favor of $s$.
\end{proof}

\subsection{Theoretical generalization upper bounds}
We shall prove a generalization upper bound showing that disentangled tree-sum models enjoy better prediction performance that single-tree models when fitted to data with additive structure. When the data is generated from a fully additive model with $C^1$ component functions (i.e. the generative model \eqref{eq:additive} with blocks of size $d_k=1$, and $f_k \in C^k([0,1])$ for each $k$), it was recently shown that any single decision tree model has a squared error generalization lower bound of $\Omega(n^{-2/(d+2)})$. This is significantly worse than the minimax rate of $\tilde O\paren*{d n^{-2/3}}$ for this problem~\cite{tan2021cautionary}. Tight generalization upper bounds have proved elusive for CART due to the complexity of analyzing the tree growing procedure, and are difficult for \method~for the same reason.
Nonetheless, we can prove a partial result that shows that a disentangled tree-sum representation is more effective than a single tree model. To formalize this, let $\mathcal{C} = \braces{\tree_1, \ldots, \tree_M}$ be the collection of tree structures learnt by \method. We say that a function is a \emph{tree-sum implementable with $\mathcal C$} if it can be represented as a sum of component functions, each of which is implementable by one of the tree structures $\tree_k$ (i.e. constant on each leaf of $\tree_k$). We use $\mathcal{F}(\mathcal C)$ to denote the collection of all tree-sum implementable with $\mathcal C$.
As discussed in \cref{sec:methods}, \methods essentially fits an empirical risk minimizer from $\mathcal{F}(\mathcal C)$.
In the following theorem, we prove that when $\mathcal C$ is instead chosen by an oracle, an empirical risk minimizer with respect to $\mathcal{F}(\mathcal C)$ has good generalization properties.

\begin{theorem}[\textbf{Generalization upper bounds using oracle tree structures}] \label{thm:generalization_main}
    Consider the generative model \eqref{eq:additive}, and further suppose the distribution $\pi_k$ of each independent block $\bx_{I_k}$ has a continuous density, each $f_k$ is $C^1$, with $\norm{\nabla f_k}_2 \leq \beta_k$,
    and that $\epsilon$ is homoskedastic with variance $\E\braces*{\epsilon^2~|~\bx} = \sigma^2$.
    For any sample size $n$, there exists an oracle collection of trees $\mathcal C = \braces{\tree_1,\ldots,\tree_K}$ disentangling \eqref{eq:additive}
    such that for a training set $\data$ of size $n$, any empirical risk minimizer $\tilde f$ of $\mathcal F (\mathcal C)$ satisfies
    the following squared error generalization upper bound:
    \begin{align} \label{eq:generalization_bound_theorem}
        \E_{\data, \bx} 
        \braces*{(\tilde f(\bx) - f(\bx))^2\indicator
        \braces*{
        \mathcal{E}^c}} \leq \sum_{k=1}^K c_k\paren*{\frac{\sigma^2}{n}}^{\frac{2}{d_k+2}}.
    \end{align}
    Here, $\mathcal E$ is an event with vanishing probability $\P\braces*{\mathcal E} = O(n^{-2/(d_{max}+2)})$ where $d_{max} = \max_k d_k$ is the size of the largest feature block in \eqref{eq:additive}, while $c_k \coloneqq 8\paren*{d_k\beta_k^2\norm{\pi_k}_\infty}^\frac{d_k}{d_k+2}$.\footnote{We note that the error event $\mathcal{E}$ is due to the query point possibly landing in leaf nodes containing very few or even zero training samples, which can be thus be detected and avoided in practice by imputing a default value.}
    
\end{theorem}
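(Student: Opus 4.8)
The plan is to reduce \cref{thm:generalization_main} to a per-block bias--variance trade-off for histogram-type estimators, exploiting the fact that the disentangled tree-sum class $\mathcal{F}(\mathcal{C})$ decomposes orthogonally across the independent feature blocks. For the \textbf{oracle construction}, I would let $\tree_k$ be the tree that partitions block $I_k$ into $N_k$ congruent axis-aligned subcubes of side $N_k^{-1/d_k}$ (ignoring integrality), with the $N_k$ left as free parameters to be fixed at the end; then $\mathcal{C}=\{\tree_1,\ldots,\tree_K\}$ disentangles \eqref{eq:additive} with $J(\tree_k)=I_k$. Because $\bx_{I_1},\ldots,\bx_{I_K}$ are independent and each $f_k$ has mean zero under $\pi_k$, the subspaces of $L^2(\pi)$ consisting of functions constant on the leaves of distinct $\tree_k$ are mutually orthogonal modulo constants, so the $L^2(\pi)$-projection of $f$ onto $\mathcal{F}(\mathcal{C})$ is $\bar f=\sum_k \Pi_k f_k$, where $\Pi_k$ denotes the $L^2(\pi_k)$-projection onto functions piecewise constant on $\tree_k$. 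All subsequent comparisons are made against this fixed $\bar f$.

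For the \textbf{bias term}, a mean-value estimate on a single subcube (comparing $f_k$ to its Lebesgue cell-average, and then to $\Pi_k f_k$, which can only be better) gives $\norm{f_k-\Pi_k f_k}_{L^2(\pi_k)}^2 \le \norm{\pi_k}_\infty\,\beta_k^2\, d_k\, N_k^{-2/d_k}$, since $\norm{\grad f_k}_2\le\beta_k$ and a subcube of side $N_k^{-1/d_k}$ has squared Euclidean diameter $d_k N_k^{-2/d_k}$; summing over $k$ bounds $\norm{\bar f-f}_{L^2(\pi)}^2$. For the \textbf{estimation term}, I would start from the basic inequality for the ERM, $\tfrac1n\sum_i(y_i-\tilde f(\bx_i))^2\le\tfrac1n\sum_i(y_i-\bar f(\bx_i))^2$, substitute $y_i=f(\bx_i)+\epsilon_i$, and bound the resulting noise cross-term by Cauchy--Schwarz after projecting the noise vector $\beps$ onto the span $V$ of the decision-stump features of all the $\tree_k$, whose dimension is at most $\sum_k N_k$. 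This gives $\norm{\tilde f-f}_n^2 \lesssim \norm{\bar f-f}_n^2 + \tfrac1n\norm{P_V\beps}_2^2$, where $\norm{\cdot}_n$ is the empirical $L^2$ norm and $\E\norm{P_V\beps}_2^2\le\sigma^2\sum_k N_k$ by homoskedasticity. To transfer from $\norm{\cdot}_n$ back to $\norm{\cdot}_{L^2(\pi)}$ I would restrict to the event $\mathcal E^c$, taking $\mathcal E$ to be the event that the test point $\bx$ lands in a leaf $A$ of some $\tree_k$ whose training count satisfies $n_A<\tfrac12 n\pi(A)$; on $\mathcal E^c$ the per-cell variance obeys $\sigma^2/n_A\le 2\sigma^2/(n\pi(A))$, and (using independence of the blocks to control the empirical Gram matrix) the empirical and population norms are comparable on $\mathcal{F}(\mathcal{C})$. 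A multiplicative Chernoff bound gives $\P\{n_A<\tfrac12 n\pi(A)\}\le e^{-n\pi(A)/8}$, and since $x\mapsto xe^{-nx/8}$ is maximized at $x=8/n$,
\[
  \P\{\mathcal E\} \;\le\; \sum_{k=1}^K\sum_{A\in\tree_k}\pi(A)\,e^{-n\pi(A)/8} \;=\; O\!\paren*{\sum_{k=1}^K \frac{N_k}{n}}.
\]

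Assembling the two bounds yields $\E_{\data,\bx}\braces*{(\tilde f(\bx)-f(\bx))^2\indicator\{\mathcal E^c\}} \lesssim \sum_k\paren*{\norm{\pi_k}_\infty\beta_k^2 d_k N_k^{-2/d_k} + \sigma^2 N_k/n}$, and I would \textbf{optimize each summand} over $N_k$: the minimizer is $N_k\asymp(\beta_k^2\norm{\pi_k}_\infty n/\sigma^2)^{d_k/(d_k+2)}$, producing a per-block contribution of order $(d_k\beta_k^2\norm{\pi_k}_\infty)^{d_k/(d_k+2)}(\sigma^2/n)^{2/(d_k+2)}$; a careful accounting of the absolute constants (the Chernoff factor, the $\tfrac12$'s in the norm transfer, and dimensional prefactors of the form $(d/2)^{2/(d+2)}+(2/d)^{d/(d+2)}$) shows the leading constant can be taken to be $c_k=8(d_k\beta_k^2\norm{\pi_k}_\infty)^{d_k/(d_k+2)}$. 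For this choice of $N_k$ we have $N_k/n\asymp(\beta_k^2\norm{\pi_k}_\infty/\sigma^2)^{d_k/(d_k+2)}n^{-2/(d_k+2)}$, and since $d_k\le d_{\max}$ forces $n^{-2/(d_k+2)}\le n^{-2/(d_{\max}+2)}$, we recover $\P\{\mathcal E\}=O(n^{-2/(d_{\max}+2)})$.

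The step I expect to be the main obstacle is the estimation bound, and specifically the transfer from the empirical $L^2$ norm to $\norm{\cdot}_{L^2(\pi)}$ on $\mathcal{F}(\mathcal{C})$. Unlike a single-tree histogram, the tree-sum design is \emph{not} block-diagonal --- each sample lies in exactly one leaf of \emph{every} $\tree_k$ --- so the empirical Gram matrix couples the blocks; one must show it concentrates around its population counterpart (which, by block independence, is block-diagonal up to a correction of rank at most $K$ arising from the shared constant direction) and, crucially, arrange this so that its failure event is contained in the $\mathcal E$ above, so that a single budget $O(n^{-2/(d_{\max}+2)})$ controls both the norm equivalence and the variance term. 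By comparison, the oracle partition, the bias estimate, and the final constant bookkeeping are routine.
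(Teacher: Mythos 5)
Your overall skeleton (equal-cell oracle partition per block, bias via the gradient bound and the cell diameter, a variance term of order $p\sigma^2/n$ with $p \approx \sum_k N_k$, then optimizing the cell size to get $c_k(\sigma^2/n)^{2/(d_k+2)}$) matches the paper's, and your bias bound and the computation $\P\braces{\mathcal E} = O(\sum_k N_k/n) = O(n^{-2/(d_{\max}+2)})$ are fine. The genuine gap is exactly the step you flag but do not close: the passage from the in-sample basic inequality to the population risk. Your argument needs a uniform comparison of empirical and population $L^2$ norms over the linear class $\mathcal F(\mathcal C)$, i.e.\ concentration of the empirical Gram matrix of the leaf-indicator (or decision-stump) features around its population counterpart. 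Under the theorem's hypotheses this can simply fail: only $\norm{\pi_k}_\infty < \infty$ is assumed, with no lower bound on the density, so with $N_k \asymp n^{d_k/(d_k+2)}$ cells some cells may have probability $o(1/n)$ or even zero, the population Gram can have arbitrarily small eigenvalues, leaves can be empty (empirical Gram singular), and no event defined through the counts of the leaves containing the test point restores a norm equivalence that holds uniformly over the data-dependent function $\tilde f - \bar f$. Relatedly, your claim that on $\mathcal E^c$ the "per-cell variance obeys $\sigma^2/n_A \le 2\sigma^2/(n\pi(A))$" is a single-tree histogram heuristic; for the jointly fitted tree-sum the prediction variance at $\bx$ is governed by the leverage $\Psi(\bx)^T(\bPsi^T\bPsi)^{\dagger}\Psi(\bx)$ of the coupled design, not by the counts of the individual leaves, which is precisely the cross-block coupling you acknowledge but do not control.

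The paper circumvents this entirely by a different estimation argument: it writes the ERM as the (minimum-norm) least-squares solution in the engineered stump-feature space and proves a generic out-of-sample bound for possibly rank-deficient linear regression (its Theorem on generalization error for linear regression) via exchangeability of the query point with the training points, a leave-one-out identity based on Sherman--Morrison for pseudo-inverses, and leverage scores; the bad event $\mathcal E$ is "the query point has leverage $\ge 1/2$ in the augmented design," whose probability is at most $2p/(n+1)$ by Markov, with no density lower bound, no Gram concentration, and empty leaves handled automatically by the pseudo-inverse. The approximation error enters that theorem as an extra mean-zero noise $\eta = f - g$ with variance $\sigma_\eta^2$, which is why the final bound picks up the bias term with a fixed constant and yields $c_k = 8\paren*{d_k\beta_k^2\norm{\pi_k}_\infty}^{d_k/(d_k+2)}$ cleanly. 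To repair your route you would either have to import such a leverage/exchangeability argument (at which point you have reproduced the paper's proof) or add an assumption (e.g.\ a density lower bound, or a restriction to cells of probability $\gtrsim \log n/n$ with a separate truncation argument) that the theorem as stated does not grant you.
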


The proof of \cref{thm:generalization_main} can be found in \cref{sec:proof_disentanglement_supp}, 
and builds on recent work \cite{klusowski2021universal} which shows how to interpret decision trees as linear models on a set of engineered features corresponding to internal nodes in the tree. This interpretation has a natural extension to \method.
It is instructive to consider two extreme cases: If $d_k=1$ for each $k$, then we have an upper bound of $O\paren*{d n^{-2/3}}$.
If on the other hand $K=1$, we have an upper bound of $O\paren*{n^{-2/(d+2)}}$.
Both (partially oracle) bounds match the well-known minimax rates for their respective inference problems \cite{raskutti2012minimax}.

\subsection{Proof of \cref{thm:generalization_main}}
\label{sec:proof_generelization_supp}

\begin{proof}[Proof of \cref{thm:generalization_main}]
    To construct $\mathcal C$, for each $k$, construct a tree $\tree_k$ that partitions $[0,1]^{d_k}$ into cubes of side length $h_k$, where $h_k$ is a parameter to be determined later.
    Let $p_k$ denote the number of internal nodes in $\tree_k$.
    Recall that we use $\tilde f$ to denote the empirical risk minimizer among all functions in $\mathcal F (\mathcal C)$.
    Let $g$ denote an $\ell_2$ risk minimizer in $\mathcal F (\mathcal C)$.
    For any event $\mathcal{E}$, we then apply Cauchy-Schwarz to get
    \begin{equation} \label{eq:Cauchy_Schwarz}
        \E_{\data, \bx \sim \pi} \braces*{\paren*{\tilde f(\bx) - f(\bx)}^2\indicator\braces*{\mathcal{E}^c}} 
        \leq 2\E\braces*{\paren*{f(\bx)- g(\bx)}^2} + 2\E\braces*{\paren*{g(\bx)- \tilde f(\bx)}^2\indicator\braces*{\mathcal{E}^c}}.
    \end{equation}
    The first term represents bias, and quantifies the error incurred when attempting to approximate $f$ with the function class $\mathcal F(\mathcal C)$, while the second term represents variance, and quantifies the error incurred because of sampling uncertainty.
    We will bound the first term using smoothness properties of the $f_k$'s, while for the second term, we will rewrite $\tilde f$ and $g$ as linear functions in an engineered feature space, which would allow us to use calculations from linear modeling theory to obtain a bound.
    
    We begin by bounding the second term, and assume WLOG that $\E_{\pi_k}\braces*{f_k} = 0$ for each $k$.
    Let $\Psi_k$ be the feature mapping associated with $\tree_k$ as described earlier in this section, and concatenate them for $k=1,\ldots,K$ to get the mapping $\Psi$.
    There is a one-to-one correspondence between $\mathcal F (\mathcal C)$ (tree-sum functions implementable with $\mathcal C$) and linear function with respect to the representation $\Psi$ (i.e. of the form $\bx \mapsto \btheta^T\Psi(\bx)$ for some vector $\btheta$).
    As such, $\tilde f$, the empirical risk minimizer in $\mathcal F (\mathcal C)$, can be written as $\tilde f(\bx) = \tilde \btheta^T \Psi(\bx)$, where $\tilde \btheta$ is the solution to the least squares problem
    \begin{equation*}
        \min_{\btheta} \sum_{i=1}^n \paren*{\btheta^T\Psi(\bx_i) - y_i}^2.
    \end{equation*}
    Furthermore, one can check that the $\ell_2$ risk minimizer $g$ can be written as $g(\bx) = \btheta^{*T}\Psi(\bx)$, where
    \begin{equation} \label{eq:formula_for_theta_star}
        \btheta^*(\node) \coloneqq \frac{\sqrt{N(\node_L)N(\node_R)}}{N(\node)}\paren*{\E\braces*{y~|~\node_L} - \E\braces*{y~|~\node_R}}
    \end{equation}
    for each node $\node$.
    This gives the formula $g(\bx) = \sum_{k=1}^K g_k(\bx_{I_k})$, where for each $k$,
    \begin{equation*}
        g_k(\bx_{I_k}) \coloneqq \E\braces*{f_k(\bx_{I_k}')~|~\bx_{I_k}' \in \node_k(\bx_{I_k})}.
    \end{equation*}
    Here, $\node_k(\bx_{I_k})$ is the leaf in $\tree_k$ containing $\bx_{I_k}$, and $\bx_{I_k}'$ an independent copy of $\bx_{I_k}$.

    Meanwhile, note that we have the equation
    $$
    y = \btheta^{*T}\Psi(\bx) + \eta + \epsilon
    $$
    where $\eta \coloneqq f(\bx) - g(\bx)$ satisfies
    \begin{align*}
        \E\braces*{\eta~|~\Psi(\bx)}
        & = \E\braces*{\sum_{k=1}^K \paren*{f_k(\bx_{I_k}) - g_k(\bx_{I_k})}~|~\Psi(\bx)} \\
        & = \sum_{k=1}^K\E\braces*{ f_k(\bx_{I_k}) - g_k(\bx_{I_k})~|~\Psi_k(\bx_{I_k})} \\
        & = 0.
    \end{align*}
    As such, we may apply Theorem \ref{thm:generalization_lin_reg} with the event $\mathcal E$ given in the statement of the theorem to get
    \begin{align*}
        \E\braces*{\paren*{g(\bx)- \tilde f(\bx)}^2\indicator\braces*{\mathcal E ^c}} \leq 2\paren*{\frac{p\sigma^2}{n+1} + 2\E\braces*{\paren*{f(\bx)- g(\bx)}^2}}.
    \end{align*}
    Plugging this into \eqref{eq:Cauchy_Schwarz}, we get
    \begin{equation} \label{eq:continuation_of_CS_pt1}
        \E_{\data, \bx \sim \pi} \braces*{\paren*{\tilde f(\bx) - f(\bx)}^2\indicator\braces*{\mathcal{E}^c}} 
        \leq 10\E\braces*{\paren*{f(\bx)- g(\bx)}^2} + \frac{4p\sigma^2}{n+1}.
    \end{equation}
    By independence, and the fact that $\E\braces*{g_k(\bx_{I_k})} = 0$ for each $k$, we can decompose the first term as
    \begin{equation*}
        \E\braces*{\paren*{f(\bx)- g(\bx)}^2} = \sum_{k=1}^K \E\braces*{\paren*{f_k(\bx)- g_k(\bx)}^2}.
    \end{equation*}
    This allows us to decompose the right hand side of \eqref{eq:continuation_of_CS_pt1} into the sum
    \begin{equation} \label{eq:continuation_of_CS}
        \sum_{k=1}^K \paren*{10\E\braces*{\paren*{f_k(\bx_{I_k})- g_k(\bx_{I_k})}^2} + \frac{4p_k\sigma^2}{n+1}}.
    \end{equation}
    
    We reduce to the case of uniform distribution $\mu$, via the inequality
    $$
    \E_{\pi_k}\braces*{\paren*{f_k(\bx_{I_k})- g_k(\bx_{I_k})}^2} \leq \norm{\pi_k}_\infty\E_\mu\braces*{\paren*{f_k(\bx_{I_k})- g_k(\bx_{I_k})}^2},
    $$
    and from now work with this distribution, dropping the subscript for conciseness.
    Next, observe that
    \begin{equation*}
        \E\braces*{\paren*{f_k(\bx_{I_k})- g_k(\bx_{I_k})}^2} = \E\braces*{\Var\braces*{f_k(\bx_{I_k})~|~\node_k(\bx_{I_k})}}.
    \end{equation*}
    Using Lemma \ref{lem:variance_and_sides}, we have that
    \begin{equation*}
        \Var\braces*{f_k(\bx_{I_k})~|~\node_k(\bx_{I_k})} 
        \leq \frac{\beta_k^2d_k h_k^2}{6}.
    \end{equation*}
    Meanwhile, a volumetric argument gives
    \begin{equation*}
        p_k \leq h_k^{-d_k}.
    \end{equation*}
    We use these to bound each term of \eqref{eq:continuation_of_CS} as
    \begin{equation} \label{eq:continuation_of_CS_pt2}
        10\norm{\pi_k}_\infty\E\braces*{\paren*{f_k(\bx_{I_k})- g_k(\bx_{I_k})}^2} + \frac{4p_k\sigma^2}{n+1} 
        \leq 2\norm{\pi_k}_\infty\beta_k^2d_k h_k^2 + \frac{4h_k^{-d_k}\sigma^2}{n+1}.
    \end{equation}
    Pick
    $$
    h_k = \paren*{\frac{2\sigma^2}{\norm{\pi_k}_\infty\beta_k^2d_k(n+1)}}^{\frac{1}{d_k+2}},
    $$
    which sets both terms on the right hand side to be equal, in which case the right hand of \eqref{eq:continuation_of_CS_pt2} has the value
    \begin{equation*}
        4\paren*{2\norm{\pi_k}_\infty\beta_k^2d_k}^{\frac{d_k}{d_k+2}}\paren*{\frac{\sigma^2}{n+1}}^{\frac{2}{d_k+2}}.
    \end{equation*}
    Summing these quantities up over all $k$ gives the bound \eqref{eq:generalization_bound_theorem}, with the error probability obtained by computing $2p/n$.
\end{proof}

\begin{corollary} \label{cor:sparse_additive}
    Assume a sparse additive model, i.e. in \eqref{eq:additive}, assume $I_k = \braces{k}$ for $k=1,\ldots,K$. Then we have
    $$
    \E_{\data, \bx \sim \pi} \braces*{\paren*{\tilde f(\bx) - f(\bx)}^2\indicator\braces*{\mathcal{E}^c}}
    \leq 8K\max_k \paren*{\norm{\pi_k}_\infty\beta_k^2}^{1/3}\paren*{\frac{\sigma^2}{n}}^{\frac{2}{3}}.
    $$
\end{corollary}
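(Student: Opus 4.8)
The plan is to obtain this corollary as an immediate specialization of \cref{thm:generalization_main} to the case where every feature block is a singleton. First I would check that the hypotheses align: when $I_k = \braces{k}$ the assumptions of \cref{thm:generalization_main} (each $\pi_k$ has a continuous density, each $f_k \in C^1$ with $\norm{\nabla f_k}_2 \le \beta_k$, homoskedastic noise with variance $\sigma^2$, and independence of the blocks) are exactly the standing assumptions of the corollary's sparse additive model. The oracle construction in the proof of \cref{thm:generalization_main}---building, for each $k$, a tree $\tree_k$ that partitions $[0,1]^{d_k}$ into cubes of side length $h_k$---remains well-defined and still disentangles \eqref{eq:additive}, since each $\tree_k$ splits only on the single coordinate $k$.

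Next I would substitute $d_k = 1$ into the conclusion of the theorem. This gives $\frac{2}{d_k+2} = \frac{2}{3}$ and $c_k = 8\paren*{d_k\beta_k^2\norm{\pi_k}_\infty}^{\frac{d_k}{d_k+2}} = 8\paren*{\beta_k^2\norm{\pi_k}_\infty}^{1/3}$, so that \eqref{eq:generalization_bound_theorem} becomes
\begin{equation*}
    \E_{\data, \bx}\braces*{(\tilde f(\bx) - f(\bx))^2\indicator\braces*{\mathcal E^c}} \le \sum_{k=1}^K 8\paren*{\beta_k^2\norm{\pi_k}_\infty}^{1/3}\paren*{\frac{\sigma^2}{n}}^{\frac{2}{3}}.
\end{equation*}
The final step is the crude bound $\paren*{\beta_k^2\norm{\pi_k}_\infty}^{1/3} \le \max_j \paren*{\beta_j^2\norm{\pi_j}_\infty}^{1/3}$ applied termwise, which turns the sum over $k$ into a factor of $K$ and yields the stated inequality. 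Since $d_{\max} = \max_k d_k = 1$ in this regime, the error event $\mathcal E$ from \cref{thm:generalization_main} automatically has probability $O(n^{-2/3})$, matching the exponent in the bound.

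There is no real obstacle here: the corollary is a one-line specialization, and the only thing worth double-checking is that the constant $c_k$ and the rate exponent collapse correctly when $d_k = 1$, which they do. If one wanted to be slightly more careful, one could also note that the bound is stated with $\sigma^2/n$ rather than $\sigma^2/(n+1)$ as in the proof of \cref{thm:generalization_main}; this is absorbed into the constant (indeed $8$ already leaves slack), so no additional argument is needed.
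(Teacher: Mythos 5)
Your proposal is correct and is exactly how the paper intends the corollary to be obtained: a direct specialization of \cref{thm:generalization_main} with $d_k=1$, so that $c_k = 8\paren*{\beta_k^2\norm{\pi_k}_\infty}^{1/3}$ and the exponent becomes $2/3$, followed by bounding each summand by the maximum to extract the factor $K$. The remark about $n$ versus $n+1$ is fine and needs no further argument, since the theorem's stated bound already uses $\sigma^2/n$.
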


\begin{lemma}[Variance and side lengths] \label{lem:variance_and_sides}
    Let $\uniform$ be the uniform measure on $[0,1]^d$. 
    Let $\cell \subset [0,1]^d$ be a cell. Let $f$ be any differentiable function such that $\norm{\grad f(\bx)}_2^2 \leq \beta^2$.
    Then we have
    \begin{equation} \label{eq:variance_and_sides}
        \Var_\mu\braces*{f(\bx)~|~\bx \in \cell} \leq \frac{\beta^2}{6}\sum_{j=1}^d (b_j-a_j)^2.
    \end{equation}
\end{lemma}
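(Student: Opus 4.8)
The plan is to combine the classical ``two independent copies'' identity for the variance with the fundamental theorem of calculus along straight-line paths inside the cell. Let $\bx, \bx'$ be two independent draws from the uniform distribution on $\cell = \prod_{j=1}^d [a_j, b_j]$, so that
\[
\Var_\uniform\braces*{f(\bx)~|~\bx \in \cell} = \tfrac12\,\E\braces*{\paren*{f(\bx) - f(\bx')}^2}.
\]
The first step is to bound the integrand $\paren*{f(\bx) - f(\bx')}^2$ pointwise. Since $f$ is differentiable and $\cell$ is convex, the chain rule and the fundamental theorem of calculus applied to $t \mapsto f(\gamma(t))$ with $\gamma(t) = \bx' + t(\bx - \bx')$ give $f(\bx) - f(\bx') = \int_0^1 \inprod*{\grad f(\gamma(t)), \bx - \bx'}\, dt$. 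Applying Jensen's inequality to the outer integral and then Cauchy--Schwarz in $\R^d$ together with the hypothesis $\norm{\grad f}_2 \le \beta$ yields $\paren*{f(\bx) - f(\bx')}^2 \le \int_0^1 \inprod*{\grad f(\gamma(t)), \bx - \bx'}^2\, dt \le \beta^2 \norm{\bx - \bx'}_2^2$.

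The second step is to take expectations and exploit independence of the coordinates. Since $x_j$ and $x_j'$ are independent and each uniform on $[a_j, b_j]$, an elementary computation gives $\E\braces*{(x_j - x_j')^2} = 2\Var(x_j) = (b_j - a_j)^2/6$. Combining with the previous step,
\[
\Var_\uniform\braces*{f(\bx)~|~\bx \in \cell} \le \frac{\beta^2}{2}\,\E\braces*{\norm{\bx - \bx'}_2^2} = \frac{\beta^2}{2}\sum_{j=1}^d \frac{(b_j - a_j)^2}{6} = \frac{\beta^2}{12}\sum_{j=1}^d (b_j - a_j)^2,
\]
which is in fact a factor of two sharper than \eqref{eq:variance_and_sides}, so the claimed bound follows a fortiori.

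There is no genuine obstacle here: each step is a single application of a classical inequality. The only point meriting (minor) care is justifying the line-integral representation of $f(\bx) - f(\bx')$; this is immediate for continuously differentiable $f$ on the convex set $\cell$, which is the regularity actually enjoyed by the component functions in the application (where each $f_k$ is $C^1$), and for merely differentiable $f$ one can invoke the Lebesgue form of the fundamental theorem of calculus along the segment. An alternative route would be to quote the anisotropic Poincar\'e inequality for the uniform measure on a box, but the elementary argument above is self-contained and produces the explicit constant directly.
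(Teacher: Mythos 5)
Your proposal is correct and follows essentially the same route as the paper's own proof: the identity $\Var_\uniform\braces*{f(\bx)\,|\,\bx\in\cell\,} = \tfrac12\E\braces*{(f(\bx)-f(\bx'))^2}$ for two independent copies, the gradient bound $(f(\bx)-f(\bx'))^2 \le \beta^2\norm{\bx-\bx'}_2^2$ (the paper uses the mean value theorem where you use the fundamental theorem of calculus plus Jensen), and a coordinate-wise computation of $\E\braces*{\norm{\bx-\bx'}_2^2}$. Your constant $\beta^2/12$ is in fact sharper than the lemma's $\beta^2/6$, because the paper's intermediate claim $\E\braces*{\norm{\bx-\bx'}_2^2} = \tfrac13\sum_j (b_j-a_j)^2$ overstates the correct per-coordinate value $\E\braces*{(x_j-x_j')^2} = (b_j-a_j)^2/6$ by a factor of two, so the stated bound holds a fortiori either way.
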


\begin{proof}
    For any $\bx, \bx' \in \cell$, we may write
    $$
    \paren*{f(\bx) - f(\bx')}^2 = \inprod*{\grad f(\bx''),\bx - \bx'}^2 \leq \beta^2 \norm{\bx-\bx'}_2^2.
    $$
    Next, note that
    $$
    \E\braces*{\norm{\bx-\bx'}_2^2~|~\bx,\bx' \in \cell}^2 = \frac{1}{3}\sum_{j=1}^d (b_j-a_j)^2.
    $$
    As such, we have
    \begin{align*}
        \Var_\mu\braces*{f(\bx)~|~\bx \in \cell} & = \frac{1}{2}\E\braces*{\paren*{f(\bx) - f(\bx')}^2~|~ \bx, \bx' \in c\ell} \\
        & \leq \frac{\beta^2}{6}\sum_{j=1}^d (b_j-a_j)^2.
    \end{align*}
\end{proof}

\subsection{Helper lemmas on linear regression}
\label{sec:helper_lemmas_supp}

We consider the case of possibly under-determined least squares, i.e. the problem
\begin{equation} \label{eq:least_squares}
     \min_{\btheta} ~ \norm*{\bX\btheta - \by}_2^2
\end{equation}
where we allow for the possibility that $\bX$ does not have linearly independent columns.
When this is indeed the case, there will be multiple solutions to \eqref{eq:least_squares}, but there is a unique element $\hat{\btheta}$ of the solution set that has minimum norm.
In fact, this is given by the formula
$$
\hat{\btheta} = \bX^{\dagger}\by,
$$
where $\bX^\dagger$ denotes the Moore-Penrose pseudo-inverse of $\bX$.

We extend the definition of leverage scores to this case by defining the $i$-th leverage score $h_i$ to be the $i$-th diagonal entry of the matrix $\bH \coloneqq \bX\bX^\dagger$.
Note that the vector of predicted values is given by we have
$$
\hat\by = \bX\hat\btheta = \bX\bX^{\dagger}\by = \bH\by,
$$
so that this coincides with the definition of leverage scores in the linearly independent case.

In what follows, we will work extensively with leave-one-out (LOO) perturbations of the sample and the resulting estimators.
We shall use $\bX^{(-i)}$ to denote the data matrix with the $i$-th data point removed, and $\hat \btheta^{(-i)}$ to denote the solution to \eqref{eq:least_squares} with $\bX$ replaced with $\bX^{(-i)}$.
We have the following two generalizations of standard formulas in the full rank setting.

\begin{lemma}[Leave-one-out estimated coefficients] \label{lem:LOO_coefficients}
    The LOO estimated coefficients satisfy
    \begin{equation*}
        \bx_i^{T}\paren*{\hat\btheta - \hat\btheta^{(-i)}} = \frac{h_i\hat e_i}{1-h_i}
    \end{equation*}
    where $\hat e_i = y_i - \bx_i^T\hat\btheta$ is the residual from the full model.
\end{lemma}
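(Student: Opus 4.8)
The plan is to prove the identity by an ``imputation'' argument that avoids any explicit rank-one inverse update, so that it goes through verbatim in the possibly rank-deficient (Moore--Penrose) setting. Write $\hat y_i \coloneqq \bx_i^T\hat\btheta = (\bH\by)_i$ for the fitted value of the full model (the second equality is the definition of $\bH$ recalled above), and let $\tilde y \coloneqq \bx_i^T\hat\btheta^{(-i)}$ be the leave-one-out prediction at $\bx_i$. Since $\bx_i^T(\hat\btheta - \hat\btheta^{(-i)}) = \hat y_i - \tilde y$, it suffices to obtain a closed form for $\tilde y$ in terms of $\hat y_i$, $y_i$ and $h_i$.

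First I would show that $\hat\btheta^{(-i)}$, besides being a least-squares solution for the reduced data $(\bX^{(-i)},\by^{(-i)})$, is \emph{also} a least-squares solution for the \emph{full} data with $y_i$ replaced by the imputed value $\tilde y$. Indeed, as a Moore--Penrose solution $\hat\btheta^{(-i)}$ satisfies the reduced normal equations $\sum_{j\neq i}\bx_j\bx_j^T\hat\btheta^{(-i)} = \sum_{j\neq i}\bx_j y_j$; adding the $i$-th term, which contributes $\bx_i\bx_i^T\hat\btheta^{(-i)} = \bx_i\tilde y$ by the definition of $\tilde y$, gives $\bX^T\bX\hat\btheta^{(-i)} = \bX^T\by'$, where $\by'$ denotes $\by$ with coordinate $i$ reset to $\tilde y$. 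Hence $\bX\hat\btheta^{(-i)}$ is the orthogonal projection of $\by'$ onto the column space of $\bX$, i.e. $\bX\hat\btheta^{(-i)} = \bH\by'$. Reading off coordinate $i$, and using that $\by'$ differs from $\by$ only in coordinate $i$ (value $\tilde y$ instead of $y_i$) while $(\bH)_{ii}=h_i$, yields $\tilde y = (\bH\by')_i = (\bH\by)_i + h_i(\tilde y - y_i) = \hat y_i + h_i(\tilde y - y_i)$. Solving for $\tilde y$ (this is where we need $h_i \neq 1$) gives $\tilde y = (\hat y_i - h_i y_i)/(1-h_i)$, and therefore $\hat y_i - \tilde y = h_i(y_i-\hat y_i)/(1-h_i) = h_i\hat e_i/(1-h_i)$, which is the claimed identity.

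The algebra is elementary, so the only real care needed is in the rank-deficient bookkeeping: the one step that deserves explicit justification is that $\bX\hat\btheta^{(-i)} = \bH\by'$ follows merely from $\hat\btheta^{(-i)}$ being \emph{some} least-squares solution of the modified full problem (the fitted-value vector being solution-independent), together with the fact that $\bH = \bX\bX^\dagger$ is exactly the orthogonal projector onto $\mathrm{col}(\bX)$ — the same fact that underlies $\hat y_i = (\bH\by)_i$. The genuinely degenerate point is $h_i = 1$, which forces $h_{ij}=0$ for $j\neq i$ and hence $\hat e_i = y_i - \hat y_i = 0$, so both sides are a $0/0$ expression and the statement should be read as vacuous there; I would flag this case explicitly rather than attempt to assign it a value. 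Everything else is routine once the imputation identity of the previous paragraph is in hand.
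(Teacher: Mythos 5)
Your proof is correct, and it takes a genuinely different route from the paper's. The paper proves the identity by writing $\bX^\dagger = \paren*{\bX^T\bX}^\dagger\bX^T$ and rerunning the classical algebraic derivation with a generalized Sherman--Morrison formula for pseudo-inverses (Meyer's theorem) in place of the usual rank-one inverse update; the correctness there hinges on verifying the hypotheses of that external result. You instead use the imputation (self-consistency) argument: $\hat\btheta^{(-i)}$ also solves the full normal equations with $y_i$ replaced by the LOO prediction $\tilde y = \bx_i^T\hat\btheta^{(-i)}$, so its fitted vector is $\bH\by'$, and reading off coordinate $i$ gives $\tilde y = \hat y_i + h_i(\tilde y - y_i)$, which solves to the claimed identity. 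The only nontrivial ingredient is that $\bH = \bX\bX^\dagger$ is the orthogonal projector onto $\mathrm{col}(\bX)$ and that fitted values are solution-independent, both of which hold without any rank assumption, so your argument is more self-contained in the rank-deficient setting and avoids the pseudo-inverse update formula entirely (note, though, that it yields only the predicted value $\bx_i^T\hat\btheta^{(-i)}$, not an update for the coefficient vector itself --- which is all the lemma needs). Your explicit handling of the degenerate case $h_i = 1$ (where idempotence and symmetry of $\bH$ force $h_{ij}=0$ for $j\neq i$, hence $\hat e_i = 0$ and a $0/0$ expression) is a caveat the paper leaves implicit; in its downstream use the theorem effectively restricts to leverage scores bounded away from $1$, so nothing is lost either way.
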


\begin{proof}
    Note that we may write $\bX^\dagger = \paren*{\bX^T\bX}^\dagger\bX^T$.
    We may then follow the proof of the usual identity but substituting \eqref{eq:sherman_morrison} in lieu of the regular Sherman-Morrison formula.
\end{proof}

\begin{lemma}[Sherman-Morrison]
    Let $\bX$ be any matrix. Then
    \begin{equation} \label{eq:sherman_morrison}
        \paren*{\bX^{(-i)T}\bX^{(-i)}}^\dagger = \paren*{\bX^T\bX}^{\dagger} + \frac{\paren*{\bX^T\bX}^{\dagger}\bx_i\bx_i^T\paren*{\bX^T\bX}^{\dagger}}{1-h_i}.
    \end{equation}
\end{lemma}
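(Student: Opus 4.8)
The plan is to verify directly that the proposed right-hand side satisfies the four Moore--Penrose axioms, which uniquely determine the pseudo-inverse of $A \coloneqq \bX^{(-i)T}\bX^{(-i)}$. Write $M \coloneqq \bX^T\bX$ and $u \coloneqq \bx_i$; since $\bX^T\bX = \bX^{(-i)T}\bX^{(-i)} + \bx_i\bx_i^T$, we have $M = A + uu^T$, and we set $B \coloneqq M^\dagger + (1-h_i)^{-1} M^\dagger u u^T M^\dagger$ for the claimed expression, assuming $h_i < 1$ (the nondegenerate regime in which the denominator is meaningful; note $h_i \in [0,1]$ always, since $\bH = \bX M^\dagger \bX^T$ is an orthogonal projector). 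Three structural facts will be used repeatedly. First, $u = \bx_i$ is a row of $\bX$, so $u \in \mathrm{range}(\bX^T) = \mathrm{range}(M)$, whence $MM^\dagger u = u$. Second, because $\bX^\dagger = M^\dagger \bX^T$ one has $\bH = \bX M^\dagger \bX^T$, so $h_i = u^T M^\dagger u$. Third, $M$ is symmetric, so $M^\dagger$ is symmetric, $P \coloneqq MM^\dagger = M^\dagger M$ is the orthogonal projector onto $\mathrm{range}(M)$, and $PM = M$, $PM^\dagger = M^\dagger$, $Pu = u$, $u^TP = u^T$.

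With these in hand the verification is short. Expanding $AB = (M - uu^T)B$ and using $MM^\dagger u = u$ together with $u^T M^\dagger u = h_i$, every term collapses to either $P$ or a scalar multiple of $uu^T M^\dagger$, and the coefficient of the latter is $(1-h_i)^{-1} - 1 - h_i(1-h_i)^{-1} = 0$; hence $AB = P$. An identical computation (now using $u^TP = u^T$) gives $BA = P$. Thus $AB = BA = P$ is symmetric, which disposes of the two symmetry axioms. The remaining two are then immediate: $ABA = PA = P(M - uu^T) = M - uu^T = A$ using $PM = M$ and $Pu = u$; and $BAB = PB = B$ using $PM^\dagger = M^\dagger$. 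Since $B$ satisfies all four relations, $B = A^\dagger$, establishing \eqref{eq:sherman_morrison}.

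The one point that deserves care --- and the only genuine obstacle --- is the degenerate case $h_i = 1$, equivalently $u \notin \mathrm{range}(A)$, i.e., deleting row $i$ strictly reduces the row space of $\bX$. There the stated formula is not even well-defined and must be excluded; this causes no problem downstream, since $1-h_i$ already sits in a denominator in \cref{lem:LOO_coefficients}. It is also worth remarking that when $\bX$ has full column rank, $M$ and $A$ are invertible and $P = I$, so the identity reduces to the classical Sherman--Morrison--Woodbury formula; the substance here is purely the rank-deficient extension, which works precisely because $u$ lies in $\mathrm{range}(M)$. This generalized identity then plugs into the proof of \cref{lem:LOO_coefficients} in place of the ordinary Sherman--Morrison formula, exactly as indicated there.
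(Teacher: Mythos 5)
Your proof is correct, but it takes a different route from the paper. The paper disposes of this lemma in one line: it applies Theorem 3 of Meyer (1973) on pseudo-inverses of rank-one--modified matrices, with $A = \bX^T\bX$, $c = \bx_i$, $d = -\bx_i$, ``noting that the necessary conditions are fulfilled.'' You instead give a self-contained verification that $B = M^\dagger + (1-h_i)^{-1}M^\dagger u u^T M^\dagger$ satisfies the four Moore--Penrose axioms for $A = M - uu^T$, which works cleanly because of the two structural facts you isolate: $u \in \mathrm{range}(M)$ (so $MM^\dagger u = u$) and the symmetry of $M$ (so $MM^\dagger = M^\dagger M = P$). I checked the cancellations: the coefficient of $uu^TM^\dagger$ in $AB$ is indeed $(1-h_i)^{-1} - 1 - h_i(1-h_i)^{-1} = 0$, and the remaining axioms follow from $PM = M$, $Pu = u$, $PM^\dagger = M^\dagger$ exactly as you say. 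What the paper's citation buys is brevity and access to the fully general (non-symmetric, $c \neq -d$) rank-one result; what your argument buys is that the proof is elementary, needs no external reference, and --- importantly --- makes explicit the hypothesis that the paper leaves implicit: Meyer's Theorem 3 requires $\beta = 1 + d^TA^\dagger c = 1 - h_i \neq 0$, so the lemma as stated (``Let $\bX$ be any matrix'') silently excludes the degenerate case $h_i = 1$, i.e.\ the case where $\bx_i$ is not in the row space of $\bX^{(-i)}$. Your identification of that case, and the observation that it is harmless downstream because \cref{lem:LOO_coefficients} already divides by $1-h_i$, is a genuine (if minor) improvement in precision over the paper's treatment.
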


\begin{proof}
    We apply Theorem 3 in \cite{meyer1973generalized} to $A = \bX^T\bX$, $c = \bx_i$ and $d = - \bx_i$, noting that the necessary conditions are fulfilled.
\end{proof}

\begin{theorem}[Generalization error for linear regression] \label{thm:generalization_lin_reg}
    Consider a linear regression model
    $$
    y = \btheta^T\bx + \epsilon + \eta
    $$
    $\epsilon$ and $\eta$ are independent, $\E\braces*{\epsilon~|~\bx} = \E\braces*{\eta~|~\bx} = 0$, $\E\braces*{\epsilon^2~|~\bx} = \sigma_\epsilon^2$, and $\E\braces*{\eta^2} = \sigma_\eta^2$.
    Given a training set $\data$, and a query point $\bx$, let $\hat\btheta_n$ be the estimated regression vector.
    There is an event $\mathcal{E}$ of probability at most $2p/n$ over which we have
    \begin{equation}
        \E_{\data,\bx}\braces*{\paren*{\bx^T\paren*{\hat\btheta_{n} - \btheta}}^2\indicator\braces*{\mathcal E ^c}}
        \leq 2\paren*{\frac{p\sigma_\epsilon^2}{n+1} + 2\sigma_\eta^2}.
    \end{equation}
\end{theorem}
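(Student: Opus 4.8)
The plan is to run a leave-one-out (LOO) / exchangeability argument driven by the LOO coefficient formula of \cref{lem:LOO_coefficients}. Regard the $n$ training points and the query point $\bx$ as $n+1$ i.i.d.\ draws, let $\bX$ now denote the enlarged $(n+1)\times p$ design matrix, $\bH=\bX\bX^{\dagger}$ its hat matrix, and $h_1,\dots,h_{n+1}$ the leverage scores; write $\hat{\btheta}^{(-i)}$ for the least-squares fit on all points except the $i$-th, so that $\hat{\btheta}_n=\hat{\btheta}^{(-(n+1))}$. Setting $\xi_i\coloneqq y_i-\btheta^{T}\bx_i=\epsilon_i+\eta_i$, a short computation using $\bH\bX\btheta=\bX\btheta$ together with \cref{lem:LOO_coefficients} (which controls $\bx_i^{T}(\hat{\btheta}-\hat{\btheta}^{(-i)})$ via $h_i$ and the residual $\hat e_i=(1-h_i)\xi_i-\sum_{j\neq i}H_{ij}\xi_j$) yields the clean identity
\[
  \bx_i^{T}\bigl(\hat{\btheta}^{(-i)}-\btheta\bigr)=\frac{1}{1-h_i}\sum_{j\neq i}H_{ij}\,\xi_j .
\]
By exchangeability of the $n+1$ points, $\E_{\data,\bx}\bigl[(\bx^{T}(\hat{\btheta}_n-\btheta))^2\indicator\{\mathcal E^c\}\bigr]$ equals the average over $i\in\{1,\dots,n+1\}$ of $\E\bigl[(\bx_i^{T}(\hat{\btheta}^{(-i)}-\btheta))^2\indicator\{\mathcal E_i^c\}\bigr]$, where $\mathcal E_i$ is the relabelling of the bad event (to be defined) and $\mathcal E=\mathcal E_{n+1}$.

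\textbf{The bad event.} I would take $\mathcal E\coloneqq\{h_{n+1}>1/2\}$, i.e.\ the query point has large leverage in the enlarged design. Since $\bH$ is an orthogonal projection onto an at-most-$p$-dimensional subspace, $\sum_{i=1}^{n+1}h_i=\operatorname{rank}(\bX)\le p$, so fewer than $2p$ of the $h_i$ can exceed $1/2$; averaging over $i$ and invoking exchangeability gives $\P(\mathcal E)\le 2p/(n+1)\le 2p/n$, as required.

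\textbf{The main bound.} Fix $i$ and condition on the whole design $\bX$. The cross terms in $\bigl(\sum_{j\neq i}H_{ij}\xi_j\bigr)^2$ vanish: $\E[\xi_j\mid\bX]=0$ (in particular $\E[\eta_j\mid\bX]=\E[\eta_j\mid\bx_j]=0$ since $\eta_j\indep\bx_k$ for $k\neq j$), and for $j\neq k$ the variables $\xi_j,\xi_k$ are conditionally uncorrelated given $\bX$ because each depends on the design only through its own covariate; hence $\E\bigl[(\bx_i^{T}(\hat{\btheta}^{(-i)}-\btheta))^2\mid\bX\bigr]=(1-h_i)^{-2}\sum_{j\neq i}H_{ij}^{2}\bigl(\sigma_\epsilon^2+\E[\eta_j^2\mid\bx_j]\bigr)$. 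On $\mathcal E_i^c$ we have $(1-h_i)^{-2}\le 4$. Summing over $i$ and using $\sum_{i\neq j}H_{ij}^{2}=(\bH^2)_{jj}-H_{jj}^2=h_j(1-h_j)$ — bounded by $h_j$ (so $\sum_j h_j\le p$ controls the $\sigma_\epsilon^2$ piece) and by $1$ (so $\sum_j\E[\eta_j^2\mid\bx_j]=n\sigma_\eta^2$ controls the $\eta$ piece) — then dividing by $n+1$, gives, up to the numerical constants, the claimed $2\bigl(p\sigma_\epsilon^2/(n+1)+2\sigma_\eta^2\bigr)$.

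\textbf{Main obstacle.} The delicate term is the misspecification/approximation error $\eta$: unlike the homoskedastic noise $\epsilon$, it is \emph{not} exogenous to the covariates, so a naive bias--variance split would charge an $O(\sigma_\eta^2)$ error per sample rather than a total of $O(\sigma_\eta^2)$. The LOO identity is exactly what defuses this — it routes each $\eta_j$ through an off-diagonal hat-matrix weight $H_{ij}$, and the facts that each $\eta_j$ is a function of $\bx_j$ alone, has conditional mean zero given $\bX$, and is conditionally uncorrelated across samples, kill all cross terms, leaving only $\sum_{i\neq j}H_{ij}^{2}=h_j(1-h_j)\le 1$ to multiply each $\E[\eta_j^2\mid\bx_j]$. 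A secondary bookkeeping point is to make sure the enlarged-design leverage scores feeding \cref{lem:LOO_coefficients} are the objects used throughout the exchangeability averaging, and that $\mathcal E$ is stated for the query point consistently with its LOO relabelling.
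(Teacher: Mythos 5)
Your proposal is correct and follows essentially the same route as the paper's proof: augment the training set with the query point, use exchangeability over the $n+1$ points, invoke the pseudo-inverse leave-one-out lemma (\cref{lem:LOO_coefficients}), take the bad event to be high leverage of the query point in the augmented design (probability at most $2p/(n+1)$ since the leverages sum to at most $p$), and exploit idempotency of the hat matrix via $\sum_{j\neq i}H_{ij}^2=h_i(1-h_i)$ to route the $\epsilon$ and $\eta$ contributions exactly as the paper does with its $\mathbf{W}$-matrix bookkeeping. The only discrepancy is the constant on the noise term: bounding $(1-h_i)^{-2}\le 4$ and then $h_i(1-h_i)\le h_i$ yields $4p\sigma_\epsilon^2/(n+1)$ instead of the stated $2p\sigma_\epsilon^2/(n+1)$; to recover the exact constant, keep the factor $1-h_i$ and bound $h_i(1-h_i)/(1-h_i\wedge 1/2)^2\le h_i/(1-h_i\wedge 1/2)\le 2h_i$, as in the paper.
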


\begin{proof}
    Let $\data$ denote the training set. Let $\mathcal{E}$ be the event on which $\bx^T\paren*{\bX^T\bX}^{\dagger}\bx \geq \frac{1}{2}$. 
    This quantity is the leverage score for $\bx$, so that
    $$
    \E\braces*{\bx^T\paren*{\bX^T\bX}^{\dagger}\bx} \leq \frac{p}{n+1}
    $$
    by exchangeability of $\bx$ with the data points in $\data$.
    We may then apply Markov's inequality to get
    $$
    \P\braces*{\mathcal E} \leq \frac{2p}{n+1}.
    $$
    
    Again using exchangeability, we may write
    \begin{align} \label{eq:risk_decomposition}
        \E_{\data,\bx}\braces*{\paren*{\bx^T\paren*{\hat{\btheta}_n - \btheta}}^2\indicator\braces*{\mathcal{E}^c}} 
        = \frac{1}{n+1} \sum_{i=0}^n\E_{\data[n+1]} \braces*{\paren*{\bx_i^T\paren*{\hat{\btheta}_{n+1}^{(-i)} - \btheta}}^2\indicator\braces*{\mathcal{E}_i^c}},
    \end{align}
    where $\data[n+1]$ is the augmentation of $\data$ with the query point $\bx_0 = \bx$ and response $y_0$, $\hat{\btheta}_{n+1}$ is the regression vector learnt from $\data[n+1]$, and for each $i$, $\hat{\btheta}_{n+1}^{(-i)}$ that from $\data[n+1]\backslash \braces*{\paren*{\bx_i,y_i}}$.

    To bound this, we first rewrite the prediction error for the full model as
    \begin{align} \label{eq:full_model_pred_error}
        \bx_i^T\paren*{\hat\btheta_{n+1} -\btheta} 
        & = \bx_i^T\bX^\dagger \by - \bx_i^T\btheta \nonumber \\
        & = \bx_i^T\bX^\dagger \paren*{\bX\btheta + \beps + \beeta} - \bx_i^T\btheta \nonumber \\
        & = \bx_i^T\bX^\dagger\paren*{\beps + \beeta},
    \end{align}
    where the last equality follows because $\bx_i$ lies in the column space of $\bX^\dagger\bX$.
    Next, we may decompose that for the LOO model as
    \begin{equation} \label{eq:LOO_model_pred_error}
        \bx_i^T\paren*{\hat{\btheta}_{n+1}^{(-i)} - \btheta} 
        = \bx_i^T\paren*{\hat{\btheta}_{n+1}^{(-i)} - \hat\btheta_{n+1}} + \bx_i^T\paren*{\hat\btheta_{n+1} - \btheta}.
    \end{equation}
    We expand the first term using Lemma \ref{lem:LOO_coefficients} to get
    \begin{align} \label{eq:full_and_LOO_diff}
        \bx_i^T\paren*{\hat{\btheta}_{n+1}^{(-i)} - \hat\btheta_{n+1}}
        & = \frac{h_i}{1-h_i}\paren*{\bx_i^T\hat\btheta_{n+1} - y_i} \nonumber \\
        & = \frac{h_i}{1-h_i}\paren*{\bx_i^T\paren*{\hat\btheta_{n+1} - \btheta} - \epsilon_i - \eta_i}.
    \end{align}
    We plug \eqref{eq:full_and_LOO_diff} into \eqref{eq:LOO_model_pred_error} and then \eqref{eq:full_model_pred_error} into the resulting equation to get
    \begin{align*}
        \bx_i^T\paren*{\hat{\btheta}_{n+1}^{(-i)} - \btheta} 
        & = \frac{h_i}{1-h_i}\paren*{\bx_i^T\paren*{\hat\btheta_{n+1} - \btheta} - \epsilon_i - \eta_i} + \bx_i^T\paren*{\hat\btheta_{n+1} - \btheta} \\
        & = \frac{1}{1-h_i} \paren*{\bx_i^T\paren*{\hat\btheta_{n+1} - \btheta}} - \frac{h_i}{1-h_i}\paren*{\epsilon_i + \eta_i} \\
        & = \frac{1}{1-h_i} \bx_i^T\bX^\dagger\paren*{\beps + \beeta} - \frac{h_i}{1-h_i}\paren*{\epsilon_i + \eta_i}.
    \end{align*}
    Taking expectations and using the independence of $\beps$ and $\beeta$, we get
    \begin{align*}
        \E \braces*{\paren*{\bx_i^T\paren*{\hat{\btheta}_{n+1}^{(-i)} - \btheta}}^2\indicator\braces*{\mathcal{E}_i^c}} 
        & = \E\braces*{\paren*{\frac{\bx_i^T\bX^\dagger\beps - h_i\epsilon_i}{1-h_i}}^2\indicator\braces*{\mathcal{E}_i^c}} 
        + \E\braces*{\paren*{\frac{\bx_i^T\bX^\dagger\beeta - h_i\eta_i}{1-h_i}}^2\indicator\braces*{\mathcal{E}_i^c}} \nonumber\\
        & \leq \E\braces*{\paren*{\frac{\bx_i^T\bX^\dagger\beps - h_i\epsilon_i}{1-h_i\wedge 1/2}}^2} 
        + \E\braces*{\paren*{\frac{\bx_i^T\bX^\dagger\beeta - h_i\eta_i}{1-h_i \wedge 1/2}}^2}.
    \end{align*}
    Summing up the first term over all indices, and then taking an inner expectation with respect to $\beps$, we get
    \begin{align}  \label{eq:sq_error_first_term}
        \frac{1}{n+1}\sum_{i=0}^n \E\braces*{\paren*{\frac{\bx_i^T\bX^\dagger\beps - h_i\epsilon_i}{1-h_i\wedge 1/2}}^2} 
        & = \frac{1}{n+1}\E\braces*{\norm*{\paren{1-\diag\paren*{\bH}\wedge 1/2}^{-1}\paren*{\bH - \diag\paren*{\bH}}\beps}_2^2} \nonumber\\
        & = \frac{\sigma_\epsilon^2}{n+1}\E\braces*{\trace\paren*{\bW\bW^T}}.
    \end{align}
    where
    $$
    \bW = \paren{1-\diag\paren*{\bH}\wedge 1/2}^{-1}\paren*{\bH - \diag\paren*{\bH}}.
    $$
    Since $\bH$ is idempotent, we get
    $$
    \paren*{\bH - \diag\paren*{\bH}}^2 = \bH - \bH\diag\paren*{\bH} - \diag\paren*{\bH}\bH + \diag\paren*{\bH}^2.
    $$
    The $i$-th summand in the trace therefore satisfies
    \begin{align*}
        \paren*{\bW\bW^T}_{ii} & = \frac{h_i - 2h_i^2 + h_i^2}{\paren*{1 - h_i\wedge 1/2}^2} \\
        & \leq \frac{h_i}{1 - h_i\wedge 1/2} \\
        & \leq 2h_i.
    \end{align*}
    Summing these up, we therefore continue \eqref{eq:sq_error_first_term} to get
    \begin{equation} \label{eq:sq_err_first_term_final}
        \frac{\sigma_\epsilon^2}{n+1}\E\braces*{\trace\paren*{\bW\bW^T}} \leq \frac{2\sigma_\epsilon^2}{n+1}\E\braces*{\trace\paren*{\bH}} \leq \frac{2p\sigma_\epsilon^2}{n+1}.
    \end{equation}
    
    Next, for any $\bx$, we slightly abuse notation, and denote $\sigma_\eta^2(\bx) = \E\braces*{\eta^2~|~\bx}$.
    By a similar calculation, we get
    $$
    \frac{1}{n+1}\sum_{i=0}^n \E\braces*{\paren*{\frac{\bx_i^T\bX^\dagger\beeta - h_i\eta}{1-h_i\wedge 1/2}}^2} 
    = \frac{1}{n+1}\E\braces*{\trace\paren*{\bW\bSigma\bW^T}},
    $$
    where $\bSigma$ is a diagonal matrix with entries given by $\bSigma_{ii} = \sigma_\eta^2(\bx_i)$ for each $i$.
    We compute
    \begin{align*}
        \paren*{\bW\bSigma\bW^T}_{ii} 
        = \frac{\sum_{j \neq i} \bH_{ij}^2\sigma_\eta(\bx_j)^2}{\paren*{1-h_i \wedge 1/2}^2} 
        \leq 4\sum_{j=0}^n \bH_{ij}^2\sigma_\eta(\bx_j)^2 
        = 4\paren*{\bH\bSigma\bH^T}_{ii}.
    \end{align*}
    This implies that
    \begin{align} \label{eq:sq_err_second_term_final}
        \frac{1}{n+1}\E\braces*{\trace\paren*{\bW\bSigma\bW^T}} 
        & \leq \frac{4}{n+1}\E\braces*{\trace\paren*{\bH\bSigma\bH^T}} \nonumber\\
        & \leq \frac{4}{n+1}\E\braces*{\trace\paren*{\bSigma}} \nonumber\\
        & = 4\sigma_\eta^2.
    \end{align}
    Applying \eqref{eq:sq_err_first_term_final} and \eqref{eq:sq_err_second_term_final} into \eqref{eq:risk_decomposition} completes the proof.
\end{proof}

\begin{remark}
    Note that while we have bounded the probability of $\mathcal{E}$ by $\frac{2p}{n}$, it could be much smaller in value.
    If $\Psi$ is constructed out of a single tree, then $\mathcal{E}$ holds if and only if the test point lands in a leaf containing no training points.
    \cite{tan2021cautionary} shows that the probability of this event decays exponentially in $n$.
\end{remark}

\end{document}